\documentclass{article}
\usepackage[T1]{fontenc}
\usepackage{preprint,times}
\usepackage{amsmath,amssymb,amsfonts,amsthm}
\usepackage{booktabs}
\usepackage{graphicx}
\usepackage{flafter}
\usepackage{wrapfig}
\usepackage{needspace}
\usepackage{xcolor}
\usepackage{algorithm}
\usepackage{algpseudocode}
\usepackage[hyphens]{url}
\usepackage{hyperref}
\hypersetup{hidelinks,pdfauthor={Alexey Kurennoy, Ramil Yarullin, Fergal Reid}}

\newtheorem{theorem}{Theorem}
\newtheorem{proposition}{Proposition}
\newtheorem{corollary}{Corollary}
\newtheorem{lemma}{Lemma}
\newtheorem{example}{Example}
\DeclareMathOperator*{\argmax}{arg\,max}
\DeclareMathOperator*{\var}{var}

\newcommand{\E}{\mathbb{E}}
\newcommand{\KL}{\mathrm{KL}}

\newcommand{\reward}{\mathcal{R}}
\newcommand{\piref}{\pi_{\mathrm{ref}}}
\newcommand{\ptil}{\widetilde{p}}
\newcommand{\support}{\mathcal{S}}
\newcommand{\lrho}{\ell_{\rho,w}}
\newcommand{\vocab}{\mathcal{V}}
\newcommand{\maxlr}{\ensuremath{\text{max-}\lrho\text{-of-}9}}

\title{Metropolis--Hastings Dominates Importance\\Resampling for Policy Composition}

\preprintfinalcopy
\author{Alexey Kurennoy\thanks{Equal contribution.} \quad Ramil Yarullin\footnotemark[1] \quad Fergal Reid \\
fin.ai \\
\texttt{\{alexey.kurennoy,ramil.yarullin,fergal.reid\}@fin.ai}}

\begin{document}
\maketitle
\suppressfloats[t]

\begin{abstract}
    Post-training a large language model (LLM) often requires exploring trade-offs between multiple rewards, but retraining for each trade-off is expensive. Decoding-time policy composition allows these trade-offs to be adjusted by combining reward-specific policies at inference time. This composition targets a weighted product of the policies' probabilities over complete responses, but standard implementations combine their next-token probabilities, generally introducing sampling bias. We analyze a known iterative correction based on independence Metropolis--Hastings (MH). Our main result shows that, for every rollout budget, MH produces an output distribution at least as close to the target as sampling-importance-resampling (SIR) with the same budget, as measured by every convex $f$-divergence. We also derive a lower bound on MH's improvement over the uncorrected decoder in a consensus objective measuring agreement with the supplied policies. We further characterize the correction's sampling error in two asymptotic regimes: when the reward-specific policies approach agreement, and when fluctuations in the log ratio between target and uncorrected-decoder probabilities grow, as can happen for long responses.
    We complement our analysis with experiments in enumerable and LLM-scale settings.
\end{abstract}

\section{Introduction}\label{sec:introduction}

\begin{figure}[t]
\centering
\includegraphics[width=0.90\textwidth]{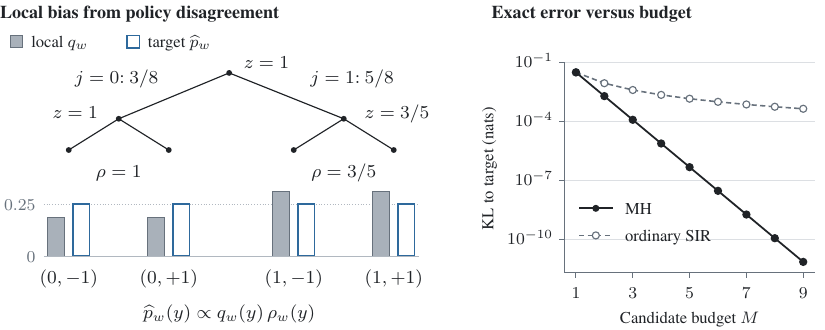}
\caption{\textbf{Local bias and finite-budget correction in Example~\ref{ex:local}.} Left: the two policies agree after $j=0$ and disagree after $j=1$. Reweighting the local law $q_w$ by the recorded normalizers $\rho_w$ recovers the uniform target. Right: exact KL to the target for MH and ordinary SIR on the same four-response toy example.}
\label{fig:overview}
\end{figure}

Large language model (LLM) alignment involves balancing objectives such as
helpfulness, safety, factual accuracy, and brevity
\citep{bai2022hh,dai2024saferlhf,wu2023finegrained,wang2024armorm}. Doing so requires exploring different trade-offs, but retraining a policy for every trade-off is expensive. Decoding-time
composition offers an alternative: it takes policies optimized individually for each objective and combines them at inference time
\citep{shi2024mod,shen2026mahalo}. By varying the weights of the reward-specific policies in the composition, one can explore different trade-offs without retraining.

The composition aims to sample from a weighted geometric average of complete-response probabilities:
\begin{equation}\label{eq:intended}
    \widehat p_w(y\mid x)\;\propto\;
    \prod_{i:w_i>0}\pi_i(y\mid x)^{w_i},
    \qquad w_i\ge0,\quad \sum_{i=1}^m w_i=1,
\end{equation}
where $\pi_i(y\mid x)$ is the probability that policy $i$ assigns to response
$y$ given prompt $x$.
However, sampling from the target~\eqref{eq:intended} token by token generally requires summing over all
possible continuations. A practical decoder instead multiplies the
next-token probabilities and normalizes at each step. We call this process
\emph{local product decoding}. The two constructions generally differ because the policies can agree on next-token probabilities but disagree on later continuations (see Example~\ref{ex:local}). The sequence target accounts for the total product mass of these continuations, whereas the local rule ignores it. Consequently, even exact reward-specialized policies
can produce a biased decoder for their combined objective.

Let $q_w$ denote the local decoder and $\rho_w$ the product of its stepwise
normalization constants. Then $\widehat p_w\propto q_w\rho_w$.
Each rollout therefore provides its importance weight: the target-to-proposal
probability ratio, up to a response-independent constant.
The independence Metropolis--Hastings (MH) construction of
\citet{gareev2024local} uses
these weights to accept or reject independently generated responses.
It requires additional rollouts, but no separate reward model or
post-generation model evaluation to score a candidate. Proposals can be
batched, with selection performed afterward.

An alternative to MH is sampling-importance-resampling (SIR),
which selects a candidate with probability proportional to its
importance weight. Both MH and SIR use the same recorded
information and the same budget of $M$ rollouts, including initialization.

In this paper, we present a new theoretical analysis of the MH correction. Our central result states that \emph{at every budget $M$, MH's output distribution is at least as close to the sequence target as ordinary SIR's under every convex $f$-divergence.} In more detail, our contributions are as follows.

\begin{enumerate}
    \setlength{\itemsep}{3pt}
    \setlength{\parskip}{0pt}
    \setlength{\parsep}{0pt}
    \item We prove that \textbf{MH is no farther from the target than ordinary SIR at any budget.}
    Moreover, for KL in both directions and total variation, MH is strictly closer
    for $M\ge2$ whenever the local decoder differs from the target
    (Theorem~\ref{thm:dominance}).

    \item We derive a \textbf{lower bound on MH's improvement over the uncorrected decoder} in a consensus objective measuring agreement with the supplied policies. The bound can be estimated directly from quantities recorded during generation, and we give conditions under which it extends to scalarized reward optimization (Proposition~\ref{prop:monotone}).

    \item We characterize the \textbf{behavior of the correction's sampling error in two asymptotic regimes}.
    The first regime concerns reward-specific policies approaching agreement. In this setting, MH's sampling error decays at a higher asymptotic order than SIR's (Theorem~\ref{thm:nearby}). The second regime captures limitations that can arise for long responses. In this regime, both MH and SIR attain the asymptotically minimal total variation error among methods selecting from independently generated rollouts at the same budget (Theorem~\ref{thm:critical}).
\end{enumerate}

We complement the theory with experiments in enumerable and LLM-scale settings (Section~\ref{sec:experiments}).\footnote{The associated code is available at \url{https://github.com/akurennoy/mh-policy-composition.git}.}

\section{Correcting local product decoding}\label{sec:algorithm}

Consider $m$ autoregressive policies $\pi_1,\ldots,\pi_m$ and weights
$w_i\ge0$ that sum to $1$. The active policies, those with nonzero weights, are indexed by $I_w=\{i:w_i>0\}$. We assume that the policies share a tokenizer and a finite vocabulary $\vocab$.

For a prompt $x$, a response ends at the first end-of-sequence (EOS) token
or at the length cap $T$. We write $y=(y_1,\ldots,y_T)$, padding shorter responses deterministically with EOS tokens. At prefix $h_t=(x,y_{<t})$, policy $i$ assigns next-token
probability $\pi_{i,t}(v\mid h_t)$, so
$\pi_i(y\mid x)=\prod_t\pi_{i,t}(y_t\mid h_t)$.
We assume that all active policies share the same support tree: they assign positive
probability to every allowed transition and probability $0$ elsewhere. We denote the finite set of complete responses on this tree by $\support_x$.

Let $\ptil_w(y\mid x)$ denote the unnormalized weighted geometric average of the policies, $\ptil_w(y\mid x)=\prod_{i\in I_w}\pi_i(y\mid x)^{w_i}$, and let $Z_w(x)=\sum_{y\in\support_x}\ptil_w(y\mid x)$ be its normalizer. With this notation, the target policy~\eqref{eq:intended} can be written as $\widehat p_w=\ptil_w/Z_w$.
Our goal is to sample from $\widehat p_w$ without computing $Z_w$, which
sums over exponentially many responses. Local product decoding avoids
this summation by normalizing at each token:
\begin{equation}\label{def:q}
    g_{w,t}(v\mid h_t)=\prod_{i\in I_w}\pi_{i,t}(v\mid h_t)^{w_i},
    \quad z_{w,t}(h_t)=\sum_{v\in\vocab}g_{w,t}(v\mid h_t),
    \quad q_{w,t}=g_{w,t}/z_{w,t}.
\end{equation}
Repeating this rule gives the \emph{local product distribution}, $q_w(y\mid x)=\prod_tq_{w,t}(y_t\mid h_t)$. It generally differs from $\widehat p_w$, as the following example illustrates.

\begin{example}\label{ex:local}
    \textup{
    In Figure~\ref{fig:overview}, we compose two policies with equal weights ($w_1=w_2=1/2$).
    Both policies assign probability $3/8$ to the first token $j=0$ and $5/8$ to $j=1$.
    After $j=0$, both are uniform over the second token $s\in\{-1,+1\}$. After $j=1$, each assigns probability $0.9$
    to a different value of $s$. The continuation product mass is
    therefore $1$ after $j=0$ and $2\sqrt{0.9\times0.1}=0.6$ after $j=1$.
    Since $(3/8)\times1=(5/8)\times0.6$, the sequence target is uniform over
    the four responses. The local rule ignores these continuation masses
    when choosing the first token and therefore assigns too much probability, $5/8$ instead of $1/2$, to the right-hand ($j=1$) branch.
    }
\end{example}

\begin{lemma}[Local-normalizer identity]\label{lem:tilt}
On the shared support $\support_x$, $q_w$ and $\widehat p_w$ are positive, with
\begin{equation}\label{eq:tilt}
    \widehat p_w(y\mid x)=\frac{q_w(y\mid x)\rho_w(y\mid x)}{Z_w(x)},
    \qquad \rho_w(y\mid x)=\prod_{t=1}^Tz_{w,t}(h_t),
    \qquad Z_w(x)=\E_{q_w}[\rho_w].
\end{equation}
Moreover, $0<z_{w,t}\le1$ for $t=1,\ldots,T$, $0<\rho_w\le1$, and $0<Z_w\le1$.
\end{lemma}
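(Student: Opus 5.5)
The plan is to verify the identity pointwise by telescoping the definitions in \eqref{def:q}, and then to obtain the bounds from a weighted AM--GM (Hölder) inequality applied token by token.

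\textbf{Identity.} Fix $y\in\support_x$ and write $h_t=(x,y_{<t})$. By definition $q_{w,t}(y_t\mid h_t)\,z_{w,t}(h_t)=g_{w,t}(y_t\mid h_t)=\prod_{i\in I_w}\pi_{i,t}(y_t\mid h_t)^{w_i}$. Taking the product over $t=1,\ldots,T$ and exchanging the two finite products gives
\[
q_w(y\mid x)\,\rho_w(y\mid x)=\prod_{i\in I_w}\Bigl(\prod_t\pi_{i,t}(y_t\mid h_t)\Bigr)^{w_i}=\ptil_w(y\mid x).
\]
Dividing by $Z_w$ yields the first equation in \eqref{eq:tilt}, provided $z_{w,t}>0$ (shown below). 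Summing $q_w\rho_w=\ptil_w$ over $\support_x$ gives $Z_w=\sum_y q_w\rho_w=\E_{q_w}[\rho_w]$. One point needs care: $q_w$ must be a distribution supported on $\support_x$. This holds because $g_{w,t}(v\mid h_t)>0$ exactly on the allowed transitions, by the shared-support-tree assumption.

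\textbf{Positivity.} Every prefix $h_t$ reached along a $y\in\support_x$ has at least one allowed next token $v$; after EOS this is the padding token, which is assigned probability $1$. All active policies give that $v$ positive probability, so $g_{w,t}(v\mid h_t)>0$ and hence $z_{w,t}(h_t)>0$. It follows that $q_w(y)$ is a product of positive factors, and that $\widehat p_w(y)>0$ since $\ptil_w(y)>0$.

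\textbf{Upper bounds.} The main step is $z_{w,t}\le1$. Weighted AM--GM with $\sum_{i\in I_w}w_i=1$ gives $\prod_i a_i^{w_i}\le\sum_i w_i a_i$ for $a_i\ge0$. Applying this for each $v$ and summing over $v$,
\[
z_{w,t}\le\sum_{i\in I_w}w_i\sum_{v}\pi_{i,t}(v\mid h_t)=1.
\]
Then $\rho_w$, as a product of factors in $(0,1]$, lies in $(0,1]$. Likewise $Z_w=\E_{q_w}[\rho_w]\in(0,1]$, since it is the expectation of such a variable under a distribution with nonempty support.

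This is essentially routine. The only subtle point is confirming the support bookkeeping, namely that positivity holds along every prefix of $\support_x$ including the EOS-padded tail, so that no division by zero occurs.
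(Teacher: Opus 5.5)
Your proof is correct and follows the paper's argument. You telescope $q_{w,t}z_{w,t}=g_{w,t}$ along the response to get $q_w\rho_w=\ptil_w$, sum over $\support_x$ to obtain $Z_w=\E_{q_w}\rho_w$, and use the shared support tree, including the deterministic EOS padding, for positivity. The only difference is cosmetic: the paper bounds $z_{w,t}\le1$ with the generalized H\"older inequality $\sum_v\prod_i\pi_{i,t}^{w_i}\le\prod_i(\sum_v\pi_{i,t})^{w_i}$, whereas your pointwise weighted AM--GM gives the same bound here because each component sums to $1$.
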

\begin{proof}
See Appendix~\ref{app:proof-tilt}.
\end{proof}

The leftmost identity in~\eqref{eq:tilt} is the local-to-global importance correction used
in sequential Monte Carlo (SMC), specialized to policy products \citep{lew2023smc,chan2026ensembling}.
The normalizers $z_{w,t}$ measure how much the policies disagree at the next token
\citep{heskes1997opinion}:
\[
    -\log z_{w,t}(h)=\sum_iw_i\KL\!\left(q_{w,t}(\cdot\mid h)
    \,\middle\|\,\pi_{i,t}(\cdot\mid h)\right).
\]
Thus $-\log\rho_w$ accumulates disagreement along a response, and
$q_w/\widehat p_w=Z_w/\rho_w$ increases with that accumulated disagreement. Consequently, $q_w$ and $\widehat p_w$ differ whenever $\rho_w$ varies across responses, which is generally the case.

We consider a correction to the local product distribution $q_w$ based on independence Metropolis--Hastings (MH) \citep{gareev2024local}. From the current response $y$, the algorithm proposes a fresh $y'\sim q_w$ and accepts it with probability
$\min\{1,\rho_w(y'\mid x)/\rho_w(y\mid x)\}$. The unknown $Z_w$ cancels, so this decision rule uses only the recorded local normalizers. MH always accepts a rollout with a higher $\rho_w$,
but it also accepts some rollouts with a lower $\rho_w$. In Example~\ref{ex:local}, MH always accepts
a move from the disagreement branch to the agreement branch and accepts
the reverse with probability $0.6$, with the first update moving the disagreement
branch's mass from $5/8$ to $17/32$, closer to the target's $1/2$.
Algorithm~\ref{alg:mh} implements the correction in log space using
$\lrho=\log\rho_w=\sum_t\log z_{w,t}$. The correction evaluates every active policy at each token of $M$
responses, then selects one in $O(M)$ scalar operations.

\begin{algorithm}[t]
\caption{Independence-MH correction of local product decoding}\label{alg:mh}
\begin{algorithmic}[1]
\Require Prompt $x$, weights $w$, total rollout budget $M\ge1$
\State Generate $M$ independent rollouts $\widehat y_1,\ldots,\widehat y_M\sim q_w(\cdot\mid x)$, recording $\lrho$ for each
\State $y\gets\widehat y_1$
\For{$j=2,\ldots,M$}
    \State Draw $U\sim\mathrm{Uniform}(0,1)$ independently
    \If{$\log U\le\min\{0,\lrho(\widehat y_j\mid x)-\lrho(y\mid x)\}$}
        \State $y\gets\widehat y_j$
    \EndIf
\EndFor
\State \Return $y$
\end{algorithmic}
\end{algorithm}

\paragraph{Optimization interpretation.}
For a response distribution $\mu$ on $\support_x$, the
\emph{consensus objective} measures agreement through the negative
weighted sum of KL divergences to the policies:
\begin{equation}\label{eq:consensus}
    \widetilde J_x(\mu):=-\sum_{i\in I_w}w_i\KL(\mu\|\pi_i)
    =\E_\mu[\log\ptil_w-\log\mu]
    =\log Z_w-\KL(\mu\|\widehat p_w).
\end{equation}
The product $\widehat p_w$, also known as the logarithmic opinion pool
\citep{heskes1997opinion}, uniquely maximizes this objective, regardless of
how the policies were trained. For external rewards $\reward_i(x,y)\in\mathbb R$ and a positive reference policy
$\piref$ on $\support_x$, consider the KL-regularized objectives
\begin{equation}\label{eq:problem}
    J_{i,x}^{\rm ext}(\mu)=\E_\mu[\reward_i]
    -\beta\KL(\mu\|\piref),\qquad \beta>0.
\end{equation}
Let $\pi_i^*$ maximize $J_{i,x}^{\rm ext}$ over response distributions,
and let $\pi_w^*$ maximize the corresponding weighted objective
$J_{w,x}^{\rm ext}=\sum_iw_iJ_{i,x}^{\rm ext}$, which
is~\eqref{eq:problem} with reward $\reward_w=\sum_iw_i\reward_i$. The individual optima have the form
$\pi_i^*\propto\piref\exp(\reward_i/\beta)$
\citep{ziegler2019finetuning,rafailov2023dpo}. If the supplied policies
satisfy $\pi_i=\pi_i^*$ for this common reference and $\beta$, then
\begin{equation}\label{eq:product}
    \widehat p_w=\pi_w^*
    \propto\prod_i\bigl(\piref e^{\reward_i/\beta}\bigr)^{w_i}
    =\piref e^{\reward_w/\beta}.
\end{equation}
Sampling from the product thus recovers the scalarized optimum without retraining. For approximately optimal specialists, the product can differ from $\pi_w^*$ (Appendix~\ref{app:approx}).

In either setting, the gap to the optimal objective is
\begin{equation}\label{eq:objective-gap}
    J(p)-J(\mu)=\lambda\KL(\mu\|p),\qquad p=\widehat p_w,
\end{equation}
with $J=\widetilde J_x$ and $\lambda=1$ for the consensus objective, and with $J=J_{w,x}^{\rm ext}$ and $\lambda=\beta$ for multi-reward alignment. In both cases, reducing KL to the product
improves the objective.

\section{Theoretical analysis of the MH correction}\label{sec:theory}

We fix the prompt and weights and abbreviate the target, proposal, and
recorded weight as $p=\widehat p_w$, $q=q_w$, and $\rho=\rho_w$, so
$p=q\rho/Z$ with $Z=\E_q\rho$. More generally, an \emph{instance} $(q,\rho)$ consists of a finite response space, a strictly positive proposal law $q$ on that space, and a strictly positive weight function $\rho$; these determine $Z=\E_q\rho$ and the target $p=q\rho/Z$. Recall that, given $M$ iid candidates $Y_1,\ldots,Y_M$ from $q$, MH starts at the first and processes the remaining $M-1$ according to Algorithm~\ref{alg:mh}. We denote its output distribution by $H_M$. Ordinary SIR uses the same pool of $M$ rollouts but instead selects candidate $j$ with probability $\rho(Y_j)/\sum_k\rho(Y_k)$. We denote SIR's output distribution by $S_M$.

\subsection{Dominance of MH over SIR at every candidate budget}

We consider the $f$-divergences
\begin{equation}\label{eq:f-divergence}
    D_f(\mu\|p)=\sum_y p(y)f(\mu(y)/p(y)),
\end{equation} 
where
$f:(0,\infty)\to\mathbb R$ is finite and convex, with $f(1)=0$.
These include KL in both directions, total variation, and the chi-square divergence.

\begin{theorem}[Finite-budget dominance over ordinary SIR]\label{thm:dominance}
For every integer $M\ge1$ and every $f$-divergence $D_f$ of the form~\eqref{eq:f-divergence},
\begin{equation}\label{eq:dominance}
    D_f(H_M\|p)\le D_f(S_M\|p).
\end{equation}
The inequality~\eqref{eq:dominance} is strict for $M\ge2$ and $p\neq q$ if $D_f$ is total variation or if $f$ is strictly convex.
\end{theorem}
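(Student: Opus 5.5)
The plan is to reduce \eqref{eq:dominance} to a convex-order comparison between the likelihood ratios $r_H=H_M/p$ and $r_S=S_M/p$ under $p$. Both ratios have $p$-mean $1$. By the standard characterization of the convex order, $D_f(H_M\|p)\le D_f(S_M\|p)$ for every convex $f$ if and only if
\[
\E_p[(r_H-t)_+]\le\E_p[(r_S-t)_+]\qquad\text{for all } t,
\]
i.e.\ $r_H\le_{\rm cx}r_S$ under $p$. It is also enough to work on the level of the scalar $\rho$, since both ratios will turn out to depend on $y$ only through $\rho(y)$.

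The first step is to write both output laws explicitly as $q(y)$ times a function of $\rho(y)$.

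\emph{SIR.} By exchangeability,
\[
S_M(y)=M\,q(y)\,\E\!\left[\frac{\rho(y)}{\rho(y)+\sum_{k=2}^M\rho(Y_k)}\right].
\]
Consequently
\[
r_S(y)=Z M\,\E\!\left[\bigl(\rho(y)+\textstyle\sum_{k\ge2}\rho(Y_k)\bigr)^{-1}\right],
\]
which is nonincreasing in $\rho(y)$.

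\emph{MH.} I would use the classical closed form for the $n$-step independence sampler started from $q$ (Liu 1996; Smith--Tierney). Equivalently, I would write $H_M=qP^{M-1}$ with the kernel
\[
P(y,y')=q(y')\min\{1,\rho(y')/\rho(y)\}
\]
for $y'\ne y$. Ordering responses by $\rho$, I would show by induction on $M$ that $H_M/p$ is a nonincreasing function of $\rho$. The inductive step uses that $P$ is $p$-reversible and moves mass only ``upward in $\rho$ freely, downward with damping''.

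With both ratios nonincreasing in the same variable $\rho$, they are comonotone. For comonotone functions with equal mean, the Hardy--Littlewood--P\'olya/Karamata argument reduces the convex order to a one-sided stochastic comparison:
\[
H_M(\rho(Y)\le s)\le S_M(\rho(Y)\le s)\qquad\text{for all } s.
\]
Equivalently, $\rho$ under $H_M$ stochastically dominates $\rho$ under $S_M$.

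Proving this dominance is the main obstacle. I would first try to condition on the unordered pool $\{Y_1,\dots,Y_M\}$. SIR picks from the pool proportionally to $\rho$. MH, averaged over the uniformly random order of the pool (which is legitimate by exchangeability), picks some law on the pool. The aim is to show that this averaged MH selection puts at least as much mass on every upper set $\{\rho\ge s\}$ of the pool as $\rho$-proportional selection does. Because the event $\{\rho\le s\}$ is linear in the selection law, such a conditional comparison integrates directly to the unconditional one. I expect this to be provable by induction on the pool size, inserting the last-processed candidate. The MH output is the last-accepted candidate, and a high-$\rho$ candidate is never rejected in favor of a lower one. If the conditional statement fails, the fallback is to compare the explicit formulas for $H_M$ and $S_M$ via the recursion $H_M=H_{M-1}P$ together with a matching recursion for $S_M$ in terms of $S_{M-1}$.

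For strictness, note that $p\ne q$ means $\rho$ is nonconstant. For $M\ge2$ I would exhibit a threshold $s$ where the stochastic dominance is strict, e.g.\ $s$ just below $\max\rho$. Then:
\begin{itemize}
\item if $f$ is strictly convex, the convex-order inequality is strict unless $r_H$ and $r_S$ have the same law under $p$, which strict dominance rules out;
\item for total variation, $f(u)=|u-1|/2$, it suffices that the threshold $t=1$ lies where the partial-sum comparison is strict, which holds because $r_H$ and $r_S$ cross $1$ on the same ordered set of $\rho$ values.
\end{itemize}
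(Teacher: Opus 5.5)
Your architecture is sound and parallels the paper's. You reduce to weight levels, show $H_M/p$ is nonincreasing in $\rho$, show that MH's selected weight stochastically dominates SIR's, and convert this by convexity. Your substitute tools also work:
\begin{itemize}
\item Hardy--Littlewood--P\'olya for the two comonotone ratios replaces the paper's supporting-line inequality at $f'_+(H_M/p)$, which needs only the MH ratio to be monotone.
\item Monotonicity of $H_M/p$ follows from reversibility, $(\nu K)/p=K(\nu/p)$ with the MH kernel $K$ (your $P$) acting on functions, together with stochastic monotonicity of $K$ in $\rho$. The paper instead uses an explicit priority density.
\end{itemize}

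The gap is the step you call the main obstacle. It carries essentially all of the theorem's content, and you only conjecture it. The remark that a low state always accepts a higher proposal does not by itself yield the comparison. The paper proves it through the priority representation (Lemma~\ref{lem:priority}: random-order MH on a fixed pool selects $\argmax_j r_j/U_j$). It couples this with SIR's exponential race, which selects the smallest $-\log(1-U_j)/r_j$ using the same uniforms.

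Your pool-size induction can be completed, but only if the hypothesis is full stochastic dominance on every smaller pool. Write $P_A$ for the random-order MH selection law on a tagged pool $A$ of $m$ weights. Take $U=\{i:r_i\ge s\}$, $u=|U|$, $R=\sum_k r_k$ and $R_U=\sum_{i\in U}r_i$. Conditioning on the last-processed tag $j$ gives
\[
 mP_A(U)=u+\sum_{j\notin U}\sum_{i\in U}P_{A\setminus\{j\}}(i)\Bigl(1-\frac{r_j}{r_i}\Bigr)
 \ge u+\sum_{j\notin U}\frac{R_U-ur_j}{R-r_j}.
\]
The inequality applies the hypothesis on $A\setminus\{j\}$ to the nondecreasing function $(1-r_j/r)\mathbf 1\{r\ge s\}$. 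The right-hand side exceeds SIR's value $mR_U/R$ by $\sum_{j\notin U}r_j(R_U-ur_j)/[R(R-r_j)]$. This is strictly positive whenever $U$ and $A\setminus U$ are both nonempty.

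Your total-variation strictness argument rests on a false claim: $H_M/p$ and $S_M/p$ need not cross $1$ at the same $\rho$-cut. Take $q$ uniform on three responses with weights $0.1$, $0.735$, $1$ and $M=2$. The middle response then has $S_2/p\approx1.0008>1>H_2/p\approx0.9983$.

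Strictness at a single threshold just below $\max\rho$ is also not what TV needs. Instead, let $A=\{y:H_M(y)>p(y)\}$, which is a lower set in $\rho$. Then $\|H_M-p\|_{\rm TV}=H_M(A)-p(A)$ and $\|S_M-p\|_{\rm TV}\ge S_M(A)-p(A)$. So you need $S_M(A)>H_M(A)$ at this particular cut, which calls for strict dominance at every nontrivial cut, not just one. The induction above supplies this, because pools straddling any given cut have positive probability. If $A$ is empty, MH is exact, and $S_M\ne H_M$ already gives strictness.

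Your strictly convex case is fine as argued, since comonotone ratios with the same law under $p$ must coincide.
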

\begin{proof}[Proof idea]
    A priority-sampling representation of MH lets us couple the MH and SIR outputs so that the importance weight selected by MH is almost surely at least as large as that selected by SIR. Relative to the target, MH's probability ratio is nonincreasing in the weight. Convexity then converts these two orderings into the $f$-divergence comparison. The full proof is in Appendix~\ref{app:proof-dominance}.
\end{proof}

Because Theorem~\ref{thm:dominance} covers $\KL(\cdot\|p)$, the objective-gap identity~\eqref{eq:objective-gap} also gives $J(H_M)\ge J(S_M)$ for either objective under its stated assumptions.

Since the proof does not rely on autoregressive structure, Theorem~\ref{thm:dominance} extends directly to reward-tilted targets, where SIR corresponds to soft best-of-$M$ \citep{verdun2025soft}.

\subsection{Objective gain guarantee}\label{subsec:objective-gain-guarantee}

We now derive a lower bound on MH's improvement over the local decoder $q$. For iid $Y_1,\ldots,Y_M\sim q$, define the recorded-weight gap
\begin{equation}\label{eq:pool-gap}
    G_M=\log\!\left(\frac1M\sum_{j=1}^M\rho(Y_j)\right)
    -\frac1M\sum_{j=1}^M\log\rho(Y_j),
\end{equation}
the log ratio of the arithmetic and geometric means of the pool's weights, so that $G_M\ge0$ and $G_1=0$. Note that this bound can be estimated from recorded rollout weights, without evaluating $Z$ or the sampler's output distribution.

\begin{proposition}[Objective improvement and diminishing returns]\label{prop:monotone}
    In the setting of Theorem~\ref{thm:dominance}, take
    $(J,\lambda)=(\widetilde J_x,1)$, or
    $(J,\lambda)=(J_{w,x}^{\rm ext},\beta)$ if every active policy
    exactly optimizes~\eqref{eq:problem} with a common positive reference
    and coefficient $\beta>0$.
    Then the gains $J(H_{M+1})-J(H_M)$ are nonnegative and nonincreasing
    for $M\ge1$, and
    \begin{equation}\label{eq:certificate}
        J(H_M)-J(q)\ge\lambda\E_{q^M}[G_M]\ge0
        \qquad\text{for all }M\ge1.
    \end{equation}
  
\end{proposition}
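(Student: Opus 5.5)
The plan is to reduce everything to statements about $\KL(H_M\|p)$ using the objective-gap identity~\eqref{eq:objective-gap}. Under either choice of $(J,\lambda)$ we have $J(p)-J(\mu)=\lambda\KL(\mu\|p)$, and hence
\[
J(H_M)-J(q)=\lambda\bigl[\KL(q\|p)-\KL(H_M\|p)\bigr].
\]
The proposition therefore splits into three claims about $D_M:=\KL(H_M\|p)$: the sequence $D_M$ is nonincreasing, it has nonincreasing decrements, and it satisfies
\[
D_M\le \KL(q\|p)-\E_{q^M}[G_M].
\]
Note that $H_1=q$.

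\textbf{Certificate.} For this bound I would first invoke Theorem~\ref{thm:dominance}, which gives $D_M\le\KL(S_M\|p)$, and then bound SIR's divergence by a joint (extended-space) argument. On pools with a selected index, define
\[
Q(y_{1:M},j)=\prod_k q(y_k)\,\frac{\rho(y_j)}{\sum_k\rho(y_k)},
\qquad
P(y_{1:M},j)=\frac1M\prod_k q(y_k)\,\frac{\rho(y_j)}{Z}.
\]
Under $P$, the selected coordinate $y_j$ has marginal $p$ exactly. Under $Q$, the selected coordinate has marginal $S_M$ and the pool has marginal $q^M$. The ratio simplifies to
\[
\frac{Q}{P}=\frac{Z}{\frac1M\sum_k\rho(y_k)}.
\]
Data processing applied to the selected coordinate then gives
\[
\KL(S_M\|p)\le\KL(Q\|P)=\E_{q^M}\Bigl[\log Z-\log\tfrac1M\textstyle\sum_k\rho(Y_k)\Bigr].
\]
Meanwhile $\KL(q\|p)=\log Z-\E_q\log\rho$, and $\E_q\log\rho$ equals $\E_{q^M}\bigl[\tfrac1M\sum_j\log\rho(Y_j)\bigr]$. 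Subtracting, the difference $\KL(q\|p)-\KL(Q\|P)$ is exactly $\E_{q^M}[G_M]$, which proves~\eqref{eq:certificate}. Nonnegativity follows from AM--GM.

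\textbf{Monotonicity.} Algorithm~\ref{alg:mh} shows that $H_{M+1}=H_MK$, where $K$ is the independence MH kernel, which is reversible with respect to $p$. Data processing for the Markov kernel $K$ then gives $D_{M+1}\le D_M$, so the gains are nonnegative.

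\textbf{Diminishing returns.} Here one needs $D_M-D_{M+1}$ to be nonincreasing, i.e.\ convexity of $M\mapsto D_M$. This is the step I expect to be the main obstacle, since convexity in the iteration count is not a generic Markov chain property for KL. My first attempt would use the explicit structure of the independence sampler (the Smith--Tierney/Liu representation, or equivalently the priority-sampling representation from the proof of Theorem~\ref{thm:dominance}). Order states by $\rho$ and let $\lambda(y)=1-\sum_x q(x)\min\{1,\rho(x)/\rho(y)\}\in[0,1)$ be the rejection probability. Then the ratio $r_M=H_M/p$ can be written as a mixture of powers $\lambda(\cdot)^{M-1}$ over thresholds. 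Two consequences follow:
\begin{itemize}
\item The quantity $r_M(y)-1$ keeps a fixed sign in $M$: it is positive on low-$\rho$ states and negative on high-$\rho$ states.
\item Its magnitude is a completely monotone sequence in $M$.
\end{itemize}
I would then write $D_M=\sum_y p(y)\,r_M(y)\log r_M(y)$ and compute the second difference $D_M-2D_{M+1}+D_{M+2}$. Convexity of $r\log r$, together with convexity and monotonicity of each $r_M(y)$ in $M$ in the direction toward $1$, should make this second difference nonnegative, once the linear terms are shown to cancel via $\sum_y p\,r_M=1$.

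If that direct computation stalls, I would fall back on a coupling. Realize the $(M+1)$-st and $(M+2)$-st updates through the same priority-sampling construction. The decrement $D_M-D_{M+1}$ can then be expressed as an expectation over the current state, and the monotone-likelihood-ratio ordering of $H_M$ in $\rho$ (established in the dominance proof) can be used to show that this expectation shrinks with $M$.
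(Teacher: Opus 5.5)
Your reduction to $D_M=\KL(H_M\|p)$, the monotonicity step via $H_{M+1}=H_MK$ and data processing, and the certificate (Theorem~\ref{thm:dominance} followed by the augmented-space bound on $\KL(S_M\|p)$) are correct and match the paper. The gap is in the diminishing-returns step.

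Your primary route needs two properties at every response: $H_M(y)/p(y)-1$ keeps a fixed sign, and its magnitude is completely monotone in $M$. Together these would make each summand of $D_M$ convex in $M$. Both properties fail at intermediate weights. The exact formula is $H_M/p=MZ\int_0^{1/\rho}B(u)^{M-1}\,du$, and the factor $M$ means it is not a mixture of powers. Concretely, take $q$ uniform on three responses with $\rho=(0.1,0.7,1)$, so $p=(1,7,10)/18$. For the middle response,
\[
\frac{H_M(y_2)}{p(y_2)}-1=\tfrac{5}{4}\bigl(\tfrac{2}{5}\bigr)^{M}-\tfrac{9}{4}\bigl(\tfrac{2}{7}\bigr)^{M},
\]
which is a signed combination of geometric sequences. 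Its values are:
\begin{itemize}
\item $-1/7$ at $M=1$, since $H_1(y_2)=1/3$;
\item $4/245$ at $M=2$, since $H_2(y_2)=83/210$;
\item about $0.0275$ at $M=3$ and $0.0170$ at $M=4$.
\end{itemize}
So the sign flips and the magnitude is not even monotone. The summand $p(y_2)\phi(H_M(y_2)/p(y_2))$, with $\phi(u)=u\log u-u+1$, rises from $M=2$ to $M=3$ and then falls, so it is not convex in $M$. Convexity of $D_M$ holds only after summing over responses, so no termwise composition argument can prove it. Your fallback does not supply a mechanism either. The weight ordering of consecutive budgets compares $H_M$ with $H_{M+1}$, whereas convexity requires comparing $H_M$ with the mixture $\tfrac12(H_{M-1}+H_{M+1})$.

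The missing idea is to apply the threshold structure to the output law rather than to the pointwise ratio. Let $\Lambda_M$ be the largest priority key. Then $\Pr(\Lambda_M\le t)=B(1/t)^M$, and conditional on $\Lambda_M=t$ the output has law $p(\cdot\mid\rho\le t)$, which does not depend on $M$ (Lemma~\ref{lem:threshold}). Since $b\mapsto B(1/t)^b$ is convex, Jensen's inequality applies to running MH with budget $M-1$ or $M+1$ with equal probability. This mixture has a stochastically smaller threshold than budget $M$, and hence a stochastically smaller selected weight. All these laws preserve $q(\cdot\mid\rho=r)$, and $H_M/p$ is nonincreasing in the weight. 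The supporting-line argument from the proof of Theorem~\ref{thm:dominance} therefore gives $D_M\le\KL\bigl(\tfrac12(H_{M-1}+H_{M+1})\big\|p\bigr)$. Convexity of KL in its first argument then yields $2D_M\le D_{M-1}+D_{M+1}$.
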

\begin{proof}
See Appendix~\ref{app:proof-certificate}.
\end{proof}

We refer to $\lambda\E_{q^M}[G_M]$ as a \emph{certificate} of improvement over $q$. The same lower bound holds for SIR, so it does not measure the difference between MH and SIR. The certificate bounds improvement, not the remaining divergence (Appendix~\ref{app:certificate-barrier}).

\subsection{Nearby component policies}

The near-agreement regime describes the composition of specialists that remain close to a common policy, as may occur under regularized post-training. Since local decoding is exact when the policies coincide, this regime isolates how effectively additional rollouts correct the bias introduced by their disagreement. It reveals a qualitative distinction between MH and SIR: \textbf{additional MH rollouts improve the error order}, whereas SIR generally improves only the leading coefficient.

Specifically, fix a common positive policy $\pi_0(\cdot\mid h)$ on a fixed support tree and bounded functions $a_i(v,h)$, and consider the perturbed policies
\begin{equation}\label{eq:nearby-policies}
    \pi_{i,\delta}(v\mid h)=
    \frac{\pi_0(v\mid h)e^{\delta a_i(v,h)}}
    {\E_{V\sim\pi_0(\cdot\mid h)}e^{\delta a_i(V,h)}}.
\end{equation}
The parameter $\delta$ controls the separation of the policies $\pi_{i,\delta}$, and $\delta=0$ corresponds to identical policies. We denote the resulting local proposal and product target by
$q_\delta$ and $p_\delta$, respectively, and the corresponding MH and SIR output laws by $H_{M,\delta}$ and $S_{M,\delta}$.

\begin{theorem}[Exact nearby-policy error]\label{thm:nearby}
    For the family of policies~\eqref{eq:nearby-policies}, there exist constants $C_M\ge0$ such that, for every $M\ge 1$, as $\delta\to0$,
    \begin{equation}\label{eq:nearby}
    \begin{aligned}
        \KL(q_\delta\|p_\delta)&=C_1\delta^4+O(|\delta|^5),\\
        \KL(H_{M,\delta}\|p_\delta)&=C_M\delta^{4M}+O(|\delta|^{4M+1}),\\
        \KL(S_{M,\delta}\|p_\delta)&=\frac{C_1}{M^2}\delta^4+O(|\delta|^5).
    \end{aligned}
    \end{equation}
    Moreover, the constants $C_M$ are either all positive or all $0$.
\end{theorem}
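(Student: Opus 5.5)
We work with the normalized weight $r_\delta=\rho_\delta/Z_\delta$, so that $p_\delta=q_\delta r_\delta$ and $\E_{q_\delta}r_\delta=1$. For any output law $\mu$ we write its density ratio to the target as $h=\mu/p_\delta$ and use the second-order expansion
\[
\KL(\mu\|p_\delta)=\tfrac12\E_{p_\delta}[(h-1)^2]+O(\E_{p_\delta}|h-1|^3),
\]
valid uniformly on the finite support $\support_x$. This reduces all three claims to computing the leading order of $h-1$.

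\textbf{Step 1: expansion of $\lrho$ and the local KL.} The first step is to expand $\lrho$ in $\delta$. By the identity $-\log z_{w,t}=\sum_i w_i\KL(q_{w,t}\|\pi_{i,t})$, each local normalizer satisfies $\log z_{w,t}=-\tfrac{\delta^2}{2}\sum_i w_i\var_{\pi_0}(a_i-\bar a)+O(\delta^3)$, where $\bar a=\sum_iw_ia_i$. Hence $\lrho=\delta^2\ell_2(y)+O(\delta^3)$, and therefore
\[
\varepsilon:=r_\delta-1=\delta^2 e+O(\delta^3),\qquad e=\ell_2-\E_{q_0}\ell_2 .
\]
Everything is a smooth (indeed analytic) function of $\delta$ on a finite space, so all remainders are uniform in $y$. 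For $\mu=q_\delta$ we have $h-1=1/r_\delta-1=-\delta^2e+O(\delta^3)$. This gives the first line of~\eqref{eq:nearby} with $C_1=\tfrac12\var_{q_0}(\ell_2)$.

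\textbf{Step 2: SIR.} We write
\[
S_M(y)/p_\delta(y)=\E\!\left[M\big/\bigl(r(y)+\textstyle\sum_{k=2}^M r(Y_k)\bigr)\right],\qquad Y_k\sim q_\delta \text{ iid}.
\]
Expanding $1/(1+\xi)$ with $\xi=\bigl(\varepsilon(y)+\sum_k\varepsilon(Y_k)\bigr)/M=O(\delta^2)$ and using $\E_q\varepsilon=0$ gives $h-1=-\varepsilon(y)/M+O(\delta^4)$. Hence the SIR KL equals $C_1\delta^4/M^2+O(\delta^5)$; the $O(\delta^3)$ corrections to $\varepsilon$ only enter at order $\delta^5$.

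\textbf{Step 3: MH.} Independence MH is $p_\delta$-reversible, so its kernel $P$ is self-adjoint on $L^2(p_\delta)$. It follows that $h_{n+1}=Ph_n$, where $h_n$ is the density ratio after $n$ transitions and $h_1=1/r$; the output of Algorithm~\ref{alg:mh} is $h_M$. Explicitly,
\[
Pf(y)=f(y)+\E_{X\sim q}\bigl[\min\{1,r(X)/r(y)\}\bigl(f(X)-f(y)\bigr)\bigr].
\]
For $f$ with $\E_pf=0$ we can use $\min\{1,r(X)/r(y)\}=1-(\varepsilon(y)-\varepsilon(X))_++O(\delta^4)$ and $\E_qf=-\E_p[\varepsilon f]+O(\delta^4\|f\|)$. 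The $f(y)$ terms then cancel at order zero, leaving
\[
Pf=\delta^2 L_0f+O(\delta^3\|f\|),\qquad L_0\varphi(y)=-\E_{q_0}[e\varphi]+\E_{X\sim q_0}\bigl[(e(y)-e(X))_+\bigl(\varphi(y)-\varphi(X)\bigr)\bigr].
\]
By induction, $h_n-1=\delta^{2n}\varphi_n+O(\delta^{2n+1})$ with $\varphi_1=-e$ and $\varphi_{n+1}=L_0\varphi_n$. This yields the second line of~\eqref{eq:nearby} with $C_M=\tfrac12\E_{q_0}[\varphi_M^2]$, which also recovers $C_1$ at $M=1$.

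A degenerate case must be handled separately: if $e\equiv0$, the expansion above says nothing about the true order. Then $\varepsilon=O(\delta^3)$, both the deviation and the contraction are $O(\delta^3)$, and $h_M-1=O(\delta^{3M})$. This gives $\KL=O(\delta^{6M})\subset O(\delta^{4M+1})$, consistent with $C_M=0$ for all $M$.

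\textbf{Step 4 (main obstacle): the dichotomy for $C_M$.} If $e\equiv0$, all $C_M=0$ by Step 3. If $e$ is nonconstant, we need $\varphi_n\neq0$ for every $n$. The approach we would take is to show by induction that $\varphi_n$ is a strictly monotone function of $e$, alternating in direction or at least nonconstant.

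For the induction, suppose $\varphi$ is a nonincreasing, nonconstant function of $e$ (as $\varphi_1=-e$ is). Then $(e(y)-e(X))_+(\varphi(y)-\varphi(X))\le0$ pointwise. At a maximizer $y^*$ of $e$, the second term of $L_0\varphi$ is strictly negative, while at a minimizer it vanishes. This should give $L_0\varphi$ a definite monotone, nonconstant profile in $e$.

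If the monotonicity bookkeeping fails, we would instead show that $L_0$ is injective on $p$-centered functions of $e$. A candidate argument is to compute $\E_{q_0}[\varphi\,L_0\varphi]$ and show it is a strictly negative quadratic form on nonconstant monotone $\varphi$, mirroring the negative-definiteness of the MH generator on centered functions.
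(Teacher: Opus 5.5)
Your Steps 1--3 are correct and follow the paper's proof. With $e=-(D-\overline D)/2$ and $q_0=P_0$, your $L_0$ coincides with $\tfrac12\mathcal T$ on $q_0$-centered functions; the two differ by $\tfrac12(D-\overline D)\E_{q_0}\varphi$. Hence $\varphi_M=2^{-M}\mathcal T^{M-1}(D-\overline D)$, and your $C_M$ is exactly the paper's.

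The gap is Step 4. This is where the claim that the $C_M$ are all positive or all $0$ has to be proved, and there you offer only a plan. Your induction hypothesis (``$\varphi_n$ is a centered, nonincreasing, nonconstant function of $e$'') requires $L_0$ to preserve monotonicity in $e$. You only compare $L_0\varphi$ at the two extreme levels of $e$. That shows $L_0\varphi$ is nonconstant, not that it is again monotone, so the induction does not close. Preservation does hold, and the direction never alternates, by a summation-by-parts computation across consecutive levels, but you have not supplied it. The fallback, as stated, has the wrong sign. $L_0$ is the leading part of the kernel $P$ on centered functions, not of the generator $P-I$. For two weight levels, $-e$ is an eigenfunction of $L_0$ with eigenvalue $e_{\max}>0$, so $\E_{q_0}[\varphi L_0\varphi]$ is positive there.

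The missing observation makes monotonicity unnecessary. At a maximizer $y^*$ of $e$ the positive part is simply $e_{\max}-e(X)$. Expanding gives, for every $\varphi$,
\[
  L_0\varphi(y^*)=e_{\max}\varphi(y^*)-e_{\max}\E_{q_0}\varphi-\varphi(y^*)\E_{q_0}e,
\]
which equals $e_{\max}\varphi(y^*)$ when $\E_{q_0}\varphi=0$. Every $\varphi_n$ is $q_0$-centered, because $h_n-1$ is $p_\delta$-centered and $p_\delta=q_0+O(\delta)$. Iterating from $\varphi_1=-e$ gives $\varphi_M(y^*)=-e_{\max}^M$. Since $e$ is centered, $e_{\max}>0$ whenever $e\not\equiv0$. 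Because $\{e=e_{\max}\}$ has positive $q_0$-mass,
\[
  C_M\ge\tfrac12\,q_0(\{e=e_{\max}\})\,e_{\max}^{2M}>0
\]
for every $M$. This is the paper's argument: at responses minimizing $D$, the operator $\mathcal T$ multiplies every function by $\overline D-D_{\min}$. So $\mathcal T^{M-1}(D-\overline D)=-(\overline D-D_{\min})^M$ there, which cannot vanish.
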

\begin{proof}
    See Appendix~\ref{app:proof-nearby}.
\end{proof}

The exponent $4M$ cannot be improved uniformly over nearby-policy families by selectors using only $q$ and $M$ sampled candidates with their recorded weights, without access to $Z$ or unobserved weights (Appendix~\ref{app:selector-lower}).

\subsection{Candidate budget and response length}\label{subsec:horizon}

As responses become longer, small differences between the target and proposal can accumulate across tokens. Consequently, the ratio of their probabilities can vary greatly from one response to another. Responses with a large target-to-proposal ratio are underrepresented among the generated candidates. This poses a challenge for selection rules such as MH and SIR, because neither can return a response that was never proposed.

To study this limitation, we consider a sequence of problems in which fluctuations in the log target-to-proposal ratio grow along with the rollout budget. The following theorem describes the balance between these two effects. At the specified budget scaling, some target mass remains inaccessible often enough that even the best selection rule has a positive limiting error.

\begin{theorem}[Critical candidate budget]\label{thm:critical}
    For each integer $N\ge1$, let $p_N,q_N$ be positive probability
    laws on the same finite set $\mathcal X_N$. Suppose that there exist
    $d_N\in\mathbb R$ and $s_N>0$ with $s_N\to\infty$ such that, for $Y_N\sim p_N$,
    \[
        \frac{\log[p_N(Y_N)/q_N(Y_N)]-d_N}{s_N}
        \Rightarrow W
    \]
    in distribution as $N\to\infty$, where $W$ has distribution function $\Psi$.
    Consider integer rollout budgets $M_N\to\infty$ such that
    \[
        \lim_{N\to\infty}\frac{\log M_N-d_N}{s_N}=b
    \]
    for some $b\in\mathbb R$. If $\Psi$ is continuous at $b$ and
    $0<\Psi(b)<1$, then
    \begin{equation}\label{eq:critical}
        \inf_{\nu\in\mathcal S_{M_N}(q_N)}
        \|\nu-p_N\|_{\rm TV}
        \longrightarrow 1-\Psi(b)
        \quad\textup{as }N\to\infty,
    \end{equation}
    where $\mathcal S_M(q_N)$ is the set of output laws of all rules
    that return one of $M$ iid candidates from $q_N$, including rules
    with full knowledge of $p_N$ and $q_N$. Both MH, initialized from $q_N$,
    and ordinary SIR attain this limit.
\end{theorem}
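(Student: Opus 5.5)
We prove a lower bound valid for every selection rule and then show that MH and SIR both achieve it. Write $r_N=p_N/q_N$ and $L_N=\log r_N$. Fix $\epsilon>0$ and set $\tau_N=d_N+(b-\epsilon)s_N$ and $\tau_N'=d_N+(b+\epsilon)s_N$.

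\emph{Lower bound.} Let $A_N=\{y:L_N(y)>\tau_N'\}$. Since $p_N(y)>e^{\tau_N'}q_N(y)$ on $A_N$, we get $q_N(A_N)<e^{-\tau_N'}p_N(A_N)\le e^{-\tau_N'}$. Any rule returning one of $M_N$ iid candidates satisfies $\nu(A_N)\le \Pr(\text{some candidate lies in }A_N)\le M_Nq_N(A_N)$. By the budget assumption, $\log M_N-\tau_N'=-\epsilon s_N+o(s_N)\to-\infty$, so $M_Nq_N(A_N)\to0$. Hence
\[
\|\nu-p_N\|_{\rm TV}\ge p_N(A_N)-\nu(A_N)\ge p_N(A_N)-o(1).
\]
By the convergence in distribution, $p_N(A_N)\to 1-\Psi(b+\epsilon)$ at continuity points of $\Psi$. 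Letting $\epsilon\downarrow0$ along such points and using continuity of $\Psi$ at $b$ gives $\liminf\inf_\nu\|\nu-p_N\|_{\rm TV}\ge1-\Psi(b)$.

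\emph{Upper bound for SIR.} Let $B_N=\{L_N\le\tau_N\}$. Then $p_N(B_N)\to\Psi(b-\epsilon)>0$ for small $\epsilon$. The target is to show that, for $y\in B_N$, $S_{M_N}(y)\ge(1-o(1))p_N(y)$ uniformly. Then $\|S_{M_N}-p_N\|_{\rm TV}=\sum_y(p_N-S_{M_N})_+\le p_N(B_N^c)+o(1)\to1-\Psi(b-\epsilon)$, and letting $\epsilon\downarrow 0$ closes the gap.

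Write $S_M(y)=Mq(y)\,\E\bigl[r(y)/(r(y)+\sum_{k\ge2}r(Y_k))\bigr]$. By Jensen's inequality (the map $s\mapsto r/(r+s)$ is convex), this is at least $p(y)\,M/(M+\text{stuff})$, which is too crude because $\E_q r=1$ gives $\E\sum_k r(Y_k)=M-1$ exactly, and the resulting factor $M/(r(y)+M-1)$ is $1-o(1)$ only when $r(y)\ll M$. That condition does hold on $B_N$: $r(y)\le e^{\tau_N}$ and $\log M_N-\tau_N=\epsilon s_N+o(s_N)\to\infty$. So Jensen applied to $s\mapsto r/(r+s)$ yields
\[
S_M(y)\ge Mq(y)\,\frac{r(y)}{r(y)+M-1}=p(y)\,\frac{M}{r(y)+M-1}=p(y)(1-o(1))
\]
uniformly on $B_N$. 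This handles SIR.

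\emph{Upper bound for MH.} I would use the dominance in Theorem~\ref{thm:dominance} with $f(u)=\tfrac12|u-1|$, which gives $\|H_{M}-p\|_{\rm TV}\le\|S_M-p\|_{\rm TV}$. MH therefore inherits SIR's upper bound, and the lower bound applies to both since both return a candidate.

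The main obstacle is the uniform SIR estimate, which the Jensen step resolves; some care is also needed in choosing $\epsilon$ at continuity points of $\Psi$ near $b$.
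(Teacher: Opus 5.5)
Your proof is correct. The lower bound is essentially the paper's candidate-coverage argument. The paper puts the threshold exactly at $\log M_N$ and shows $M_Nq_N(A_N)\to0$ by splitting $\E_{p_N}[e^{\log M_N-L_N}\mathbf 1_{A_N}]$ into a thin layer above the threshold plus $e^{-\epsilon s_N}$. You instead shift the threshold to $d_N+(b+\epsilon)s_N$ and send $\epsilon\downarrow0$ on the target-mass side; the two are equivalent.

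The upper bound is where your route genuinely differs. The paper clips the weights at $B_N=M_N/k_N$, with $k_N\to\infty$ and $\log k_N=o(s_N)$. It controls clipped MH through its minorization constant $\alpha_B/B$ (Proposition~\ref{prop:smooth-envelope}) and clipped SIR through the same Jensen bound you use. It then transfers both back to the original weights with Lemma~\ref{lem:clipping-order} and the triangle inequality.

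You dispense with clipping. Since $\|S_M-p\|_{\rm TV}=\sum_y(p-S_M)_+$, the unclipped Jensen lower bound is only needed where $p/q\ll M_N$. It in fact yields the finite-budget estimate $\|S_M-p\|_{\rm TV}\le p\{L>\tau\}+e^{\tau}/M$ for every $\tau$, a clean mirror of the converse~\eqref{eq:likelihood-converse}. MH then follows from Theorem~\ref{thm:dominance} with $f(u)=\tfrac12|u-1|$. This is legitimate because that theorem needs only a positive finite instance, and weights proportional to $p_N/q_N$ define the MH and SIR in question.

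Your argument is shorter and makes MH's attainment a corollary of the main dominance result. The paper's argument buys a direct non-asymptotic envelope for MH, $t_B+(1-\alpha_B/B)^M$, matched from below by $t_M$, that does not pass through SIR. Its clipping lemma, however, also rests on the priority representation.

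A minor point of exposition: calling the Jensen bound ``too crude'' is misleading, since you then show it is exactly what suffices on $B_N$.
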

\begin{proof}
    See Appendix~\ref{app:horizon-budget}.
\end{proof}

As an example, consider a response consisting of $N$ independent repetitions of Example~\ref{ex:local}. In this family, $d_N=\KL(p_N\|q_N)$ grows linearly with response length and $s_N$ grows as its square root (Appendix~\ref{app:horizon-budget}). For any fixed $b\in\mathbb R$, the scaling $\log M_N=d_N+bs_N+o(s_N)$ yields a limiting TV error strictly between zero and one. Thus these budgets satisfy $\log M_N=\KL(p_N\|q_N)+O(\sqrt N)$ and grow exponentially with response length.

\begin{figure}[t]
\centering
\includegraphics[width=0.90\textwidth]{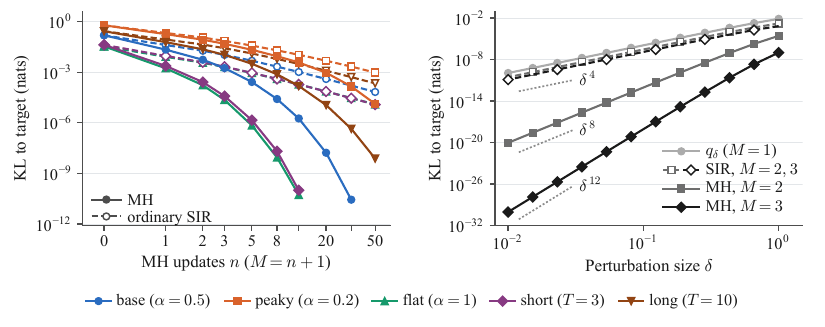}
\caption{\textbf{Exact sampling error on enumerable problems.}
Left: mean KL in five enumerable settings (values $\le10^{-12}$ omitted).
Right: the nearby-policy rates of Theorem~\ref{thm:nearby}. As $\delta\to0$, the KL of MH decays as $\delta^{4M}$, whereas ordinary SIR keeps the $\delta^4$ order of $q_\delta$ (dotted reference slopes).
Solid lines with filled markers show MH, and dashed lines with open markers show ordinary SIR.}
\label{fig:toy}
\end{figure}

\section{Empirical study}\label{sec:experiments}

We measure MH and SIR sampling errors directly on enumerable problems and restricted continuation trees of trained LLM specialists.
For full responses, the samplers' marginal output probabilities are difficult to evaluate. We instead use the shared certificate to bound improvement over local decoding and separately assess task-level effects and serving costs on GSM8K and HH-RLHF. Both experiments compose DPO-specialized adapters of Gemma-4-E2B-it \citep{gemmateam2026gemma4}.

\subsection{Sampling error in an enumerable setting}\label{subsec:enumerable}

We compose 3 autoregressive policies over a 4-token vocabulary, drawing
conditional distributions from a symmetric Dirichlet law and weights
uniformly from the simplex. Five settings vary policy concentration
(base, peaky, and flat) and response length (short and long), with horizons from
3 to 10 tokens. Enumeration lets us propagate MH exactly and evaluate SIR's
marginal law by deterministic quadrature. Appendix~\ref{app:toy-table} gives
configurations and replication counts.

The first MH update gives the largest gain in every setting, removing approximately 70--95\% of the mean initial KL (Figure~\ref{fig:toy}, left), consistent with Proposition~\ref{prop:monotone}. Convergence is slowest in the peaky
setting. At $M=9$, MH has lower mean KL than ordinary SIR in all five settings. In the base, flat, and short settings, MH with 9 candidates also has lower mean KL than SIR with 51.

At a budget of 9 full-sequence equivalents of conditional policy evaluations, MH has lower estimated binned TV than SIR on all 23 tested instances and the lowest point estimate among MH, SMC, and replica-exchange methods on 17. Adaptive SMC outperforms MH on one peaky instance. Maximum-weight selection can be farther from the target than local decoding (Appendix~\ref{app:matched-protocol}).

Finally, the right panel of Figure~\ref{fig:toy} tests the nearby-policy prediction on a fixed
27-response problem with 2 equally weighted policies. As $\delta$
decreases, MH at $M=2$ and $M=3$ approaches eighth- and twelfth-order
error, while ordinary SIR retains fourth-order error with a smaller
coefficient.

\subsection{Direct comparisons on trained-policy continuation trees}\label{subsec:llm-tree}

We compare MH and SIR using the trained accuracy- and brevity-specialized policies on 50 GSM8K test questions \citep{cobbe2021gsm8k}, at the original prompts and halfway through greedy reference-model responses. We enumerate continuations of up to 4 tokens, restricting both specialists to the reference model's top 8 tokens at each prefix and renormalizing each separately. The target is the sequence product of these restricted policies. At accuracy weights $w_1\in\{0.25,0.5,0.75\}$, deterministic MH propagation and SIR quadrature give the full output laws and hence their consensus gains through~\eqref{eq:consensus}.

At midpoint contexts with equal weights, the local decoder's mean KL is $1.31\times10^{-3}$ nats. With 2 candidates, MH removes 96.8\% of this mean divergence, compared with 76.0\% for SIR. With 3 candidates, MH has lower mean KL than SIR with 9 candidates.

Across the six context--weight settings, MH's mean KL advantage over SIR at $M=9$ ranges from $4.60\times10^{-6}$ to $1.66\times10^{-4}$ nats. All six paired, pointwise 95\% bootstrap intervals exclude $0$, although the gaps are small and vary substantially across questions. At this budget, the estimated mean certificate is 75--90\% of MH's mean gain over local decoding. Full curves and sensitivity checks are in Appendix~\ref{app:llm-tree}.

\Needspace{10\baselineskip}
\subsection{Consensus gains and task-level effects on GSM8K}\label{subsec:llm}

\begin{figure}[t]
\centering
\includegraphics[width=\linewidth]{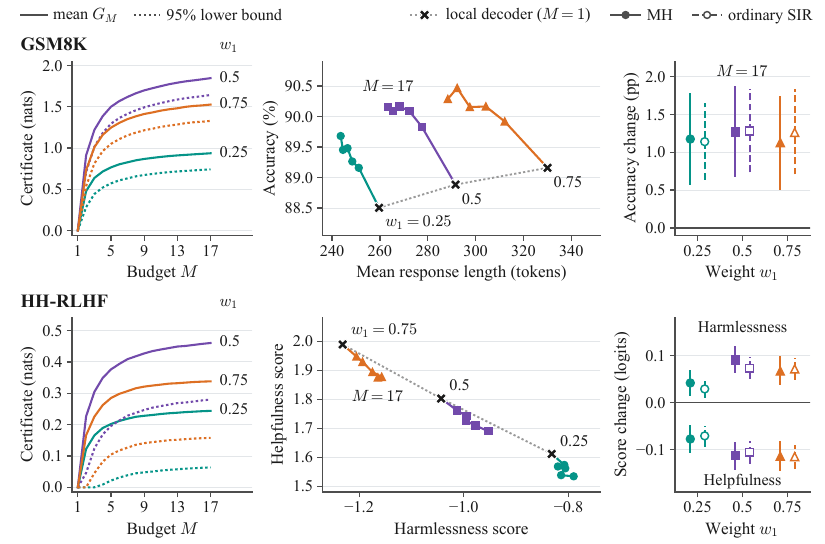}
\caption{\textbf{Certified consensus gains and task-level effects on GSM8K (top) and HH-RLHF (bottom).}
Left: mean $G_M$ and simultaneous 95\% lower bounds within each study, conditional on the prompts and checkpoints; $G_M$ depends only on the candidate pool, so it certifies both MH and ordinary SIR.
Middle: MH trade-offs at $M\in\{2,3,5,9,17\}$; crosses mark local decoding ($M=1$). Ordinary SIR nearly coincides with MH (Tables~\ref{tab:gsm8k-output} and~\ref{tab:hh-output}).
Right: changes from the local decoder at the same weights for $M=17$, with paired, pointwise 95\% bootstrap intervals. HH-RLHF shows both helpfulness and harmlessness changes in reward-model logits; pp denotes percentage points.}
\label{fig:llm-results}
\end{figure}

In both GSM8K and HH-RLHF experiments we train two specialist policies and then compose them with weights $(w_1,1-w_1)$, $w_1\in\{0.25,0.5,0.75\}$, on 1,319 test prompts. We generate 8 independent groups (replications) of 17 rollouts per test prompt, sampling from the full vocabulary at temperature 1. At budget $M$, both methods use the first $M$ responses from the same pool. The checkpoints were fixed before the evaluation, and test examples were excluded from specialist training. 

For GSM8K, we compose the accuracy- and brevity-specialized policies on the full official test set, with weight $w_1$ on accuracy.

At $M=17$, the simultaneous 95\% lower bounds on average consensus improvement for both MH and SIR are \textbf{0.742}, \textbf{1.644}, and \textbf{1.330 nats}, in increasing order of $w_1$ (Figure~\ref{fig:llm-results}, top-left). These also lower-bound the local decoder's mean divergence from the product target (Appendix~\ref{app:gsm8k-protocol}).

At every weighting, both corrections produce shorter and more accurate responses than the local decoder with the same weights (top-middle panel). With $M=3$, MH raises accuracy by 0.8--1.2 percentage points and reduces mean returned length by 11--26 tokens. At $M=17$, these changes are 1.1--1.3 points and 16--42 tokens (top-right panel). For both MH and SIR, the paired 95\% intervals for accuracy improvement over $q$ exclude zero at every tested weight and budget $M\ge2$.

MH and SIR differ by at most 0.14 percentage points in accuracy, with intervals that include $0$, and MH responses are shorter by at most 1.3 tokens. Majority voting on the same pools is 1.2--1.6 percentage points more accurate than MH at $M=17$, but returns responses 12--36 tokens longer (Table~\ref{tab:gsm8k-output}). These task-level comparisons do not measure fidelity to the product target.

In a separate serving profile of 16 prompts on one H200, batching 3 candidates per prompt increases mean latency by 16--20\% over local decoding across the three weightings. Despite shorter returned responses, total generated tokens grow approximately in proportion to $M$. This profile does not measure serving throughput. Appendix~\ref{app:output-evaluation} gives the full results and serving protocol.

\Needspace{6\baselineskip}
\subsection{Consensus gains on HH-RLHF}\label{subsec:hh}

We compose helpfulness- and harmlessness-specialized adapters on held-out HH-RLHF conversations \citep{bai2022hh}, with weight $w_1$ on helpfulness. Both adapters train on a shared conversation pool using preferences from separate GPT-2-large reward models \citep{yang2024ric}. These models also evaluate the outputs, so their scores are proxies for quality and safety, not human judgments.

At $M=17$, the simultaneous 95\% lower bounds on average consensus improvement are \textbf{0.063}, \textbf{0.280}, and \textbf{0.158 nats}, in increasing order of $w_1$ (Figure~\ref{fig:llm-results}, bottom-left).

Increasing $w_1$ raises helpfulness and lowers harmlessness under both corrections (bottom-middle panel and Table~\ref{tab:hh-output}). At fixed weights, MH
with $M=17$ lowers helpfulness by 0.077--0.114 logits and raises harmlessness
by 0.042--0.090 logits relative to local decoding. All six paired, pointwise
95\% intervals exclude $0$ (bottom-right panel). The weighted-score change
is negative at $w_1=0.5$ and $0.75$ and statistically unresolved at $w_1=0.25$.
MH and SIR have similar weighted scores, and all three paired intervals
for their differences include $0$.

MH favors responses with less accumulated policy disagreement, measured by $-\log\rho_w$. Even with exact specialists, the guarantee concerns a KL-regularized objective, not the raw weighted score. Appendix~\ref{app:hh-protocol} gives the protocol, paired comparisons, and serving costs.

\subsection{Scope and practical implications}\label{subsec:llm-validation-limits}

The full-response experiments cover one model family and two tasks. Their certificates bound improvement over local decoding, not the MH--SIR gap, which we measure directly only on restricted continuation trees.
At the same candidate budget and model-evaluation cost, Theorem 1 guarantees that MH's output distribution is no farther from the product target than ordinary SIR's.

\section{Related work}\label{sec:literature}

\paragraph{Decoding-time policy composition.}
MOD derives the weighted sequence product for KL-regularized multi-objective
alignment and implements greedy or beam decoding \citep{shi2024mod}.
MAHALO combines objective-specific heads through weighted logits
\citep{shen2026mahalo}. Its untruncated softmax follows our local product
law, although process-reward selection and truncation change the output
distribution. We study sampling at fixed weights, whereas COLLAB selects policies by utility \citep{chakraborty2025collab} and RMOD optimizes against adversarial reward weights \citep{son2026rmod}.

\paragraph{Sampling globally normalized text distributions.}
SMC uses local-normalizer corrections for policy products \citep{lew2023smc}, including across tokenizers \citep{chan2026ensembling}. \citet{gareev2024local} and \citet{tan2024probability} use local proposals in independence MH for globally normalized text. Twisted SMC learns continuation information \citep{zhao2024twisted}, QUEST resamples suffixes \citep{faria2024quest}, and \citet{he2026recipes} study SMC and replica exchange for sequence products and tilts. We analyze independent complete-response proposals.

\paragraph{Finite-budget selection guarantees.}
\citet{deligiannidis2024importance} compare MH and SIR bias rates, while
\citet{yang2023convergence} study worst-start TV rates for multiple-try
and ordinary independence MH at matched cost. Our comparison instead
orders the marginal output distributions under every convex
$f$-divergence at each budget and extends to fixed schedules of their multiple-try
sampler (Appendix~\ref{app:block-dominance}). The coverage analysis connects this
comparison to the candidate-budget thresholds studied in importance
sampling and finite-pool selection
\citep{chatterjee2018sample,block2023sample,flamich2024notes}.

\section{Conclusion}\label{sec:conclusion}

We show that the independence-MH correction brings local product decoding at least as close to the sequence target as ordinary SIR under every convex $f$-divergence, at every candidate budget. Its consensus improvement can be certified from recorded rollout weights without estimating the sampler's output distribution. The GSM8K and HH-RLHF experiments show the magnitude of the positive consensus gains in practical settings. However, comparing correction methods empirically at LLM scale remains difficult: their marginal output probabilities are not readily evaluable, preventing reliable direct estimation of consensus differences, while task rewards alone do not capture the consensus objective. Theoretical comparison results such as given by Theorem~\ref{thm:dominance} are therefore particularly important. Our coverage analysis also shows why improvement need not imply
closeness to the target: neither method can return responses from regions absent from its candidate pool.

\clearpage
\subsection*{AI use statement}
The authors formulated the research questions, derived core theoretical
results, and wrote the initial manuscript. Generative AI tools assisted with readability,
literature searches, figures, LaTeX preparation, and the development,
extension, and checking of selected theoretical arguments and proofs.
The authors take responsibility for the scientific content and reported
results.

\subsection*{Ethics statement}
The method composes existing policies and inherits limitations of their
training data and reward signals. More accurate sampling from their
product does not by itself establish safety or improved external
performance. The HH-RLHF dataset contains potentially harmful content. We
report methodology and aggregate statistics without prompt or response examples.

\subsection*{Reproducibility statement}
Appendix~\ref{app:toy-table} gives the enumerable configurations,
deterministic calculations, matched-budget sampler schedules, and
reference controls. Appendices~\ref{app:gsm8k-protocol} and~\ref{app:hh-protocol} document the
GSM8K and HH-RLHF evaluations and their uncertainty calculations.
Appendix~\ref{app:llm-tree} gives the restricted continuation-tree protocol
and direct objective comparisons.
The associated code is available in the repository at
\url{https://github.com/akurennoy/mh-policy-composition.git}.

\bibliography{references}
\bibliographystyle{preprint}

\clearpage
\appendix
\section{Proofs of the main results}\label{app:proofs}

This appendix proves the results stated in the main text, in the order
in which they appear. We first establish how local normalizers provide
exact correction weights and 
then use the priority representation to compare MH with ordinary SIR.
The subsequent proofs establish the objective guarantees of
Proposition~\ref{prop:monotone}, the nearby-policy rates of
Theorem~\ref{thm:nearby}, and the critical-budget limit of
Theorem~\ref{thm:critical}.

Unless stated otherwise, we fix the prompt and weights and take all
sums over the common finite response support. We write $p=\widehat p_w$
for the target, $q=q_w$ for the local decoder, and $\rho=\rho_w$ for
the recorded weight, so $p=q\rho/Z$. Let $K=K_{p,q}$ denote the independence-MH transition kernel, whose transition probabilities are determined by the proposal $q$ and target $p$. Write $H_M=qK_{p,q}^{M-1}$ for the law of the MH correction that uses $M$ candidates (one for initialization and $n=M-1$ for the updates).

We write $\E_\mu f=\sum_y\mu(y)f(y)$ for expectation.
For KL divergence, we use
$\KL(\mu\|\nu)=\sum_y\mu(y)\log[\mu(y)/\nu(y)]$, with
terms with $\mu(y)=0$ set to $0$ and the divergence set to $+\infty$ if
$\mu(y)>0=\nu(y)$ for some $y$. For positive laws, the convex divergence is
$D_f(\mu\|p)=\sum_y p(y)f(\mu(y)/p(y))$.
The MH, SIR, and target laws in Theorem~\ref{thm:dominance} are all positive,
so both KL directions fall within this definition.
Choosing $f(u)=(u-1)^2$ gives $\chi^2(\mu\|p)$.

\subsection{Local normalizers and the sequence-level target}\label{app:proof-tilt}

\begin{proof}[Proof of Lemma~\ref{lem:tilt}]
Fix $x,w$. On the common support tree, every allowed transition has positive probability under every active policy. Thus $g_{w,t}$ is positive on allowed transitions and $0$ elsewhere. Each reachable prefix has an allowed continuation, so $z_{w,t}>0$ and the local decoder assigns positive probability to every response in $\support_x$. The generalized H\"older inequality gives
\[
    z_{w,t}(h)=\sum_v\prod_{i\in I_w}\pi_{i,t}(v\mid h)^{w_i}
    \le\prod_{i\in I_w}\left(\sum_v\pi_{i,t}(v\mid h)\right)^{w_i}=1.
\]
This includes the single-active-policy case, where equality holds. Consequently $0<\rho_w\le1$. Multiplying the local probabilities along a supported response gives
\begin{align*}
    q_w(y\mid x)
    &=\frac{\prod_{t=1}^T\prod_{i\in I_w}\pi_{i,t}(y_t\mid h_t)^{w_i}}
    {\prod_{t=1}^Tz_{w,t}(h_t)}\\
    &=\frac{\prod_{i\in I_w}\pi_i(y\mid x)^{w_i}}{\rho_w(y\mid x)}
    =\frac{\ptil_w(y\mid x)}{\rho_w(y\mid x)}.
\end{align*}
Summing $q_w\rho_w=\ptil_w$ over $\support_x$ yields $Z_w=\E_{q_w}\rho_w$. Since $q_w$ is a probability law on this support and $0<\rho_w\le1$, we obtain $0<Z_w\le1$. Dividing $q_w\rho_w=\ptil_w$ by $Z_w$ proves~\eqref{eq:tilt} and positivity of $\widehat p_w$.

If a response ends before the cap, every policy emits EOS deterministically at each padded transition. Thus $z_{w,t}(h_t)=1$, and $g_{w,t}(v\mid h_t)=q_{w,t}(v\mid h_t)$ is $1$ for $v=\mathrm{EOS}$ and $0$ otherwise. Padding therefore changes neither the response probability nor its recorded weight.
\end{proof}

\paragraph{The target's next-token conditional.}
For an allowed token $v$ at prefix $h_t$, factor each policy into its prefix, current-token, and suffix probabilities. The prefix factors cancel when normalizing over $v$, leaving
\begin{equation}\label{eq:conditional}
\begin{aligned}
    \widehat p_{w,t}(v\mid h_t)&\propto g_{w,t}(v\mid h_t)F_{w,t}(h_t,v),\\
    F_{w,t}(h_t,v)&=\sum_{y_{>t}}\prod_{i\in I_w}\pi_i(y_{>t}\mid h_t,v)^{w_i}.
\end{aligned}
\end{equation}
Here $\pi_i(y_{>t}\mid h_t,v)$ is the suffix probability, and the sum runs over supported suffixes. At $t=T$ the suffix is empty, so $F_{w,t}=1$. Earlier in the response, this continuation mass generally varies across tokens. The local rule omits $F_{w,t}$, which explains why its next-token probabilities can differ from the target's. For two exact specialists, $-\beta\log F_{w,t}(h_t,v)$ is the finite-horizon counterpart of the divergence correction of \citet[Theorem~3.2]{hunt2019composing}, with the shared reference absorbed into each reward.

\paragraph{Computing local normalizers from logits.}
Let $b_i(v,h)$ be the component logits after the temperature and mask
transformations that define $\pi_i$, and write
$\operatorname{LSE}_v b(v)=\log\sum_v e^{b(v)}$.
On allowed tokens, the normalized log probabilities are
$b_i(v,h)-\operatorname{LSE}_v b_i(v,h)$. Substituting into~\eqref{def:q}
gives
\begin{equation}\label{eq:logit-normalizer}
    \log z_w(h)=\operatorname{LSE}_v\!\left(\sum_{i\in I_w}w_i b_i(v,h)\right)
    -\sum_{i\in I_w}w_i\operatorname{LSE}_v b_i(v,h).
\end{equation}
Adding any prefix-dependent constant to a component's logits leaves
both its policy and this expression unchanged. The subtraction of
component log normalizers is essential: softmax of the combined logits
samples the correct $q_w$, but the normalizer of those logits alone is
not $z_w$.

Generation and weight calculation must use the same temperatures, masks,
EOS rules, and length caps. Temperature and masking enter the target
through the resulting normalized component policies. For example, a token
can be forbidden by masking it
in every active component and renormalizing each component before
computing $z_w$. The correction then targets the product of these masked
policies, which generally differs from the original sequence target
conditioned on avoiding that token. Applying top-$k$ or nucleus truncation
only to the combined proposal generally removes target support and
invalidates the original correction.

Subtracting a common constant from all candidate log weights leaves MH,
SIR, and $G_M$ unchanged and can improve numerical stability. Clipping
those weights or normalizing them by response length generally changes
the target. Clipping $G_M$ for the certificate instead leaves the sampler
unchanged.

\subsection{Priority representation and proof of Theorem~\ref{thm:dominance}}
\label{app:proof-dominance}

The following lemma represents the output of the MH correction on iid candidates using the classical one-item priority-sampling rule \citep{duffield2007priority}.

\begin{lemma}[Priority representation]\label{lem:priority}
Let $Y_1,\ldots,Y_M$ be iid candidates from $q$, and let $U_1,\ldots,U_M$ be iid uniform on $(0,1)$, independent of the candidates. 

Then the maximum of $\rho(Y_j)/U_j$ over $1\le j\le M$ is almost surely attained at a unique index and, denoting this index by
\[
    J=\argmax_{1\le j\le M}\frac{\rho(Y_j)}{U_j},
\]
we have that $Y_J$ has law $H_M=qK^{M-1}$, i.e., the output law of independence MH initialized at $Y_1$ and using $Y_2,\ldots,Y_M$ as successive proposals.
\end{lemma}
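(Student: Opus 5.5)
The strategy is to avoid a step-by-step coupling. A direct Markov coupling fails: after one step the priority champion's key $\rho(Y)/U$ carries information beyond the current state, so the per-step acceptance probabilities do not match. Instead I will compute the law of $Y_J$ in closed form and show that it satisfies the same one-step recursion $H_{M+1}=H_MK$ as the MH output law, together with the same initial condition $H_1=q$.

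First, almost-sure uniqueness of the argmax. Ties $\rho(Y_j)/U_j=\rho(Y_k)/U_k$ for $j\ne k$ require $U_j$ to equal a specific value determined by $(Y_j,Y_k,U_k)$. Since $U_j$ is continuous and independent of $(Y_j,Y_k,U_k)$, each such event has probability $0$, and there are finitely many pairs.

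Next, the closed form. Set $T_k=\rho(Y_k)/U_k$. Its distribution function is
\[
F(t)=P(T_k\le t)=\E_q\bigl[(1-\rho(Y)/t)^+\bigr], \qquad \text{so}\qquad tF(t)=\E_q(t-\rho)^+ .
\]
The $T_k$ are iid, and the $M$ indices are exchangeable. Conditioning on $Y_1=y$ and $U_1=u$ gives
\[
G_M(y):=P(Y_J=y)=M\,q(y)\int_0^1F\bigl(\rho(y)/u\bigr)^{M-1}du
=M\,q(y)\rho(y)\int_{\rho(y)}^\infty F(t)^{M-1}t^{-2}\,dt,
\]
where the second form uses the substitution $t=\rho(y)/u$. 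For $M=1$ this reduces to $G_1=q=H_1$.

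The recursion. Write $a(y,y')=\min\{1,\rho(y')/\rho(y)\}$ for the acceptance probability and $r(y)=1-\sum_{y'}q(y')a(y,y')$ for the rejection probability. The identity to prove is
\[
G_{M+1}(y)=G_M(y)\,r(y)+q(y)\sum_{y'}G_M(y')\,a(y',y).
\]
Substituting the integral form, both sides become integrals of powers of $F$ against $t^{-2}$. The finite sum over $y'$ can be expressed through the functions $F(t)$, $P(\rho<t)=\frac{d}{dt}[tF(t)]$, and $\E_q[\rho\,\mathbf 1\{\rho<t\}]$. Splitting $y'$ according to whether $\rho(y')\ge\rho(y)$ or $\rho(y')<\rho(y)$ makes the $\min$ explicit.

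The main obstacle is this verification. My plan is to integrate by parts, using $\frac{d}{dt}F(t)^{M}=MF^{M-1}F'$ with $F'(t)=t^{-2}\E_q[\rho\,\mathbf 1\{\rho<t\}]$. This should convert $\int_{\rho(y)}^\infty F^{M}t^{-2}\,dt$ into a boundary term at $t=\rho(y)$ plus an integral involving $F^{M-1}$. The boundary term should correspond to the mass from proposals with $\rho(y')<\rho(y)$, and the remaining integral to $G_M(y)r(y)$ and the accepted moves from $y'$ with $\rho(y')\ge\rho(y)$. Because $F$ is piecewise of the form $\alpha-\beta/t$ between consecutive values of $\rho$, everything can also be checked interval by interval if the clean integration by parts misbehaves.

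Once the recursion holds, induction on $M$ gives $G_M=qK^{M-1}=H_M$ for all $M\ge1$, which proves the lemma.
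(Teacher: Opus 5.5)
Your argument is correct, and it takes a different route from the paper. The paper works conditionally on a fixed tagged pool. It shows, by induction on the pool size, that priority selection and MH run in a uniformly random order both pick tag $i$ with probability $r_i\int_0^{1/R_*}\prod_{j\ne i}(1-r_jt)\,dt$. The induction conditions on the last-processed tag, uses the identity $B_i-mQ_i=-(tQ_i)'$, and splits on whether $r_i$ is maximal. Only afterwards does the paper remove the random order by exchangeability of iid candidates.

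You instead work with marginals. Your closed form is exactly the paper's density~\eqref{eq:priority-density} under $u=1/t$, $B(1/t)=F(t)$, and you check the forward equation $G_{M+1}=G_MK$ by induction on the budget. The verification you flag as the main obstacle does close along the lines you describe. Put $s=\rho(y)$ and $\phi_M(s)=M\int_s^\infty F^{M-1}t^{-2}\,dt$. Then $r(y)=\E_q(1-\rho/s)^+=F(s)$, and the acceptance sum equals $\E_q[\min\{\rho,s\}\phi_M(\rho)]$. So the identity to prove is
\[
 s\phi_{M+1}(s)=sF(s)\phi_M(s)+\E_q\bigl[\min\{\rho,s\}\,\phi_M(\rho)\bigr].
\]
After exchanging the order of integration, split at $t=s$:
\begin{itemize}
\item The range $t<s$ contributes $M\int_0^sF^{M-1}F'\,dt=F(s)^M$.
\item For $t>s$, the factor $\E_q[\min\{\rho,s\}\mathbf 1\{\rho<t\}]$ equals $s\Pr_q(\rho<t)-sF(s)$. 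Its second part cancels $sF(s)\phi_M(s)$.
\item Write $\Pr_q(\rho<t)=F(t)+tF'(t)$ and integrate $\int_s^\infty t^{-1}\,d(F^M)$ by parts. This gives $-F(s)^M+s\int_s^\infty F^Mt^{-2}\,dt$, which cancels the boundary term $F(s)^M$.
\end{itemize}
What remains is $s(M+1)\int_s^\infty F^Mt^{-2}\,dt$, as required. Continuity and piecewise smoothness of $F$ justify these manipulations.

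As for what each approach buys: yours is a one-variable calculus check. It produces the explicit output density as part of the proof; the paper derives that density separately in Step~3 of the proof of Theorem~\ref{thm:dominance}. Your route proves exactly the marginal statement of the lemma, which is all the coupling in Theorem~\ref{thm:dominance} needs.

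The paper's pool-level argument proves more: for every fixed pool, priority selection and randomly ordered MH have the same selection law. The paper invokes this later when it notes that the lemma identifies the conditional selection law on a randomly ordered tagged pool. Your marginal recursion does not yield that conditional statement, since it sees the candidates only through their iid marginal law.
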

\begin{proof}
For $M=1$ the statement is trivial so in the remainder of the proof we assume $M \ge 2$.

Conditional on the candidates, the weights $\rho(Y_j)$ are fixed and positive. The priorities $\rho(Y_j)/U_j$ are therefore independent and continuously distributed, so each pair ties with probability zero. A union bound over the finitely many pairs shows that the maximizing index $J$ is almost surely unique.

An independent uniform permutation of iid candidates leaves their joint law unchanged. It therefore suffices to show that, for any fixed candidate pool, priority selection and MH processing the candidates in a uniformly random order return the same distribution over candidates.

Fix a pool and give its candidates distinct tags $1,\ldots,M$, even if their responses coincide. Write $r_i=\rho(Y_i)>0$ and $R_*=\max_jr_j$. We will show that both rules select tag $i$ with probability
\begin{equation}\label{eq:priority-pool}
    r_i\int_0^{1/R_*}\prod_{j\ne i}(1-r_jt)\,dt.
\end{equation}

For the randomly ordered MH rule, initialize at the first candidate and process each remaining candidate once with fresh independent acceptance randomness.

For priority selection, condition on $U_i=u$. Tag $i$ wins exactly when $U_j\ge r_ju/r_i$ for every $j\ne i$, an event of probability $\prod_{j\ne i}(1-r_ju/r_i)_+$. Its effective integration range is $0<u<\min\left\{1,r_i/\max_{j\ne i}r_j\right\}
  =r_i/R_*$. Substituting $t=u/r_i$ gives~\eqref{eq:priority-pool}.

We prove the same formula for random-order MH by induction on the pool size. For each nonempty tagged set $A$, let $R_A=\max_{j\in A}r_j$ and let $P_A(i)$ be the probability that the chain finishes at tag $i$. The induction claim is
\[
    P_A(i)=r_i\int_0^{1/R_A}\prod_{k\in A\setminus\{i\}}(1-r_kt)\,dt.
\]
It holds immediately for $|A|=1$. For $m=|A|\ge2$, condition on the uniformly chosen last tag:
\[
 mP_A(i)=\sum_{j\ne i}P_{A\setminus\{i\}}(j)\min\{1,r_i/r_j\}
       +\sum_{j\ne i}P_{A\setminus\{j\}}(i)(1-r_j/r_i)_+.
\]
The first sum covers the case in which $i$ is processed last and accepted. In the second, another tag $j$ is processed last, so the chain must already be at $i$ and reject $j$.

For fixed $A,i$, abbreviate
\[
 Q_i(t)=\prod_{k\ne i}(1-r_kt),\qquad
 B_i(t)=\sum_{j\ne i}\prod_{k\ne i,j}(1-r_kt),
\]
with all indices in $A$. Product differentiation gives
\[
 B_i-mQ_i=-(tQ_i)',\qquad
 Q_i'=-\sum_{j\ne i}r_j\prod_{k\ne i,j}(1-r_kt).
\]
We separate the two cases that determine the induction integrals' upper limits.

If $r_i<R_A$, removing $i$ leaves a maximal-weight tag. The second sum has nonzero terms only when $r_j<r_i$, so those removals also leave a maximal tag. Every nonzero induction integral therefore has the common endpoint $1/R_A$. For each $j$, the coefficients combine as
\[
 r_j\min\{1,r_i/r_j\}+r_i(1-r_j/r_i)_+
 =\min\{r_i,r_j\}+(r_i-r_j)_+=r_i.
\]
Consequently,
\[
 mP_A(i)=r_i\int_0^{1/R_A}B_i(t)\,dt
        =mr_i\int_0^{1/R_A}Q_i(t)\,dt.
\]
The last equality uses $B_i-mQ_i=-(tQ_i)'$ and $Q_i(1/R_A)=0$, since $r_i < R_A$ and hence some $k\in A\setminus\{i\}$ satisfies $r_k=R_A$, making the factor $1-r_kt$ in $Q_i(t)$ vanish at $t=1/R_A$.

If $r_i=R_A$, the first sum in the recursion is $1$: when $i$ is last, it is accepted from every preceding state. In every set $A\setminus\{j\}$, tag $i$ remains maximal. Substituting the induction hypothesis into the second sum gives
\[
 mP_A(i)=1+\int_0^{1/R_A}\bigl(R_AB_i(t)+Q_i'(t)\bigr)\,dt.
\]
Using the same polynomial identity, the endpoint terms are
\[
 -R_A[tQ_i(t)]_0^{1/R_A}+[Q_i(t)]_0^{1/R_A}=-1.
\]
They cancel the leading $1$, leaving $mP_A(i)=mR_A\int_0^{1/R_A}Q_i(t)\,dt$. This also covers tied maxima because no step assumes that weights are distinct. Thus, for every fixed pool, priority selection and MH with a uniformly random candidate order have the same output law. It therefore follows that the unconditional laws that arise when the two rules are applied to $M$ iid candidates drawn from $q$ are the same. Since the permutation step does not change the joint distribution of the $M$ candidates we conclude that $Y_J$ has law $qK^{M-1}$, completing the proof.
\end{proof}

The priority representation, in particular, gives a way to combine independent batches. If each batch keeps its winning response together with its key $r/U$, returning the batch winner with the largest key reproduces
selection from the combined pool. This requires neither $Z$ nor new model
evaluations.

\begin{proof}[Proof of Theorem~\ref{thm:dominance}]
For $M=1$, both selectors return their sole candidate, so $H_1=S_1=q$.

\paragraph{Step 1: Reduce responses to their recorded weights.}
Let $Q$ be the law of $R=\rho(Y)$ for $Y\sim q$, with finite support $\mathcal R$,
and let $\Pi(r)$ be its law under $Y\sim p$.
Lemma~\ref{lem:tilt} implies that
\[
  \Pi(r)
  =\sum_{y:\rho(y)=r}p(y)
  =\sum_{y:\rho(y)=r}\frac{q(y)\rho(y)}{Z}
  =\frac rZ\sum_{y:\rho(y)=r}q(y)
  =\frac{rQ(r)}Z
  \]
Let $\mathcal S = \mathcal S_x$ stand for the common support of $q$ and $p$.

Write $R_j=\rho(Y_j)$ and $\mathbf R=(R_1,\ldots,R_M)$, and let $J$ denote the index selected by either method. Conditional on $\mathbf R=\mathbf r$, the candidates are independent, with $Y_j\sim q(\cdot\mid R=r_j)$. Both MH and SIR choose $J$ using only the weights and independent randomness, so conditioning additionally on $J=j$ does not change this conditional law. In particular, every candidate whose weight is $r$ has the same conditional response law $q(\cdot\mid R=r)$, regardless of its probability of being selected. For any $y$ with $\rho(y)=r$, it follows that
\[
\begin{aligned}
 \Pr(Y_J=y,R_J=r\mid\mathbf R=\mathbf r)
 &=\sum_{j:r_j=r}\Pr(J=j\mid\mathbf R=\mathbf r)\,\Pr(Y_j=y\mid R_j=r)\\
 &=q(y\mid R=r)\Pr(R_J=r\mid\mathbf R=\mathbf r).
\end{aligned}
\]
Averaging over $\mathbf R$ and conditioning on $R_J=r$ gives
$\Pr(Y_J=y\mid R_J=r)=q(y\mid R=r)$. Thus selection changes the probabilities of the weight levels but not the relative probabilities of responses within a weight level. Furthermore, note that after conditioning on $R$ the target and the proposal coincide:
\[
\begin{aligned}
 p(y\mid R=r)
 &=\frac{p(y)\mathbf 1\{\rho(y)=r\}}{\Pi(r)}
 =\frac{q(y)r\,\mathbf 1\{\rho(y)=r\}/Z}{rQ(r)/Z}\\
 &=\frac{q(y)\mathbf 1\{\rho(y)=r\}}{Q(r)}
 =q(y\mid R=r)\qquad\forall\,y\in\mathcal S\quad\forall\,r\in\mathcal R.
\end{aligned}
\]
Thus, for either MH's or SIR's output law $\mu\in\{H_M,S_M\}$ and its selected-weight law $\mathsf U$,
\begin{equation}\label{eq:ratios}
    \frac{\mu(y)}{p(y)}
    =\frac{\mathsf U(\rho(y))q(y\mid R=\rho(y))}{\Pi(\rho(y))q(y\mid R=\rho(y))}
    =\frac{\mathsf U(\rho(y))}{\Pi(\rho(y))}\qquad\forall\,y\in\mathcal S.
\end{equation}
In addition, for either law $\mu\in\{H_M,S_M\}$ we have that
\begin{equation}\label{eq:positivity}
    \mu(y) \ge \Pr\{Y_1=y,\,\ldots,\,Y_M=y\} > 0\qquad\forall\,y\in\mathcal S.
\end{equation}
Using \eqref{eq:ratios} and \eqref{eq:positivity} we derive
\begin{equation}\label{app:weight-reduction}
    \begin{aligned}
        D_f(\mu\|p)
        &=\sum_{y\in\mathcal S} p(y)f\left(\frac{\mu(y)}{p(y)}\right)\\
        &=\sum_{r\in\mathcal R}\sum_{y:\rho(y)=r}
        p(y)f\left(\frac{\mathsf U(r)}{\Pi(r)}\right)\\
        &=\sum_{r\in\mathcal R}\Pi(r)f\left(\frac{\mathsf U(r)}{\Pi(r)}\right)\\
        &=D_f(\mathsf U\|\Pi).
        \end{aligned}
\end{equation}

\paragraph{Step 2: Couple the selected weights.}
Consider a fixed rollout pool tagged with $r_j=\rho(y_j) > 0$. Draw iid uniforms $U_j$ and put $E_j=-\log(1-U_j)$. The $E_j/r_j$ are independent exponential variables with rates $r_j$, so their minimum occurs at tag $j$ with probability
\[
    \int_0^\infty r_j e^{-r_jt}\prod_{k\ne j}e^{-r_kt}\,dt
    =\frac{r_j}{\sum_kr_k}.
\]
Minimizing $E_j/r_j$ implements ordinary SIR, while minimizing $U_j/r_j$ implements priority selection. By Lemma~\ref{lem:priority}, the latter has MH's output law when the candidates are iid from $q$.

Let $i$ be the priority winner and $j$ the SIR winner. Define $g(u)=-\log(1-u)/u$. If $i\ne j$, uniqueness of the continuous keys gives
\[
 \frac{U_i}{r_i}<\frac{U_j}{r_j},\qquad
 g(U_j)\frac{U_j}{r_j}<g(U_i)\frac{U_i}{r_i}.
\]
These inequalities imply $g(U_i)>g(U_j)$ and therefore, since the function $g$ is strictly increasing on $(0,1)$, we have that $U_i>U_j$ and $r_i/r_j>U_i/U_j>1$. Whenever the winners differ, priority selection returns the larger weight. Averaging over iid candidates and applying Lemma~\ref{lem:priority}, we conclude that the distribution of the importance weight selected by MH stochastically dominates that selected by SIR.\footnote{This step uses only the monotonicity of $u\mapsto-\log(1-u)/u$. The same
argument therefore applies to any race that selects the smallest value of
$\Psi^{-1}(U_j)/r_j$, where $\Psi^{-1}$ is the quantile function of an
atomless positive noise and $\Psi^{-1}(u)/u$ is nondecreasing. Noise with
a nonincreasing density is one example. For such races, priority
selection again picks at least as large a weight.}

\paragraph{Step 3: Convert the weight ordering to a divergence ordering.}
Let $\mathsf H_M$ and $\mathsf S_M$ be the distributions of the weight selected by MH and SIR, respectively. Define $B(u)=\E_Q(1-uR)_+$. Conditional on $Y_1=y$ and $U_1=u$, a competitor fails to beat candidate $1$ with probability $B(u/\rho(y))$. Independence of the competitors and exchangeability of the $M$ tags give
\[
 \frac{H_M(y)}{q(y)}=M\int_0^1 B(u/\rho(y))^{M-1}\,du.
\]
Changing variables and using $\Pi(r)=rQ(r)/Z$, we obtain
\begin{equation}\label{eq:priority-density}
    \frac{H_M(y)}{q(y)}
    =M\rho(y)\int_0^{1/\rho(y)}B(u)^{M-1}\,du,
    \qquad h_M(r):=\frac{\mathsf H_M(r)}{\Pi(r)}
    =MZ\int_0^{1/r}B(u)^{M-1}\,du.
\end{equation}
For $M=1$ the integrand is read as $1$. The upper limit decreases with $r$, so $h_M$ is nonincreasing: relative to the target, finite-budget MH favors smaller weights.

Equation~\eqref{eq:priority-density} also follows from the exact transition formula of \citet{smith1996exact}, restated by \citet[Theorem~4]{wang2022exact}. From a minimum-weight response $y_-$, MH accepts every proposal, so $K(y_-,\cdot)=q$ and $H_M=K^M(y_-,\cdot)$. Substituting this initialization into their formula gives the displayed density. Lemma~\ref{lem:priority} additionally identifies the conditional selection law on a randomly ordered tagged pool.

Let $s_M(r)=\mathsf S_M(r)/\Pi(r)$ and choose $b(r)=f'_+(h_M(r))$. The ratios $h_M(r)$ are positive, and a finite convex function has a finite right derivative at every interior point of its domain. Since that derivative is nondecreasing and $h_M$ is nonincreasing, $b$ is a nonincreasing subgradient. The supporting-line inequality gives
\begin{align}\label{app:divergence-order}
 D_f(\mathsf S_M\|\Pi)-D_f(\mathsf H_M\|\Pi)
 &\ge\sum_r\Pi(r)b(r)(s_M(r)-h_M(r))\nonumber\\
 &=\E_{\mathsf S_M}b-\E_{\mathsf H_M}b\ge0.
\end{align}
The final inequality follows from the stochastic domination established in Step 2: MH selects stochastically larger weights than SIR, so the expectation of the nonincreasing function $b$ is no larger under $\mathsf H_M$ than under $\mathsf S_M$. Equation~\eqref{app:weight-reduction} gives the desired response-law comparison. Taking $f(u)=u\log u$ and using~\eqref{eq:objective-gap} gives $J(H_M)\ge J(S_M)$.

\paragraph{Step 4: Establish strictness.}
Suppose $M\ge2$ and $\rho$ is nonconstant. At any nontrivial cutoff $r_0$, that is, $\min\mathcal R\le r_0<\max\mathcal R$, choose supported weights $l\le r_0<h$. A pool containing both weights has positive probability. We construct an open set of keys on which priority selects $h$ and SIR selects $l$.

Write $R_*$ for the pool's largest weight. Choose $0<v<1/(2R_*)$ and set $U_h=hv$ and $U_l=l(1+\xi)v$. At $\xi=0$ the priority keys $U/r$ tie, while the exponential keys satisfy $g(hv)v>g(lv)v$, where $g(u)=-\log(1-u)/u$ is the strictly increasing function defined in Step 2.
For sufficiently small $\xi>0$, priority selection therefore chooses $h$, while SIR chooses $l$. Taking all other uniforms sufficiently close to $1$ prevents either rule from selecting another tag. The strict inequalities persist on an open neighborhood, which has positive probability. Together with the almost-sure weight ordering, this proves
\[
    \mathsf S_M((0,r_0])>\mathsf H_M((0,r_0])
\]
at every nontrivial cutoff.

In particular $\mathsf S_M\ne\mathsf H_M$, so the supporting-line inequality in~\eqref{app:divergence-order} is strict whenever $f$ is strictly convex. Total variation needs a separate argument. Since $B(u)>0$ for $0\le u<1/\min\mathcal R$, formula~\eqref{eq:priority-density} makes $h_M$ strictly decreasing across distinct supported weights. Together with $\E_\Pi h_M=1$, this implies that $A=\{r:h_M(r)>1\}$ is a nonempty proper lower set. Therefore
\[
 \|\mathsf S_M-\Pi\|_{\rm TV}\ge\mathsf S_M(A)-\Pi(A)
 >\mathsf H_M(A)-\Pi(A)=\|\mathsf H_M-\Pi\|_{\rm TV}.
\]
The scalar reduction~\eqref{app:weight-reduction} transfers both strictness conclusions to the response laws.
\end{proof}

\subsection{Stationarity and convergence in total variation}\label{app:proof-convergence}

Let $\mu_n=\mu_0K^n$ for any initial distribution $\mu_0$ on the common finite support. The classical independence-sampler minorization gives the convergence rate used below \citep{tierney1994markov,mengersen1996rates,liu1996metropolized}.

\begin{proposition}[Convergence]\label{prop:convergence}
The target $p$ is stationary for $K$, and for $n\ge1$,
\begin{equation}\label{eq:geom}
    \|\mu_n-p\|_{\rm TV}\le(1-\varepsilon)^n\|\mu_0-p\|_{\rm TV},
    \qquad \varepsilon=\frac{Z}{\max_y\rho(y)}=\frac{1}{\max_y p(y)/q(y)}\in(0,1].
\end{equation}
\end{proposition}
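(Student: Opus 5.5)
We prove Proposition~\ref{prop:convergence} in two steps: stationarity via detailed balance, then contraction via a Doeblin minorization of $K$ by $\varepsilon p$.

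\textbf{Step 1: stationarity.} Write the independence-MH kernel on the finite support as
\[
K(y,y')=q(y')\min\{1,\rho(y')/\rho(y)\}\quad\text{for }y'\ne y,
\]
with the remaining mass $K(y,y)=1-\sum_{y'\ne y}K(y,y')$. Using $p=q\rho/Z$ from Lemma~\ref{lem:tilt}, for $y\ne y'$ we get
\[
p(y)K(y,y')=\frac{q(y)q(y')\min\{\rho(y),\rho(y')\}}{Z},
\]
which is symmetric in $(y,y')$. This is detailed balance, and summing over $y$ gives $pK=p$.

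\textbf{Step 2: minorization.} For $y'\ne y$,
\[
K(y,y')=q(y')\min\{1,\rho(y')/\rho(y)\}\ge q(y')\frac{\rho(y')}{\max\rho}=\frac{Z}{\max\rho}\,p(y')=\varepsilon\,p(y').
\]
The same bound holds on the diagonal, because $K(y,y)\ge q(y)\ge q(y)\rho(y)/\max\rho$. Hence $K(y,\cdot)\ge\varepsilon p$ for every $y$.

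The range of $\varepsilon$ follows from two facts. First, $\varepsilon=1/\max_y(p/q)$, since $p/q=\rho/Z$. Second, $\varepsilon\le1$, since $\max(p/q)\ge1$ when both are probability laws. Also $\varepsilon>0$ by positivity.

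\textbf{Step 3: contraction.} We write $K=\varepsilon\,\mathbf 1p+(1-\varepsilon)R$, where $R$ is a Markov kernel; for $\varepsilon=1$ we have $K=\mathbf 1p$ and the claim is immediate. Let $\mu$ and $p$ be probability laws. Because $\mu\mathbf 1p=p\mathbf 1p=p$, the $\varepsilon$-part cancels and
\[
\mu K-pK=(1-\varepsilon)(\mu-p)R.
\]
A Markov kernel does not increase total variation, so $\|\mu K-p\|_{\rm TV}\le(1-\varepsilon)\|\mu-p\|_{\rm TV}$. Iterating $n$ times from $\mu_0$ gives~\eqref{eq:geom}.

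The only point needing care is the diagonal term in the minorization, together with the decomposition of $K$ when $\varepsilon=1$; both are handled above, and the rest is routine.
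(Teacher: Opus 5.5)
Your proposal is correct and follows essentially the same route as the paper's proof: detailed balance for stationarity, the Doeblin minorization $K(y,\cdot)\ge\varepsilon p$ with the diagonal handled by the rejection mass, and the residual-kernel decomposition $K=\varepsilon\mathbf 1p+(1-\varepsilon)R$ combined with TV non-expansion of $R$. The only cosmetic difference is that you bound $\varepsilon\le1$ via $\max_y p(y)/q(y)\ge1$, whereas the paper uses $Z=\E_q\rho\le\max_y\rho(y)$; these are equivalent.
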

Here $\|\mu-\nu\|_{\rm TV}=\tfrac12\sum_y|\mu(y)-\nu(y)|$ is the total variation distance.

\begin{proof}
Write $\rho_{\max}=\max_y\rho(y)$. Positivity on the finite support gives $0<Z=\E_q\rho\le\rho_{\max}$, so $\varepsilon\in(0,1]$. For distinct $y,y'$, the MH kernel satisfies
\[
    K(y,y')=q(y')\min\{1,\rho(y')/\rho(y)\},\qquad
    p(y)K(y,y')=\frac{q(y)q(y')}{Z}\min\{\rho(y),\rho(y')\}.
\]
The last expression is symmetric. Detailed balance therefore holds, and summing it over the initial state gives $pK=p$.

For every $y,y'$, including the diagonal, where rejected proposals only add mass,
\begin{equation}\label{eq:minorization}
    K(y,y')\ge q(y')\min\{1,\rho(y')/\rho(y)\}
    \ge\frac{q(y')\rho(y')}{\rho_{\max}}=\varepsilon p(y').
\end{equation}
If $\varepsilon=1$, every row of $K$ equals $p$, and one update reaches the target. Otherwise define the residual Markov kernel
\[
    R(y,y')=\frac{K(y,y')-\varepsilon p(y')}{1-\varepsilon}.
\]
Its entries are nonnegative, its rows sum to $1$, and $pR=p$. Thus each update is a draw from $p$ with probability $\varepsilon$ and an $R$-transition otherwise. The triangle inequality and the row sums of $R$ give
\[
    \sum_{y'}\left|\sum_y(\mu(y)-\nu(y))R(y,y')\right|
    \le\sum_y|\mu(y)-\nu(y)|.
\]
Consequently,
\[
    \|\mu K-p\|_{\rm TV}
    =(1-\varepsilon)\|\mu R-pR\|_{\rm TV}
    \le(1-\varepsilon)\|\mu-p\|_{\rm TV}.
\]
Iteration proves~\eqref{eq:geom}.
\end{proof}

\subsection{Proof of Proposition~\ref{prop:monotone}}
\label{app:proof-certificate}
\label{app:proof-monotone}
\label{app:budget-convexity}

We first establish three auxiliary facts: data processing under a stationary
kernel, a threshold representation of the MH output law, and convexity of
its divergence in the candidate budget. We then combine them with a
recorded-weight bound to prove Proposition~\ref{prop:monotone}.
The objective-gap identity~\eqref{eq:objective-gap} covers both consensus
and, under the exact-specialist assumptions, the external reward objective.
The consensus objective is also the zero-reward case of multi-reference
KL-regularized RLHF \citep{aminian2025multiple}.

\begin{lemma}[Data processing under a stationary kernel]\label{lem:stationary-data-processing}
Let $K$ be the independence-MH kernel with stationary law $p$. For every
positive response law $\mu$ and every convex generator $f$ considered in
Theorem~\ref{thm:dominance},
\[
    D_f(\mu K\|p)\le D_f(\mu\|p).
\]
For KL, the inequality also holds when $\mu$ assigns probability zero to
some responses.
\end{lemma}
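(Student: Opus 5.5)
The plan is the standard joint-convexity (data-processing) argument for $f$-divergences, with stationarity $pK=p$ from Proposition~\ref{prop:convergence} as the only input about $K$. Write
\[
    (\mu K)(y')=\sum_y\mu(y)K(y,y'),\qquad p(y')=\sum_y p(y)K(y,y').
\]
The first step is to introduce, for each $y'$ with $p(y')>0$, the probability weights
\[
    \lambda_{y'}(y)=\frac{p(y)K(y,y')}{p(y')}.
\]
By stationarity these sum to one over $y$, so $\lambda_{y'}$ is the time-reversal kernel $K^*(y',\cdot)$. Because $p$ is positive on the common support, each $p(y')>0$. Then
\[
    \frac{(\mu K)(y')}{p(y')}=\sum_y\lambda_{y'}(y)\,\frac{\mu(y)}{p(y)},
\]
a convex combination of the ratios $\mu(y)/p(y)$.

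The second step is Jensen's inequality for the finite convex $f$:
\[
    f\!\left(\frac{(\mu K)(y')}{p(y')}\right)\le\sum_y\lambda_{y'}(y)\,f\!\left(\frac{\mu(y)}{p(y)}\right).
\]
Multiplying by $p(y')$ and summing over $y'$ gives
\[
    D_f(\mu K\|p)\le\sum_y p(y)\,f\!\left(\frac{\mu(y)}{p(y)}\right)\sum_{y'}K(y,y').
\]
The rows of $K$ sum to one, so the right side is $D_f(\mu\|p)$. All arguments of $f$ lie in $(0,\infty)$ when $\mu$ is positive. Moreover, $\mu K$ is positive, since $K(y,y')\ge\varepsilon p(y')>0$ by the minorization~\eqref{eq:minorization}. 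Hence every term is finite and the definition~\eqref{eq:f-divergence} applies directly. The reverse-KL direction is covered as $f(u)=-\log u$.

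The only delicate point is the KL case when $\mu$ has zeros, where ratios equal to $0$ lie outside $(0,\infty)$. I would handle it by extending $f(u)=u\log u$ continuously to $[0,\infty)$ with $f(0)=0$. This extension is still convex and matches the stated convention that terms with $\mu(y)=0$ are zero, and Jensen's inequality holds verbatim on $[0,\infty)$. The right-hand side is finite because $p$ is positive, so no $+\infty$ cases arise. Alternatively, one could apply the positive case to $\mu_\eta=(1-\eta)\mu+\eta p$ and let $\eta\to0$, using continuity of KL in its first argument on a finite space. I do not expect any step to be a genuine obstacle: the argument needs only stationarity, row-stochasticity, and positivity of $p$, not reversibility.
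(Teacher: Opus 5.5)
Your proof is correct and is the paper's argument written out explicitly: your weights $\lambda_{y'}(y)=p(y)K(y,y')/p(y')$ are the conditional law of $Y$ given $Y'=y'$ under the stationary joint law $p(y)K(y,y')$, so your pointwise Jensen step is the paper's conditional Jensen inequality, and it uses only $pK=p$ and row-stochasticity. For KL with zeros, the paper uses the continuity extension (your second option), and your direct extension of $u\log u$ by $f(0)=0$ works equally well.
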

\begin{proof}
We establish data processing for the convex generators used throughout the paper. For a positive law $\mu$, put $r(y)=\mu(y)/p(y)$ and let $(Y,Y')$ have the stationary joint law $p(y)K(y,y')$. Since $pK=p$,
\[
    \frac{(\mu K)(y')}{p(y')}
    =\E[r(Y)\mid Y'=y'].
\]
The conditional Jensen inequality therefore gives
\[
    D_f(\mu K\|p)
    \le\E[f(r(Y))]
    =D_f(\mu\|p).
\]
For KL, continuity extends this inequality to initial laws that assign probability $0$ to some responses.
\end{proof}

For the threshold representation, write
\[
    B(u)=\E_q(1-u\rho(Y))_+,
    \qquad F_\rho(t)=p\{\rho(Y)\le t\}.
\]
Here $B$ is the function in~\eqref{eq:priority-density}, and $F_\rho$ is the CDF of $\rho(Y)$ under $Y\sim p$.

\begin{lemma}[Threshold representation]\label{lem:threshold}
Let $Y_1,\ldots,Y_M$ be iid $q$, and let $U_1,\ldots,U_M$ be iid uniform on $(0,1)$, independent of the candidates. Set
\[
    J=\argmax_{j\le M}\frac{\rho(Y_j)}{U_j},
    \qquad \Lambda_M=\max_{j\le M}\frac{\rho(Y_j)}{U_j}.
\]
For $t>0$, $\Pr(\Lambda_M\le t)=B(1/t)^M$. For almost every $t$ in the support of $\Lambda_M$, the conditional law of $Y_J$ given $\Lambda_M=t$ is $p(\cdot\mid\rho\le t)$. In particular,
\begin{equation}\label{eq:threshold}
    H_M(y)=p(y)\,\E\!\left[
    \frac{\mathbf 1\{\rho(y)\le\Lambda_M\}}{F_\rho(\Lambda_M)}\right].
\end{equation}
\end{lemma}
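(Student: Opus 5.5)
We work directly with the priority keys $K_j=\rho(Y_j)/U_j$ and compute the joint law of $(Y_J,\Lambda_M)$. For a single key and $t>0$, $\Pr(\rho(Y)/U\le t)=\E_q\Pr(U\ge\rho(Y)/t)=\E_q(1-\rho(Y)/t)_+=B(1/t)$. Independence of the $M$ pairs then gives $\Pr(\Lambda_M\le t)=B(1/t)^M$ at once.

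For the conditional law, fix $y$ and compute the density of the event $\{Y_J=y,\ \Lambda_M\in dt\}$. By exchangeability, this is $M$ times the density of candidate $1$ being $y$ with key in $dt$ and all other keys below $t$.
\begin{itemize}
\item Given $Y_1=y$, the key $\rho(y)/U_1$ has density $\rho(y)/t^2$ on $t>\rho(y)$ and zero for $t<\rho(y)$. This is because $U_1=\rho(y)/t$ must lie in $(0,1)$.
\item The remaining $M-1$ keys are independent and all fall below $t$ with probability $B(1/t)^{M-1}$.
\end{itemize}
The joint density is therefore
\[
M\,q(y)\,\frac{\rho(y)}{t^2}\,\mathbf 1\{\rho(y)\le t\}\,B(1/t)^{M-1}
= p(y)\mathbf 1\{\rho(y)\le t\}\cdot\frac{MZ}{t^2}B(1/t)^{M-1},
\]
using $q\rho=Zp$ from Lemma~\ref{lem:tilt}.

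As a function of $y$, this factorizes into $p(y)\mathbf 1\{\rho(y)\le t\}$ times a factor depending only on $t$. Normalizing in $y$ shows that the conditional law of $Y_J$ given $\Lambda_M=t$ is $p(\cdot\mid\rho\le t)$. This holds wherever the marginal density of $\Lambda_M$ is positive, i.e.\ where $F_\rho(t)>0$, which is almost every $t$ in the support. The ties at $t=\rho(y)$ form a null set.

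It remains to check consistency with the marginal. Summing the joint density over $y$ gives $\frac{MZ}{t^2}F_\rho(t)B(1/t)^{M-1}$. This should equal $\frac{d}{dt}B(1/t)^M=M B(1/t)^{M-1}\cdot\frac{1}{t^2}\E_q[\rho\,\mathbf 1\{\rho\le t\}]$. The two agree, since $\E_q[\rho\mathbf 1\{\rho\le t\}]=ZF_\rho(t)$.

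Finally, Lemma~\ref{lem:priority} identifies the law of $Y_J$ as $H_M$. Averaging the conditional law over $\Lambda_M$ gives
\[
H_M(y)=\E\bigl[p(y)\mathbf 1\{\rho(y)\le\Lambda_M\}/F_\rho(\Lambda_M)\bigr],
\]
which is \eqref{eq:threshold}.

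The main care point is the measure-theoretic bookkeeping. $\Lambda_M$ is absolutely continuous, because its CDF $B(1/t)^M$ is piecewise smooth and continuous; this continuity follows from $B(u)$ being continuous in $u$. Conditioning is therefore via densities, and the denominator $F_\rho(\Lambda_M)$ is almost surely positive because $\Lambda_M\ge\rho(Y_J)\ge\min\rho$. No other obstacle is expected, since the argument mirrors the per-pool computation in Lemma~\ref{lem:priority}.
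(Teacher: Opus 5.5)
Your proposal is correct and follows essentially the same route as the paper: you derive the single-key CDF $B(1/t)$, obtain the joint density $Mq(y)\rho(y)t^{-2}\mathbf 1\{\rho(y)<t\}B(1/t)^{M-1}$ by exchangeability, divide by the marginal $MZF_\rho(t)t^{-2}B(1/t)^{M-1}$, and invoke Lemma~\ref{lem:priority} to identify the law of $Y_J$ with $H_M$. Your additional check that this marginal agrees with the derivative of $B(1/t)^M$ is a harmless consistency verification that the paper omits, since summing the joint density over $y$ already yields the marginal.
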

\begin{proof}
A single priority key satisfies
\[
    \Pr\!\left\{\frac{\rho(Y)}{U}\le t\right\}
    =\E_q(1-\rho(Y)/t)_+=B(1/t).
\]
Independence gives the CDF of $\Lambda_M$. Given $Y_1=y$, the first key has density $\rho(y)t^{-2}$ on $t>\rho(y)$. Candidate $1$ is then selected with $\Lambda_M\in dt$ exactly when its key lies in $dt$ and every other key is at most $t$. By exchangeability, for $t>\min_y\rho(y)$,
\[
    \Pr(Y_J=y,\Lambda_M\in dt)
    =M q(y)\rho(y)t^{-2}\mathbf 1\{\rho(y)<t\}
    B(1/t)^{M-1}\,dt.
\]
Summing over $y$ gives the density $MZ F_\rho(t)t^{-2}B(1/t)^{M-1}$ of $\Lambda_M$, apart from the finitely many weight values, which have probability $0$. Dividing the joint density by this marginal density yields
\[
    \Pr(Y_J=y\mid\Lambda_M=t)
    =\frac{p(y)\mathbf 1\{\rho(y)\le t\}}{F_\rho(t)}.
\]
The denominator is positive almost surely. Integrating over $t$ and applying Lemma~\ref{lem:priority} proves~\eqref{eq:threshold}.
\end{proof}

\begin{lemma}[Convexity in the candidate budget]\label{lem:budget-convexity}
Under the assumptions of Theorem~\ref{thm:dominance}, put $d_M=D_f(H_M\|p)$. Then $d_M$ is nonincreasing and
\begin{equation}\label{eq:budget-convexity}
    2d_M\le d_{M-1}+d_{M+1},\qquad M\ge2.
\end{equation}
Moreover, let $\mathsf B\ge1$ be an integer-valued candidate budget independent of all proposals and selection randomness, with $\E\mathsf B\le M$. The marginal output law of MH run with this random budget, $\overline H=\sum_{b\ge1}\Pr(\mathsf B=b)H_b$, satisfies
\begin{equation}\label{eq:random-budget}
    D_f(H_M\|p)\le D_f(\overline H\|p).
\end{equation}
\end{lemma}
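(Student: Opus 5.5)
The plan is to prove monotonicity first, and then reduce both the convexity claim~\eqref{eq:budget-convexity} and the random-budget claim~\eqref{eq:random-budget} to a majorization statement about the selected-weight laws.

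\textbf{Monotonicity.} This follows at once from $H_{M+1}=H_MK$ and Lemma~\ref{lem:stationary-data-processing}, since $H_M$ is positive by~\eqref{eq:positivity}.

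\textbf{Reduction to weight levels.} By Step~1 of the proof of Theorem~\ref{thm:dominance}, every law $\mu$ considered here satisfies $D_f(\mu\|p)=D_f(\mathsf U\|\Pi)$, where $\mathsf U$ is the law of $\rho$ under $\mu$. This identity~\eqref{app:weight-reduction} uses only that $\mu(y)/p(y)$ depends on $y$ through $\rho(y)$. That property holds for each $H_b$, and therefore also for any mixture of the $H_b$, since the ratio is linear in $\mu$.

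Now compare two such laws $\mathsf U$ and $\mathsf V$ whose density ratios $u=\mathsf U/\Pi$ and $v=\mathsf V/\Pi$ are both nonincreasing in $r$. The mixtures $\tfrac12(H_{M-1}+H_{M+1})$ and $\overline H$ have this property, because each $h_b$ is nonincreasing by~\eqref{eq:priority-density}. I will use the Hardy--Littlewood--Pólya / Karamata comparison. If
\[
\mathsf U((0,r])\le \mathsf V((0,r]) \quad\text{for every } r,
\]
with equality at $r=\max\mathcal R$, then $D_f(\mathsf U\|\Pi)\le D_f(\mathsf V\|\Pi)$. To prove it, I will reuse the supporting-line argument of~\eqref{app:divergence-order} with $b=f'_+(u)$, which is nonincreasing. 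Abel summation then turns the lower-set inequality into $\E_{\mathsf U}b\le \E_{\mathsf V}b$.

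\textbf{Lower-set probabilities via the threshold representation.} By Lemma~\ref{lem:threshold},
\[
H_M(\rho\le r)=\E\,\psi_r(\Lambda_M),\qquad \psi_r(t)=\frac{\min\{F_\rho(r),F_\rho(t)\}}{F_\rho(t)},
\]
and $\psi_r$ is bounded and nonincreasing in $t$. A Stieltjes integration by parts gives
\[
\E\,\psi_r(\Lambda_M)=\psi_r(\infty)+\int \Pr(\Lambda_M\le t)\,\bigl(-d\psi_r(t)\bigr),
\]
where $-d\psi_r$ is a nonnegative measure and $\Pr(\Lambda_M\le t)=B(1/t)^M$.

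For each fixed $t$, set $c=B(1/t)\in[0,1]$. The map $M\mapsto c^M$ is convex and nonincreasing on the integers, so $M\mapsto H_M(\rho\le r)$ is convex for every $r$. This gives
\[
2H_M(\rho\le r)\le H_{M-1}(\rho\le r)+H_{M+1}(\rho\le r),
\]
with equality at the full set. The majorization comparison, applied with $\mathsf U$ the weight law of $H_M$ and $\mathsf V$ that of $\tfrac12(H_{M-1}+H_{M+1})$, yields $d_M\le D_f\bigl(\tfrac12(H_{M-1}+H_{M+1})\|p\bigr)$. Convexity of $D_f$ in its first argument then gives~\eqref{eq:budget-convexity}.

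\textbf{Random budget.} The same computation applies with $c^M$ replaced by $\E\,c^{\mathsf B}$. Jensen's inequality, followed by the fact that $c\le1$ and $\E\mathsf B\le M$, gives
\[
\E\,c^{\mathsf B}\ge c^{\E\mathsf B}\ge c^M.
\]
Hence $\overline H(\rho\le r)\ge H_M(\rho\le r)$ for all $r$, and the majorization comparison gives~\eqref{eq:random-budget}.

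\textbf{Expected obstacle.} The main difficulty is the careful justification of the majorization comparison together with the integration by parts. Three points need care: $F_\rho$ and $\psi_r$ are step functions, $\Lambda_M$ has atoms only on a null set, and the endpoint term $\psi_r(\infty)=F_\rho(r)$ must be checked against the limits of $B$ at $0$ and at $1/\min\mathcal R$. I would first handle the case of distinct weight levels by explicit Abel summation over the finitely many levels, and then note that ties among responses are absorbed by working at the level of weights.
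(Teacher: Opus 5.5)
Your proof is correct and follows the paper's argument. It uses the threshold representation, then convexity of $b\mapsto B(1/t)^b$ (Jensen) to order the maximal keys, then monotonicity of the truncated conditional law to transfer this ordering to the selected weight, and finally the supporting-line inequality with $f'_+(h_M)$. Your Stieltjes integration by parts against $-d\psi_r$ is the lower-set form of the paper's observation that $t\mapsto\E_p[g(\rho)\mid\rho\le t]$ is nondecreasing.

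The only organizational difference is that the paper proves \eqref{eq:random-budget} first. It then obtains \eqref{eq:budget-convexity} as the special case $\mathsf B\in\{M-1,M+1\}$ with equal probabilities, so your separate convexity computation is not needed.
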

The right-hand side is the divergence of the mixture law, after averaging over the budget. The lemma concerns proposal initialization and a pool-independent budget. Stopping rules that depend on the observed candidates are not covered and can yield greater error than the proposal itself. For example, with $q=(1/2,1/2)$ and $\rho=(1/2,1)$, so that $p=(1/3,2/3)$, stopping after the first candidate when its weight is $1/2$ and otherwise applying one MH update returns $(5/8,3/8)$, which is farther from $p$ than $q$.

\begin{proof}
We first compare fixed and random budgets, then obtain discrete convexity from a two-point budget distribution.

\paragraph{Independent budget randomization.}
Let $\Lambda_{\mathsf B}$ be the largest priority key at budget $\mathsf B$. Jensen's inequality for the convex map $b\mapsto B(1/t)^b$, followed by $\E\mathsf B\le M$, gives
\[
    \Pr(\Lambda_{\mathsf B}\le t)
    =\E[B(1/t)^{\mathsf B}]
    \ge B(1/t)^{\E\mathsf B}
    \ge B(1/t)^M
    =\Pr(\Lambda_M\le t).
\]
Hence $\Lambda_{\mathsf B}$ is stochastically smaller than $\Lambda_M$. For any nondecreasing function $g$ on the weight support, the truncated mean
\[
    t\longmapsto\E_p[g(\rho)\mid\rho\le t]
\]
is nondecreasing wherever $F_\rho(t)>0$: enlarging the threshold adds only weights at least as large as those already included. The conditional law in Lemma~\ref{lem:threshold} depends on the threshold but not on the budget. Applying the lemma and averaging over $\mathsf B$ therefore gives
\[
    \E_{\overline H}g(\rho)
    =\E\,\E_p[g(\rho)\mid\rho\le\Lambda_{\mathsf B}]
    \le\E\,\E_p[g(\rho)\mid\rho\le\Lambda_M]
    =\E_{H_M}g(\rho).
\]

Every $H_b$ and their mixture preserve the target's conditional response law at each weight, as in~\eqref{app:weight-reduction}. The density ratio $h_M(r)$ in~\eqref{eq:priority-density} is nonincreasing in $r$, so $\psi(r)=f'_+(h_M(r))$ is also nonincreasing. The supporting-line argument of~\eqref{app:divergence-order} now gives
\[
    D_f(\overline H\|p)-D_f(H_M\|p)
    \ge\E_{\overline H}\psi(\rho)-\E_{H_M}\psi(\rho)
    \ge0.
\]
This proves~\eqref{eq:random-budget} for the marginal mixture law.

\paragraph{Discrete convexity.}
For $M\ge2$, choose $\mathsf B=M-1$ or $M+1$ with equal probabilities. Equation~\eqref{eq:random-budget} and convexity of $D_f$ in its first argument imply
\[
    d_M\le D_f\!\left(\frac{H_{M-1}+H_{M+1}}2\middle\|p\right)
    \le\frac{d_{M-1}+d_{M+1}}2.
\]
Monotonicity follows by applying Lemma~\ref{lem:stationary-data-processing} to $H_{M+1}=H_MK$.
\end{proof}

\begin{proof}[Proof of Proposition~\ref{prop:monotone}]
Fix either objective in~\eqref{eq:objective-gap}, with scale $\lambda>0$.
Write $d_M=\KL(H_M\|p)$.

\paragraph{Nonnegative, diminishing gains.}
Lemma~\ref{lem:stationary-data-processing} gives
$d_{M+1}=\KL(H_MK\|pK)\le d_M$.
Lemma~\ref{lem:budget-convexity} also gives
$2d_{M+1}\le d_M+d_{M+2}$. Hence
\[
 0\le J(H_{M+2})-J(H_{M+1})
 =\lambda(d_{M+1}-d_{M+2})
 \le\lambda(d_M-d_{M+1})
 =J(H_{M+1})-J(H_M).
\]

\paragraph{Recorded-weight lower bound.}
We use the augmented-space construction of \citet{domke2018importance}. Let $\mathcal Q$ be the joint law of a proposal pool and its SIR-selected index. Compare it with $\mathcal P$, which chooses an index uniformly, draws that candidate from $p$, and draws the others independently from $q$:
\begin{align*}
  \mathcal Q(y_{1:M},i)
  &=\left(\prod_{j=1}^M q(y_j)\right)
    \frac{\rho(y_i)}{\sum_{j=1}^M\rho(y_j)}, \\
  \mathcal P(y_{1:M},i)
  &=\frac{1}{M}p(y_i)\prod_{j\ne i}q(y_j).
\end{align*}
Both are probability laws. Under the selected-response map $(y_{1:M},i)\mapsto y_i$, their marginals are $S_M$ and $p$, respectively. The identity $p=q\rho/Z$ gives
\[
  \frac{\mathcal Q(y_{1:M},i)}{\mathcal P(y_{1:M},i)}
  =\frac{MZ}{\sum_{j=1}^M\rho(y_j)}.
\]
This ratio does not depend on the selected index, and the pool marginal of $\mathcal Q$ is $q^M$. Data processing therefore yields
\begin{align}
  \KL(S_M\|p)
  &\le \KL(\mathcal Q\|\mathcal P) \nonumber\\
  &=\log Z-
    \E_{q^M}\left[\log\left(\frac{1}{M}\sum_{j=1}^M\rho(Y_j)\right)\right].
  \label{app:sir-kl-bound}
\end{align}
This is the importance-weighted variational bound in its SIR form \citep{cremer2017reinterpreting,domke2018importance}. Theorem~\ref{thm:dominance} gives the same upper bound for $\KL(H_M\|p)$, while
\begin{equation}
  \KL(q\|p)=\log Z-\E_q\log\rho.
  \label{app:q-kl}
\end{equation}
Subtracting~\eqref{app:sir-kl-bound} from~\eqref{app:q-kl} cancels $\log Z$. By the objective-gap identity,
\begin{align}
  J(H_M)-J(q)
  &\ge \lambda\,\E_{q^M}\left[
    \log\left(\frac{1}{M}\sum_{j=1}^M\rho(Y_j)\right)
    -\frac{1}{M}\sum_{j=1}^M\log\rho(Y_j)
  \right] \nonumber\\
  &=\lambda\,\E_{q^M}G_M.
  \label{app:certificate-bound}
\end{align}
The same lower bound holds for $J(S_M)-J(q)$. Concavity of $\log$ gives $G_M\ge0$, with equality exactly when all weights in the pool are equal. For $M=1$, both selectors return a draw from $q$ and the bound equals $0$.
Finally, take $J=\widetilde J_x$ and $\lambda=1$ for consensus.
Under the exact-specialist assumptions,~\eqref{eq:product} and
\eqref{eq:objective-gap} give the same conclusions with
$J=J_{w,x}^{\rm ext}$ and $\lambda=\beta$.
\end{proof}

\paragraph{Certificate dependence on budget.}
Write $\bar\rho_M=M^{-1}\sum_{j=1}^M\rho(Y_j)$, so that $\E G_M=\E\log\bar\rho_M-\E_q\log\rho$. By the importance-weighted bound of \citet[Theorem~1]{burda2016importance}, $\E\log\bar\rho_M$ is nondecreasing in $M$ and converges to $\log Z$, because $\rho$ is bounded above and away from $0$ on the finite support. Hence $\E_{q^M}G_M$ is nondecreasing in $M$ and converges to $\log Z-\E_q\log\rho=\KL(q\|p)$. The certificate is therefore asymptotically tight: the improvement of either method is also at most $\lambda\KL(q\|p)$.

\paragraph{Confidence bounds for a fixed prompt list.}
Let $x_1,\ldots,x_L$ be fixed prompts, each satisfying the proposition's assumptions, with a common scale $\lambda>0$ in the objective-gap identity. For $A\in\{H,S\}$, define the equally weighted prompt averages of the improvement and of the expected pool gap by
\[
  \overline\Delta_M^A
  =\frac1L\sum_{\ell=1}^L
    \bigl[J_{x_\ell}(A_{M,x_\ell})-J_{x_\ell}(q_{x_\ell})\bigr],
  \qquad
  \overline g_M
  =\frac1L\sum_{\ell=1}^L\E_{q_{x_\ell}^M}G_{M,x_\ell}.
\]
Applying~\eqref{app:certificate-bound} at each prompt gives $\overline\Delta_M^A\ge\lambda\overline g_M\ge0$ for both selectors.

Fix a nonempty finite budget set $\mathcal M\subseteq\{1,2,\ldots\}$, a group count $K_0\ge1$, a clipping cap $\Gamma>0$, and an error level $\alpha\in(0,1)$, all chosen independently of the evaluation pools. At each prompt, generate $K_0$ independent pools of $\max\mathcal M$ proposals from $q_{x_\ell}$, independently across prompts. Evaluate each budget $M\in\mathcal M$ using the first $M$ candidates of each pool and their exact recorded weights. With $N=LK_0$, set
\[
  X_{\ell k}^{(M)}=\min\{G_{M,x_\ell}^{(k)},\Gamma\},
  \qquad
  \widehat g_{M,\Gamma}
  =\frac1N\sum_{\ell=1}^L\sum_{k=1}^{K_0}X_{\ell k}^{(M)}.
\]
For each $M$, these $N$ variables are independent and lie in $[0,\Gamma]$, although their distributions may differ across prompts. Equal group counts and clipping give $\E\widehat g_{M,\Gamma}\le\overline g_M$. Hoeffding's inequality for independent bounded variables implies
\[
  \Pr\!\left(\widehat g_{M,\Gamma}-\overline g_M\ge t\right)
  \le \exp\!\left(-\frac{2Nt^2}{\Gamma^2}\right),
  \qquad t>0.
\]
Use failure probability $\alpha/|\mathcal M|$ for each budget and take a union bound. Combining the resulting event with the pointwise certificate yields, with probability at least $1-\alpha$,
\begin{equation}
  \overline\Delta_M^A
  \ge \lambda\max\!\left\{0,\widehat g_{M,\Gamma}
    -\Gamma\sqrt{\frac{\log(|\mathcal M|/\alpha)}{2N}}\right\},
  \qquad M\in\mathcal M,\quad A\in\{H,S\}.
  \label{eq:certificate-confidence}
\end{equation}

Taking $L=1$ and $\mathcal M=\{M\}$ gives the bound for a single prompt and a single budget. Reusing candidate prefixes across budgets is valid because the union bound requires no cross-budget independence. The same event covers both selectors. Coverage is conditional on the fixed prompts and checkpoints: it concerns the objective improvement of the marginal output laws, averaged over these prompts and over fresh proposal pools, and not a realized response or a newly sampled prompt set.

\subsection{Proof of Theorem~\ref{thm:nearby}}
\label{app:proof-nearby}

We combine a quadratic expansion of local log normalizers with an operator form of the classical independence-MH transition formula \citep{smith1996exact,liu1996metropolized} and the SIR marginal formula \citep{cremer2017reinterpreting}.

\begin{lemma}[Exact MH residual]\label{lem:mh-residual}
For a fixed positive finite-support instance, put $\omega=q/p=Z/\rho$ and define
\[
 (\mathcal L_\omega f)(y)
 =\E_p[(\omega(X)-\omega(y))_+]f(y)
  -\E_p[f(X)(\omega(y)-\omega(X))_+].
\]
Then $H_M/p-1=-\mathcal L_\omega^M\mathbf 1$ for every $M\ge1$.
\end{lemma}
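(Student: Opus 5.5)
The plan is to prove the identity by induction on $M$, using $H_{M+1}=H_MK$ and rewriting one MH step as an operator acting on the density ratio $r_M=H_M/p$. The key observation I expect to establish is that on functions with $p$-mean zero, $K$ acts exactly as $\mathcal L_\omega$. The identity then falls out by iterating that operator starting from the $M=1$ ratio.

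\textbf{Base case ($M=1$).} Here $H_1=q$, so $r_1=q/p=\omega$. Since $\E_p\omega=\E_q1=1$, and $(a-b)_+-(b-a)_+=a-b$,
\[
(\mathcal L_\omega\mathbf 1)(y)=\E_p[\omega(X)-\omega(y)]=1-\omega(y).
\]
Hence $r_1-1=-\mathcal L_\omega\mathbf 1$.

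\textbf{Ratios evolve under $K$.} By detailed balance (Proposition~\ref{prop:convergence}), $p(y)K(y,y')=p(y')K(y',y)$. Therefore
\[
\frac{(\mu K)(y')}{p(y')}=\sum_y K(y',y)\,\frac{\mu(y)}{p(y)} .
\]
So $r_{M+1}=Kr_M$, where $K$ acts on functions by $(Kf)(y)=\sum_{y'}K(y,y')f(y')$.

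\textbf{Writing $K$ in terms of $\omega$.} For $y'\ne y$,
\[
K(y,y')=q(y')\min\{1,\rho(y')/\rho(y)\}=p(y')\,\omega(y')\min\{1,\omega(y)/\omega(y')\}=p(y')\min\{\omega(y),\omega(y')\}.
\]
The rejected mass stays at $y$, which gives
\[
(Kf)(y)=\E_p\bigl[\min\{\omega(X),\omega(y)\}f(X)\bigr]+\bigl(1-\E_p\min\{\omega(X),\omega(y)\}\bigr)f(y).
\]
This formula also handles the diagonal consistently, since the $y'=y$ proposal contributes $p(y)\omega(y)$ either way. Now substitute $\min\{a,b\}=b-(b-a)_+$ with $b=\omega(y)$, and use $1-\omega(y)=\E_p(\omega(X)-\omega(y))_+-\E_p(\omega(y)-\omega(X))_+$. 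The no-move coefficient becomes $\E_p(\omega(X)-\omega(y))_+$, and the first term becomes $\omega(y)\E_pf-\E_p[f(X)(\omega(y)-\omega(X))_+]$. Altogether,
\[
Kf=\omega\,\E_pf+\mathcal L_\omega f .
\]

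\textbf{Induction step.} Because $K\mathbf 1=\mathbf 1$ and $\E_p(r_M-1)=0$, applying the formula to $f=r_M-1$ gives
\[
r_{M+1}-1=K(r_M-1)=\mathcal L_\omega(r_M-1).
\]
Combined with the base case, this yields $r_M-1=-\mathcal L_\omega^M\mathbf 1$ for every $M\ge1$.

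The main obstacle is the bookkeeping in the $\omega$-representation of $K$: handling the diagonal and rejection term correctly, and spotting that the mean-zero property of $r_M-1$ is what kills the extra $\omega\,\E_pf$ term. Everything else is routine algebra.
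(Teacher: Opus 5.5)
Your proof is correct and essentially follows the paper's route. The paper splits $K=Q+\Delta$, where $Q$ is the rank-one proposal kernel; it notes that $Q$ annihilates mass-zero signed measures and computes $(pf)\Delta=p\,\mathcal L_\omega f$ directly. That is your identity $Kf=\omega\,\E_pf+\mathcal L_\omega f$ read on density ratios, with the mean-zero property of $r_M-1$ killing the rank-one term. The only cosmetic difference is that you invoke detailed balance to pass from the action on measures to the action on functions, whereas the paper's left-action computation needs only stationarity $pK=p$.
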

\begin{proof}
Let $Q$ have every row equal to $q$ and set $\Delta=K-Q$. For a signed
measure $\nu$ of total mass $0$, $\nu K=\nu\Delta$, and $\nu\Delta$
also has total mass $0$. Stationarity of $p$ therefore gives
$H_M-p=(q-p)\Delta^{M-1}$. A direct calculation gives
\[
 \Delta(x,y)=-p(y)(\omega(y)-\omega(x))_+\quad(x\ne y),
 \qquad \Delta(x,x)=\E_p[(\omega(X)-\omega(x))_+].
\]
Thus $(pf)\Delta=p\mathcal L_\omega f$. Since $\E_p\omega=1$,
$\mathcal L_\omega\mathbf 1=1-\omega$, and iteration proves the identity.
\end{proof}

\begin{proof}[Proof of Theorem~\ref{thm:nearby}]
Fix the family in Theorem~\ref{thm:nearby}: the finite support tree, horizon $T$, common policy $\pi_0$, weights, perturbations $a_i$, and candidate budget $M$ do not vary with $\delta$. Let $P_0$ be the complete-response law induced by $\pi_0$. All remainders below are uniform over responses on this fixed support. The finite exponential-tilt family satisfies $q_\delta=P_0+O(|\delta|)$ and $p_\delta=P_0+O(|\delta|)$.

\paragraph{Quadratic log correction.}
Write $\bar a(v,h)=\sum_iw_i a_i(v,h)$. The component normalizers cancel from the local proposal, giving
\[
  q_\delta(v\mid h)
  =\frac{\pi_0(v\mid h)e^{\delta\bar a(v,h)}}
  {\E_{V\sim\pi_0(\cdot\mid h)}e^{\delta\bar a(V,h)}}.
\]
At a fixed prefix $h$, define the log moment-generating functions
\[
  A_i(\delta;h)=\log\E_{\pi_0(\cdot\mid h)}e^{\delta a_i},
  \qquad
  A_{\bar a}(\delta;h)=\log\E_{\pi_0(\cdot\mid h)}e^{\delta\bar a}.
\]
The local normalizer therefore satisfies
\[
  \log z_\delta(h)=A_{\bar a}(\delta;h)-\sum_iw_iA_i(\delta;h).
\]
Taylor expansion on the finite support gives
\[
  A_i(\delta;h)=\delta\E a_i
  +\frac{\delta^2}{2}\var(a_i)+O(|\delta|^3),
\]
with the analogous expansion for $A_{\bar a}$. Here expectations and variances are under $\pi_0(\cdot\mid h)$. The linear terms cancel, and the quadratic terms satisfy
\[
  \sum_iw_i\var(a_i)-\var(\bar a)
  =\sum_iw_i\var(a_i-\bar a).
\]
For a complete response $y$ with prefixes $h_t$, set
\begin{equation}\label{eq:disagreement-coefficient}
  D(y)=\sum_{t=1}^T\sum_iw_i
  \var_{V\sim\pi_0(\cdot\mid h_t)}
  \left[a_i(V,h_t)-\bar a(V,h_t)\right],
  \qquad
  C=\frac18\var_{Y\sim P_0}[D(Y)].
\end{equation}
Thus $D(y)$ accumulates the weighted variance of the component perturbations about their average along the response. Deterministic EOS padding contributes $0$ to these variances. Summing the local expansions gives
\begin{equation}
  \ell_\delta(y):=\log\rho_\delta(y)
  =-\frac{\delta^2}{2}D(y)+O(|\delta|^3).
  \label{app:log-rho-expansion}
\end{equation}

\paragraph{Leading proposal bias.}
By Lemma~\ref{lem:tilt}, $p_\delta=q_\delta\rho_\delta/Z_\delta$, where $Z_\delta=\E_{q_\delta}\rho_\delta$. Hence
\begin{align}
  \KL(q_\delta\|p_\delta)
  &=\log\E_{q_\delta}e^{\ell_\delta}
    -\E_{q_\delta}\ell_\delta \nonumber\\
  &=\frac12\var_{q_\delta}(\ell_\delta)+O(|\delta|^6) \nonumber\\
  &=\frac{\delta^4}{8}\var_{P_0}(D)+O(|\delta|^5)
   =C\delta^4+O(|\delta|^5).
  \label{app:q-kl-expansion}
\end{align}
The second line follows from $\|\ell_\delta\|_\infty=O(\delta^2)$: the remainder in the log moment-generating expansion is cubic in $\ell_\delta$. The final remainder also accounts for the cubic term in~\eqref{app:log-rho-expansion} and the change of measure $q_\delta=P_0+O(|\delta|)$. The coefficient $C$ therefore measures variation of $D$ across responses. If $D$ is constant, the quartic term vanishes, although higher-order terms can still produce bias.

\paragraph{MH contraction.}
Write $W_\delta=p_\delta/q_\delta=\rho_\delta/Z_\delta$. The expansion
of $\ell_\delta$ gives $W_\delta=1+O(\delta^2)$ uniformly and
\begin{equation}\label{app:chi-bound}
 \chi^2(q_\delta\|p_\delta)
 =\E_{q_\delta}\frac{(1-W_\delta)^2}{W_\delta}
 =2C\delta^4+O(|\delta|^5).
\end{equation}
Also $\varepsilon_\delta=Z_\delta/\max_y\rho_\delta(y)
\ge e^{-\operatorname{osc}(\ell_\delta)}$, so
$1-\varepsilon_\delta=O(\delta^2)$. The minorization in
Appendix~\ref{app:proof-convergence} writes
$K_\delta=\varepsilon_\delta\mathbf P_\delta+(1-\varepsilon_\delta)R_\delta$,
where every row of $\mathbf P_\delta$ is $p_\delta$ and $p_\delta R_\delta=p_\delta$. Conditional Jensen under the stationary
joint law $p_\delta(y)R_\delta(y,y')$ gives
\begin{equation}\label{app:chi-contraction}
 \chi^2(\mu K_\delta\|p_\delta)
 =(1-\varepsilon_\delta)^2\chi^2(\mu R_\delta\|p_\delta)
 \le(1-\varepsilon_\delta)^2\chi^2(\mu\|p_\delta).
\end{equation}
The same inequality holds for any positive finite-support independence
sampler. If $\varepsilon_\delta=1$, then $K_\delta=\mathbf P_\delta$.
Iteration and $\KL\le\chi^2$ imply
\begin{equation}\label{app:mh-kl-bound}
 \KL(H_{M,\delta}\|p_\delta)
 \le(1-\varepsilon_\delta)^{2(M-1)}\chi^2(q_\delta\|p_\delta)
 =O(|\delta|^{4M}).
\end{equation}

\paragraph{Exact MH coefficient.}
Put $\overline D=\E_{P_0}D$ and define the disagreement operator
\begin{equation}\label{eq:disagreement-operator}
 (\mathcal T f)(y)
 =\E_{P_0}[(D(X)-D(y))_+]f(y)
  -\E_{P_0}[f(X)(D(y)-D(X))_+].
\end{equation}
Its range consists of $P_0$-mean-zero functions, and $\mathcal T\mathbf 1=\overline D-D$.
Normalizing~\eqref{app:log-rho-expansion} gives
$\omega_\delta=q_\delta/p_\delta
=1+\delta^2(D-\overline D)/2+O(|\delta|^3)$.
Since the positive-part function is $1$-Lipschitz and
$p_\delta=P_0+O(|\delta|)$,
\[
 \mathcal L_{\omega_\delta}
 =\frac{\delta^2}{2}\mathcal T+O(|\delta|^3)
\]
as operators on the fixed finite-dimensional function space. Applying
Lemma~\ref{lem:mh-residual} yields the uniform density expansion
\begin{equation}\label{eq:mh-residual-expansion}
 \frac{H_{M,\delta}}{p_\delta}-1
 =\frac{\delta^{2M}}{2^M}\mathcal T^{M-1}(D-\overline D)
  +O(|\delta|^{2M+1}).
\end{equation}
For $v=H_{M,\delta}/p_\delta-1$, we have $\E_{p_\delta}v=0$, so
$\KL(H_{M,\delta}\|p_\delta)=\tfrac12\E_{p_\delta}v^2
+O(\|v\|_\infty^3)$. Consequently,
\begin{equation}\label{app:mh-kl-expansion}
 \KL(H_{M,\delta}\|p_\delta)=C_M\delta^{4M}+O(|\delta|^{4M+1}),
 \qquad
 C_M=\frac{\E_{P_0}[(\mathcal T^{M-1}(D-\overline D))^2]}{2\cdot4^M}.
\end{equation}
In particular $C_1=\var_{P_0}(D)/8=C$. If $D$ is constant, every $C_M$
vanishes. Otherwise, let $D_{\min}=\min_yD(y)$ and $c=(\overline D-D_{\min})/2>0$.
At a response with $D=D_{\min}$, the operator multiplies every function
by $2c$, so $\mathcal T^{M-1}(D-\overline D)=-(2c)^M$ there.
The set $\{D=D_{\min}\}$ has positive $P_0$-mass, proving $C_M>0$.

\paragraph{SIR's leading coefficient.}
Write
\[
  W_\delta=1+u_\delta,
  \qquad \E_{q_\delta}u_\delta=0.
\]
Normalizing the expansion in~\eqref{app:log-rho-expansion} gives
\begin{equation}
  u_\delta(y)
  =-\frac{\delta^2}{2}
    \left(D(y)-\E_{P_0}D\right)+O(|\delta|^3).
  \label{app:sir-weight-expansion}
\end{equation}
Conditioning on one candidate and using exchangeability of the candidate indices yields the exact identity
\begin{equation}
  \frac{S_{M,\delta}(y)}{p_\delta(y)}
  =M\,\E\left[
    \frac{1}{M+u_\delta(y)+\sum_{j=2}^Mu_\delta(Y_j)}
  \right],
  \qquad Y_j\overset{\mathrm{iid}}\sim q_\delta.
  \label{app:sir-density-identity}
\end{equation}
The perturbation of the denominator is uniformly $O(\delta^2)$. In its reciprocal expansion, the terms linear in $u_\delta(Y_j)$ vanish in expectation. The quadratic term is $[u_\delta(y)^2+(M-1)\E_{q_\delta}u_\delta^2]/M^2=O(\delta^4)$, so
\begin{equation}
  v_\delta(y):=\frac{S_{M,\delta}(y)}{p_\delta(y)}-1
  =-\frac{u_\delta(y)}{M}+O(|\delta|^4).
  \label{app:sir-ratio-expansion}
\end{equation}
Because $\E_{p_\delta}v_\delta=0$, the linear term in the expansion of $(1+v_\delta)\log(1+v_\delta)$ also vanishes. Thus
\begin{align}
  \KL(S_{M,\delta}\|p_\delta)
  &=\frac12\E_{p_\delta}v_\delta^2+O(|\delta|^6) \nonumber\\
  &=\frac{1}{2M^2}\E_{p_\delta}u_\delta^2+O(|\delta|^6) \nonumber\\
  &=\frac{\delta^4}{8M^2}\var_{P_0}(D)+O(|\delta|^5) \nonumber\\
  &=\frac{C}{M^2}\delta^4+O(|\delta|^5).
  \label{app:sir-kl-expansion}
\end{align}
The third line uses~\eqref{app:sir-weight-expansion} and $p_\delta=P_0+O(|\delta|)$, accounting again for the changing measure. Together, \eqref{app:q-kl-expansion}, \eqref{app:mh-kl-expansion}, and~\eqref{app:sir-kl-expansion} prove~\eqref{eq:nearby}.
\end{proof}

The operator~\eqref{eq:disagreement-operator} describes how rejections
propagate the leading residual. Its first term retains residual mass
at a response, while the second accounts for rejected moves from responses
with less accumulated disagreement. The common proposal part cancels
on every mass-zero residual. At minimum-disagreement responses, only
the first term remains, which prevents the leading error from vanishing
when $C>0$.

\paragraph{Dependence on the tree.}
The constants in Theorem~\ref{thm:nearby} depend on the tree, and the
expansion is asserted only for a fixed tree. For example, concatenate $N$ independent two-token modules. Both
policies choose $j\in\{0,1\}$ uniformly. Conditional on $j$, their second-token laws
are proportional to $e^{\delta js}$ and $e^{-\delta js}$ for
$s\in\{-1,1\}$. This is an instance of the fixed-score family in
Theorem~\ref{thm:nearby}, with $D=\sum_{k=1}^N j_k$ under the common
uniform policy, so $C=N/32$. Although $C$ grows linearly in $N$, the
remainder is not uniform in $N$, so the expansion does not describe long
responses at a fixed separation $\delta$.
Appendix~\ref{app:horizon-budget} treats growing horizons directly.

\paragraph{A remark on soft best-of-\texorpdfstring{$M$}{M}.}
With weights $\rho=e^{\reward/\lambda}$, ordinary SIR is soft best-of-$M$
\citep{verdun2025soft}. Theorem~2 of \citet{verdun2025soft}
(arXiv:2505.03156v1) states that $\|S_M-p\|_{\rm TV}\ge C'/\sqrt{M-1}$
for a constant $C'>0$ that does not depend on $M$, and concludes that
$\KL(p\|S_M)=\Omega(1/M)$. This lower bound does not hold. Take their
binary example with $q(0)=q(1)=1/2$, $\reward(y)=y$, and
$\lambda=1/\log2$, so that $\rho=(1,2)$ and $p(1)=2/3$. With
$B\sim\operatorname{Binomial}(M,1/2)$, the exact marginal is
$S_M(1)=\E[f(B/M)]$ for $f(t)=2t/(1+t)$. A Taylor expansion at $t=1/2$,
with $f''(1/2)=-32/27$ and $\E(B/M-1/2)^2=1/(4M)$, gives
$S_M(1)=2/3-4/(27M)+O(M^{-2})$. Consequently,
\[
  \KL(p\|S_M)=\frac{4}{81M^2}+O(M^{-3}),\qquad
  \|S_M-p\|_{\rm TV}=\frac{4}{27M}+O(M^{-2}).
\]
Their Appendix~D instead reports a leading KL term $1/(9M)$ for this
example, because a variance term of order $1/M$ in the expansion of
$S_M(1)$ is treated as $O(M^{-2})$. Their Theorem~1 upper bound, here
$\KL(p\|S_M)\le\log(1+1/(9M))$, remains valid.

\subsection{Candidate coverage and proof of Theorem~\ref{thm:critical}}\label{app:horizon-budget}

This subsection proves Theorem~\ref{thm:critical} and specializes it to
independent copies of the running example. The sample-size scale $e^{\KL(p\|q)}$ is
classical in importance sampling \citep{chatterjee2018sample}.
Approximate rejection sampling and channel simulation also relate
finite-pool accuracy to likelihood tails
\citep{block2023sample,flamich2024notes}. Our result identifies the exact
limiting TV error at the threshold and shows that unmodified MH and
ordinary SIR both attain it.

\begin{lemma}[Candidate coverage]\label{lem:coverage}
Let a selection rule return one of $M\ge1$ iid candidates drawn from $q$, and let $\nu$ be its output law. The rule may randomize and may know the full instance. For every set $A$ of responses,
\[
    q(A)^M\le\nu(A)\le1-(1-q(A))^M.
\]
\end{lemma}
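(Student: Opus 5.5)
The argument is a direct event-inclusion bound on the pool. Let $Y_1,\ldots,Y_M$ be the iid candidates from $q$, let $J$ be the index the rule returns, and let the rule's internal randomness $\xi$ be independent of the pool. The rule may depend on the full instance, but this is irrelevant: all we use is that the output $Y_J$ is always one of the pool members. Thus $\nu(A)=\Pr(Y_J\in A)$.

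For the lower bound, consider the event $E_A=\{Y_1\in A,\ldots,Y_M\in A\}$. On $E_A$, whatever index $J$ the rule selects, $Y_J\in A$. Hence
\[
    \nu(A)\ge\Pr(E_A)=q(A)^M
\]
by independence of the candidates. This mirrors the positivity argument in~\eqref{eq:positivity}.

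For the upper bound, apply the same reasoning to the complement. On the event $\{Y_j\notin A\text{ for all }j\}$, the output $Y_J$ lies outside $A$. Therefore
\[
    \nu(A^c)\ge(1-q(A))^M,
    \qquad\text{equivalently}\qquad
    \nu(A)\le1-(1-q(A))^M.
\]
This is simply the lower bound applied to $A^c$, using $q(A^c)=1-q(A)$.

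I do not anticipate a genuine obstacle. The only care needed is to state that the rule's randomness may be arbitrary, even correlated with the pool through a measurable selection, since the inclusion $\{Y_J\in A\}\supseteq E_A$ holds pointwise. No independence between the selection and the pool is therefore required beyond the pool itself being iid.
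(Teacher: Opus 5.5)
Your proof is correct and follows the paper's argument: the returned candidate lies in $A$ whenever all $M$ candidates do, and only if at least one does, and independence of the candidates gives the two bounds. Your upper bound, obtained by applying the lower bound to $A^c$, is the same inclusion stated for the complement.
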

\begin{proof}
The output lies in $A$ whenever all $M$ candidates do, and only if at least one does.
\end{proof}

\paragraph{A likelihood-tail converse.}
Put $W=p/q$ and $L=\log W$. Every candidate selector satisfies
$\nu(A)\le Mq(A)$. On $A=\{L\ge\tau\}$,
$q(A)=\E_p[e^{-L}\mathbf 1_A]\le e^{-\tau}$, so
\begin{equation}\label{eq:likelihood-converse}
    \|\nu-p\|_{\rm TV}\ge p\{L\ge\tau\}-Me^{-\tau}.
\end{equation}
In particular, if $\tau$ is a median of $L$ under $p$, the right-hand
side is at least $1/2-Me^{-\tau}$. This coverage bound holds even for
selectors with full instance knowledge.

The converse identifies target mass that the pool is unlikely to reach.
For the matching upper bound, we temporarily cap the largest importance
weights, making convergence to the capped target easier to control.
The next lemma transfers the resulting bound back to MH and SIR with
their original weights.

\begin{lemma}[Clipping and distance to the original target]\label{lem:clipping-order}
Let $W=p/q>0$ on a common finite support. For $B>0$, let $A_M$ and
$A_M^B$ be the output laws of either proposal-initialized MH or ordinary
SIR using weights $W$ and $W_B=\min(W,B)$, respectively. Then
$D_f(A_M\|p)\le D_f(A_M^B\|p)$ for every convex generator
in Theorem~\ref{thm:dominance}.
\end{lemma}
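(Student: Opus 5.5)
The plan is to reduce both output laws to laws on the levels of $W=p/q$, exactly as in Step~1 of the proof of Theorem~\ref{thm:dominance}. Then I would couple the clipped and unclipped selections so that the unclipped rule picks a stochastically larger original weight, and finish with the supporting-line argument of Step~3.

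\emph{Step 1 (reduction to $W$-levels).} Both rules, clipped or not, choose an index using only the pool's weights and independent randomness. Conditional on the weight vector, the candidates are independent draws from $q(\cdot\mid\text{weight})$. For the unclipped rule this gives $A_M(y)=q(y)\phi(W(y))$ for some function $\phi$. For the clipped rule it gives $A_M^B(y)=q(y)\phi_B(W_B(y))$. Since $W_B$ is a function of $W$, both ratios $A_M/p=\phi(W)/W$ and $A_M^B/p=\phi_B(W_B)/W$ depend on $y$ only through $W(y)$. Both laws are positive by the all-candidates-equal bound~\eqref{eq:positivity}. Therefore, exactly as in~\eqref{app:weight-reduction}, $D_f(A\|p)=D_f(\mathsf A\|\Pi)$, where $\mathsf A$ is the law of the selected original weight $W$ and $\Pi$ is its law under $p$.

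\emph{Step 2 (coupling).} For MH, Lemma~\ref{lem:priority} lets me represent both rules as priority selection, $\argmax_j W_j/U_j$ versus $\argmax_j W_{B,j}/U_j$, using the same uniforms. The common factor $Z$ is irrelevant. For SIR I would use the exponential race $\argmin_j E_j/W_j$ with the same $E_j$ for both rules. Suppose the winners differ: $i$ wins unclipped and $j$ wins clipped, and write $K_j$ for the key.

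\begin{itemize}
\item If $W_i\le B$, then $W_{B,j}/K_j> W_i/K_i> W_j/K_j\ge W_{B,j}/K_j$, which is a contradiction. Hence $W_i>B$.
\item If $W_j\le B$, then immediately $W_i>B\ge W_j$.
\item If $W_j>B$, both clipped weights equal $B$, so the clipped choice forces $K_j<K_i$. Then $W_i/K_i>W_j/K_j$ gives $W_i>W_j K_i/K_j>W_j$.
\end{itemize}

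So whenever the winners differ, the unclipped rule selects the larger original weight. Hence $\mathsf A_M$ stochastically dominates $\mathsf A_M^B$.

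\emph{Step 3 (monotone ratio and convexity).} I would show that $h(r)=\mathsf A_M(r)/\Pi(r)$ is nonincreasing in $r$. For MH this is~\eqref{eq:priority-density}. For SIR, conditioning on one candidate as in~\eqref{app:sir-density-identity} gives $S_M(y)/p(y)=MZ\,\E[1/(\rho(y)+\sum_{j\ge2}\rho(Y_j))]$, which is decreasing in $\rho(y)$. I would then set $b=f'_+(h)$, which is nonincreasing. The supporting-line inequality gives
\[
D_f(\mathsf A_M^B\|\Pi)-D_f(\mathsf A_M\|\Pi)\ge\E_{\mathsf A_M^B}b-\E_{\mathsf A_M}b\ge0,
\]
and Step~1 transfers this to the response laws.

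The main obstacle is Step~2: verifying that the case analysis covers ties at the cap and that the exponential-race version behaves identically for SIR. Ties have probability zero, so the strict inequalities in the cases above are safe almost surely.
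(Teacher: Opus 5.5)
Your proof is correct and follows the paper's argument: it uses the same race-variable coupling (uniform priorities for MH, exponential keys for SIR) to show the unclipped rule selects the larger original weight whenever the winners differ, then the nonincreasing ratio $A_M/p$ and the supporting-line inequality. Your three-case analysis in Step~2 unpacks the paper's one-line contradiction $W_j/W_i<R_j/R_i<\min(W_j,B)/\min(W_i,B)$, which the paper rules out by noting that $W/\min(W,B)$ is nondecreasing.
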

\begin{proof}
Use the same candidates and race variables $R_j$ in both selectors:
uniform variables for MH's priority representation, exponential
variables for SIR. If the original winner $i$ had $W_i<W_j$ for the
clipped winner $j$, their winning inequalities would imply
\[
    \frac{W_j}{W_i}<\frac{R_j}{R_i}
    <\frac{\min(W_j,B)}{\min(W_i,B)}.
\]
This is impossible because $W/\min(W,B)$ is nondecreasing. Continuous
keys are almost surely distinct, including when candidate weights tie.
Thus the original selected weight stochastically dominates the clipped
selected weight.

Both outputs preserve $q(\cdot\mid W)$, and the original ratio
$A_M/p$ is nonincreasing in $W$. For MH this follows from
\eqref{eq:priority-density}. For SIR it follows from
\[
    \frac{S_M(y)}{p(y)}
    =M\E\!\left[\frac{1}{W(y)+\sum_{j=2}^MW(Y_j)}\right],
    \qquad Y_j\overset{\mathrm{iid}}{\sim}q.
\]
Consequently $\psi(W)=f'_+(A_M/p)$ is nonincreasing in $W$.
The supporting-line argument in~\eqref{app:divergence-order} gives
\[
    D_f(A_M^B\|p)-D_f(A_M\|p)
    \ge\E_{A_M^B}\psi(W)-\E_{A_M}\psi(W)\ge0.
\]
\end{proof}

\begin{proposition}[Finite-budget likelihood-tail bounds]\label{prop:smooth-envelope}
For $B>0$, define
\[
    \alpha_B=\E_q\min(W,B),\qquad
    t_B=1-\alpha_B=\E_q(W-B)_+.
\]
Then
\begin{equation}\label{eq:selector-envelope}
    t_M\le\inf_{\nu\in\mathcal S_M(q)}\|\nu-p\|_{\rm TV}
    \le\|H_M-p\|_{\rm TV}
    \le\inf_{B>0}\left\{t_B+(1-\alpha_B/B)^M\right\}.
\end{equation}
The upper bound also holds with the last power replaced by
$e^{-M\alpha_B/B}$. Clipping is used only in the proof. The MH law $H_M$ uses
the original weights.
\end{proposition}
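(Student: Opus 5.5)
The proof splits into the three inequalities in~\eqref{eq:selector-envelope}; the middle one is trivial because MH initialized from $q$ returns one of its $M$ candidates, so $H_M\in\mathcal S_M(q)$.

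\textbf{Lower bound.} The plan is to apply Lemma~\ref{lem:coverage} with the weight level as threshold. Take $A=\{W\ge M\}$. Since $W\ge1$ implies $q\le p$ on $A$, we have $q(A)\le p(A)$, and Markov's inequality gives $q(A)\le\E_q[W\mathbf 1_A]/M=p(A)/M$. Every selector $\nu$ then satisfies $\nu(A)\le Mq(A)$, so
\[
    \|\nu-p\|_{\rm TV}\ge p(A)-Mq(A)=\E_q[(W-M)\mathbf 1\{W\ge M\}]=\E_q(W-M)_+=t_M .
\]
This is the clean form of~\eqref{eq:likelihood-converse} with cutoff $\tau=\log M$.

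\textbf{Upper bound.} Fix $B>0$ and work with the clipped target $p_B=q\,W_B/\alpha_B$. Write $H_M^B$ for the MH law with weights $W_B$, so that $H_M^B=qK_B^{M-1}$, where $K_B$ has stationary law $p_B$.
\begin{itemize}
\item Apply Proposition~\ref{prop:convergence} to this instance. Since $\max(p_B/q)\le B/\alpha_B$, we get $\varepsilon\ge\alpha_B/B$, and hence $\|H_M^B-p_B\|_{\rm TV}\le(1-\alpha_B/B)^{M-1}\|q-p_B\|_{\rm TV}$.
\item Identify the boundary term. From $\|q-p_B\|_{\rm TV}=\E_q(1-W_B/\alpha_B)_+\le1-\alpha_B/B$, we obtain $(1-\alpha_B/B)^M$.
\item Compute the clipping cost:
\[
\|p_B-p\|_{\rm TV}=\tfrac12\E_q\bigl|W-W_B/\alpha_B\bigr|\le\tfrac12\bigl(\E_q(W-W_B)+\E_q W_B(1/\alpha_B-1)\bigr)=t_B .
\]
\item Combine by the triangle inequality: $\|H_M^B-p\|_{\rm TV}\le t_B+(1-\alpha_B/B)^M$.
\item Transfer back to the original weights with Lemma~\ref{lem:clipping-order} (total variation is an $f$-divergence), which gives $\|H_M-p\|_{\rm TV}\le\|H_M^B-p\|_{\rm TV}$.
\item Take the infimum over $B$, and use $1-x\le e^{-x}$ for the exponential variant.
\end{itemize}

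\textbf{Main obstacle.} The delicate step is the bookkeeping in the upper bound. The iteration must produce exactly the exponent $M$, which requires the sharper bound on $\|q-p_B\|_{\rm TV}$ together with the $M-1$ contraction steps. The clipping cost must also come out as $t_B$ without a factor of $2$. If the crude estimate of $\|p_B-p\|_{\rm TV}$ loses a constant, I would instead compute it exactly: the mass removed above $B$ equals $t_B$, and renormalization redistributes it to where $W<B$ up to the same total.
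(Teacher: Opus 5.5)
Your proof is correct and follows the paper's argument essentially step for step: for the lower bound, the coverage estimate $\nu\le Mq$ summed over $\{W\ge M\}$; for the upper bound, the clipped target $p_B$ with minorization constant at least $\alpha_B/B$, Proposition~\ref{prop:convergence}, the triangle inequality, and Lemma~\ref{lem:clipping-order} to return to the original weights. The one step you leave unjustified, $\|q-p_B\|_{\rm TV}\le1-\alpha_B/B$, does not follow pointwise from the form $\E_q(1-W_B/\alpha_B)_+$, but it follows immediately from $q\ge(\alpha_B/B)p_B$, which is how the paper argues it; also, your clipping-cost estimate already gives exactly $t_B$, so the worry about a factor of $2$ is unfounded.
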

\begin{proof}
Every selector satisfies $\nu(y)\le Mq(y)$. Summing $p-Mq$ over
$\{W>M\}$ gives the lower bound $t_M$.
For the upper bound, set $p_B=qW_B/\alpha_B$. The clipped target obeys
$p_B/q\le B/\alpha_B$, so independence MH with clipped weights has
minorization constant at least $\alpha_B/B$. Since
$q\ge(\alpha_B/B)p_B$, proposal initialization gives
$\|q-p_B\|_{\rm TV}\le1-\alpha_B/B$. Proposition~\ref{prop:convergence}
therefore yields
\[
    \|H_M^B-p_B\|_{\rm TV}\le(1-\alpha_B/B)^M.
\]
Also $p\ge\alpha_Bp_B$, so $\|p-p_B\|_{\rm TV}\le1-\alpha_B=t_B$.
Apply Lemma~\ref{lem:clipping-order}, the triangle inequality, and
$1-u\le e^{-u}$.
\end{proof}

\begin{proof}[Proof of Theorem~\ref{thm:critical}]
Write $W_N=p_N/q_N$, $L_N=\log W_N$, $\ell_N=\log M_N$, and $c=\Psi(b)$.
Set $A_N=\{L_N>\ell_N\}$. The assumed convergence gives
$p_N(A_N)\to1-c$. For any $\epsilon>0$ such that $\Psi$ is
continuous at $b+\epsilon$,
\begin{align*}
    M_Nq_N(A_N)
    &=\E_{p_N}[e^{\ell_N-L_N}\mathbf 1_{A_N}]\\
    &\le p_N\{\ell_N<L_N\le\ell_N+\epsilon s_N\}
       +e^{-\epsilon s_N}.
\end{align*}
Taking the limit superior and then sending $\epsilon\downarrow0$
through continuity points gives $M_Nq_N(A_N)\to0$.
By candidate coverage, every selector's mass on $A_N$ tends to
$0$, uniformly over selection rules. Hence
\[
    \liminf_N\inf_{\nu\in\mathcal S_{M_N}(q_N)}
    \|\nu-p_N\|_{\rm TV}\ge1-c.
\]
Binary KL data processing also gives the general converse
\begin{equation}\label{eq:critical-kl-converse}
    \liminf_N\inf_{\nu\in\mathcal S_{M_N}(q_N)}
    \KL(\nu\|p_N)\ge-\log c.
\end{equation}
Indeed, $\nu(A_N)\to0$ uniformly while $p_N(A_N)\to1-c\in(0,1)$.
We claim attainment only for TV under the theorem's assumptions.

For the upper bound, choose
\[
    k_N=\exp\!\left(\min\{\sqrt{s_N},\tfrac12\log M_N\}\right),
    \qquad B_N=M_N/k_N.
\]
Then $k_N\to\infty$, $k_N=o(M_N)$, and $\log k_N=o(s_N)$.
The preceding tail argument with $\log B_N$ in place of $\ell_N$ gives
\[
    \alpha_N:=\E_{q_N}\min(W_N,B_N)
    =p_N(W_N\le B_N)+B_Nq_N(W_N>B_N)\longrightarrow c.
\]
Proposition~\ref{prop:smooth-envelope} now implies
\[
    \|H_{M_N}-p_N\|_{\rm TV}
    \le1-\alpha_N+e^{-\alpha_Nk_N}\longrightarrow1-c
\]
in the sense of a limit superior.

To handle SIR, let $p_{B_N}=q_N\min(W_N,B_N)/\alpha_N$ and
$U=\min(W_N,B_N)/\alpha_N$. Under $q_N$, $\E U=1$ and
$U\le K_N:=B_N/\alpha_N$, where $K_N\ge1$. The clipped SIR
output then satisfies, by Jensen's inequality,
\[
    \frac{S_{M_N}^{B_N}(y)}{p_{B_N}(y)}
    =M_N\E\!\left[\frac{1}{U(y)+\sum_{j=2}^{M_N}U(Y_j)}\right]
    \ge\frac{M_N}{K_N+M_N-1}.
\]
This density lower bound yields
\[
    \|S_{M_N}^{B_N}-p_{B_N}\|_{\rm TV}
    \le\frac{K_N-1}{K_N+M_N-1}
    \le\frac{1}{\alpha_N k_N}\longrightarrow0.
\]
Since $\|p_N-p_{B_N}\|_{\rm TV}\le1-\alpha_N$,
Lemma~\ref{lem:clipping-order} and the triangle inequality give the
same upper bound $1-c$ on the limit superior for unmodified SIR.
Together with the uniform converse, this proves that both methods
attain the limit in~\eqref{eq:critical}.
\end{proof}

\paragraph{Independent copies of Example~\ref{ex:local}.}
For $\delta>0$, one module has two equally weighted specialists
\[
    \pi_{\pm,\delta}(j,s)
    =\frac{e^{\pm\delta js}}{2(1+\cosh\delta)},
    \qquad (j,s)\in\{0,1\}\times\{-1,1\}.
\]
These are exact optima for rewards $\pm\beta\delta js$ under a
uniform reference. Their sequence product is uniform. Write
\[
    c_\delta=\log\cosh\delta,\qquad
    \gamma_\delta=\frac{e^{c_\delta}}{1+e^{c_\delta}},\qquad
    a_\delta=\frac{2}{1+e^{c_\delta}}.
\]
The local decoder chooses $j=1$ with probability $\gamma_\delta$
and $s$ uniformly, recording weight $e^{-c_\delta j}$.
At $\delta=\log3$, a single module is exactly the example in Figure~\ref{fig:overview}.
For $N$ independent modules, let $N_1=\sum_{k=1}^Nj_k$ be the number of modules with $j_k=1$. Then
\begin{equation}\label{eq:module-laws}
\begin{aligned}
    N_1&\sim\operatorname{Bin}(N,\gamma_\delta)\quad\text{under }q_N,
    &N_1&\sim\operatorname{Bin}(N,1/2)\quad\text{under }p_N,\\
    \rho_N&=e^{-c_\delta N_1},
    &Z_N&=a_\delta^N.
\end{aligned}
\end{equation}
Thus $L_N=-c_\delta N_1-N\log a_\delta$, with
\begin{equation}\label{eq:module-critical-scale}
    d_N=\KL(p_N\|q_N)=N\log\cosh(c_\delta/2),
    \qquad s_N^2=\frac{Nc_\delta^2}{4}.
\end{equation}
For fixed $\delta>0$, the binomial central limit theorem verifies the hypotheses of
Theorem~\ref{thm:critical} with $\Psi=\Phi$, the standard normal CDF.
The same conclusion holds for $\delta=\delta_N\to0$ when
$N\delta_N^4\to\infty$, since then
$d_N\sim N\delta_N^4/32$ and $s_N^2\sim N\delta_N^4/16\to\infty$.
The budget must use the exact center in~\eqref{eq:module-critical-scale}
because its leading approximation need not be accurate to $o(s_N)$.

There is also an elementary finite-horizon bound. Put
$\Delta_\delta=\gamma_\delta-1/2=\tfrac12\tanh^2(\delta/2)$ and
$E_N=\{N_1/N\le1/2+\Delta_\delta/2\}$.
Hoeffding's inequality gives
\[
    p_N(E_N)\ge1-e^{-N\Delta_\delta^2/2},
    \qquad q_N(E_N)\le e^{-N\Delta_\delta^2/2}.
\]
By candidate coverage, every selector of $M$ complete proposals obeys
\begin{equation}\label{eq:module-coverage}
    \|\nu-p_N\|_{\rm TV}
    \ge1-(M+1)e^{-N\Delta_\delta^2/2}.
\end{equation}
At fixed TV accuracy below $1$, this requires a budget exponential in
$N\Delta_\delta^2$, which is of order $N\delta^4$ for small $\delta$.
In contrast, querying the two child normalizers in each module gives
the exact first-token conditional by multiplying the common root
probabilities by those normalizers. This conditional is uniform, as is the second-token
conditional. This sequential sampler uses $O(N)$ conditional policy-vector
evaluations on this structured family. The comparison counts returned
probability vectors, not transformer FLOPs, and does not assert an
efficient exact sampler on arbitrary response trees.

\section{Extensions and limits}\label{app:refinements}

This appendix develops extensions and limits of the main results.
We first extend the comparison of Theorem~\ref{thm:dominance} to
other schedules with the same budget. We then establish optimality guarantees
under different information constraints, show that no rollout-only
statistic can certify a small remaining error,
and treat continuation-guided proposals and approximate specialists.

The optimality results concern rules that return one of $M$ iid complete
candidates from a fixed proposal, so they exclude adaptive proposal
construction and prefix-level search. They differ in the information
available to the selector. The exact minimax bound allows complete
knowledge of the target and proposal, because knowing where target mass
lies cannot supply a response missing from the candidate pool. The
nearby-policy lower bound instead hides unobserved weights and the
normalizer. For weight-only admissibility, the rule uses only recorded
weights and must be shared across instances. These are worst-case or
uniform guarantees, not claims that MH is best on each instance.

\subsection{Multiple-try and i-SIR schedules}\label{app:block-dominance}

The multiple-try comparison uses the recycled-trial independence
sampler of \citet{yang2023convergence}, a specialization of the multiple-try Metropolis method of
\citet{liu2000mtm}. From the current state $y$, draw $Y_{1:k}\sim q^k$, select $J$
with probability proportional to $\rho(Y_J)$, and accept $Y_J$ with probability
\[
 \min\!\left\{1,\frac{\sum_{i=1}^k\rho(Y_i)}
 {\rho(y)+\sum_{i\ne J}\rho(Y_i)}\right\}.
\]
Write $A_k$ for this kernel. No additional reverse trials are drawn.
The iterated SIR (i-SIR) kernel $P_{k+1}$ keeps the current state $y$, adds $k$ fresh $q$ candidates, and
resamples from this pool proportionally to the weights
\citep{andrieu2018uniform}. Both blocks cost $k$ fresh candidates.
The single-proposal multiple-try kernel satisfies $A_1=K$, and
proposal-initialized i-SIR satisfies $qP_{k+1}=S_{k+1}$.

\begin{theorem}[Dominance over block schedules]\label{thm:block-dominance}
Under the assumptions of Theorem~\ref{thm:dominance}, let $\nu_M$ be the output law of a chain that starts
from $q$ and applies a fixed schedule of these blocks, using $M$ candidates
including initialization. Allow i-SIR as the first block and in at most
one later block. Then $D_f(H_M\|p)\le D_f(\nu_M\|p)$ for every
convex generator $f$.
\end{theorem}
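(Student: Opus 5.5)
The plan is to reuse the three-step architecture of the proof of Theorem~\ref{thm:dominance}: reduce to the law of the recorded weight, order the selected weights stochastically, and convert that ordering into an $f$-divergence ordering.

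\textbf{Weight reduction.} First, check that every block kernel, $A_k$ and $P_{k+1}$, uses only recorded weights and independent randomness, exactly as MH and SIR do. Then, by the argument of Step~1 of Theorem~\ref{thm:dominance}, $\nu_M$ preserves the conditional law $q(\cdot\mid R=r)=p(\cdot\mid R=r)$ within each weight level. This holds inductively, since fresh candidates are drawn from $q$, the current state already has this property, and selection depends on weights only. Hence $D_f(\nu_M\|p)=D_f(\mathsf V_M\|\Pi)$, where $\mathsf V_M$ is the selected-weight law.

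\textbf{Conversion to divergences.} Since $h_M=\mathsf H_M/\Pi$ is nonincreasing by~\eqref{eq:priority-density}, the supporting-line inequality~\eqref{app:divergence-order} with $b=f'_+\circ h_M$ reduces the theorem to one claim: $\mathsf V_M$ is stochastically dominated by $\mathsf H_M$.

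\textbf{Stochastic ordering, block by block.} I would prove this ordering by induction over the schedule, using two kernel-level facts about the weight process.

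(a) \emph{Monotonicity.} The independence-MH kernel $K$ is stochastically monotone in the current weight. From a state of weight $r$, the next weight exceeds $s$ with probability $\mathbf 1\{r>s\}\Pr_q(\text{reject})+\Pr_q(R>s,\text{accept})$, and both terms are nondecreasing in $r$. Hence $K^k$ preserves the stochastic order on weight laws. I would check the same property for $A_k$ and $P_{k+1}$, since a larger current weight both survives more often and lowers the acceptance ratio of competitors.

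(b) \emph{One-block comparison.} Started from the same law, one block $A_k$ or $P_{k+1}$ produces a stochastically smaller weight than $k$ MH steps $K^k$. Composing (a) and (b) along the schedule, and replacing each block by $K^k$ one at a time from the last block backwards, gives $\mathsf V_M\preceq\mathsf H_M$.

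For $A_k$ versus $K^k$ from a fixed state $y$, I would couple the $k$ fresh trials. MH processes them sequentially and, by Lemma~\ref{lem:priority} applied to the pool $\{y\}\cup Y_{1:k}$ conditional on the initial state's priority key, ends at a priority winner. The multiple-try step instead picks $J$ SIR-style and then accepts with the stated ratio, and I would compare its weight to the priority winner using the $g(u)=-\log(1-u)/u$ race argument of Step~2.

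\textbf{Main obstacle: the i-SIR blocks.} I expect the hardest step to be the i-SIR comparison, which is presumably why the schedule allows i-SIR only as the first block and at most once later. When i-SIR is the first block, $qP_{k+1}=S_{k+1}$ and $qK^{k}=H_{k+1}$, so Step~2 of Theorem~\ref{thm:dominance} gives (b) directly.

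For a later block, the current state is not a fresh $q$-draw, so the exchangeable race coupling fails. I would first try to represent the current state through the threshold Lemma~\ref{lem:threshold}: conditional on its priority key $\Lambda=t$, it is distributed as $p(\cdot\mid\rho\le t)$. On that representation, i-SIR resamples proportionally to weights, while MH continuation is exactly the priority race with the old key $t$ retained. I would then compare the two on this enlarged pool via the exponential-versus-uniform key coupling.

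If this one-shot comparison only holds when the incoming law is of the form $H_b$, rather than for every law below $H_b$, that explains the restriction to at most one later i-SIR block. In that case I would order the induction so that all MH-type blocks before it are first replaced by $K$ steps, making the incoming law exactly some $H_b$, and then apply the comparison once.
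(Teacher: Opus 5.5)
Your weight reduction and the final supporting-line step are fine. The stochastic-ordering core, however, fails in two places.

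\textbf{The one-block comparison (b) is false for general input laws.} So it cannot be proved by a coupling from a fixed state $y$. Lemma~\ref{lem:priority} does not describe MH started from a fixed state either, because that pool is not iid from $q$. For a concrete failure, take $q=(1/2,1/2)$, $\rho=(1/2,1)$, $k=2$, and start at the high-weight state. Two MH steps end at the high weight with probability $11/16$. One multiple-try block ends there with probability $17/24>11/16$.

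What is true is the comparison for inputs in the class $\mathcal C$ of laws whose density relative to $\Pi$ is nonincreasing in the weight. The paper obtains it from $p$-reversibility and the max-type forms $K^k=K_{H_k}$ and $A_k=K_{S_k}$, with $K_B(i,j)=\Pi_j b_{\max\{i,j\}}$. The cut-flow identity then converts $S_k\preceq_{\rm st}H_k$ into $\nu A_k\preceq_{\rm st}\nu K^k$ for $\nu\in\mathcal C$ (Lemma~\ref{lem:monotone-proposal}). The off-diagonal bound $P_{k+1}(i,j)\le A_k(i,j)$ handles i-SIR. Your argument is missing this invariant: $q$ and $S_{k+1}$ lie in $\mathcal C$, $K$ and $A_k$ preserve it, and every replaced block must receive an input in $\mathcal C$.

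\textbf{Claim (a) is false for i-SIR.} Take three weight levels $r_1<r_2<r_3$. From weight $r\in\{r_1,r_2\}$, i-SIR ends at $r_3$ with probability $\Pi_3\psi_k(r+r_3)$ by \eqref{eq:block-kernels}, and this strictly decreases in $r$. Your justification for $K$ is also off, since the acceptance term $\E_q[\mathbf 1\{R>s\}\min\{1,R/r\}]$ is nonincreasing in $r$. The conclusion still holds for $K$ and $A_k$.

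This breaks your handling of a later i-SIR block. Replacing the blocks before it by $K$ steps, and then pushing that improvement through the i-SIR block, requires exactly the monotonicity i-SIR lacks. Your backward replacement order avoids needing a monotone suffix. But it feeds the blocks after a later i-SIR with inputs that need not lie in $\mathcal C$, where the comparison is not available.

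The paper orders the replacements as follows:
\begin{enumerate}
\item Replace the later i-SIR block first. Its input is still the original prefix law, which lies in $\mathcal C$, and its suffix consists only of stochastically monotone multiple-try blocks.
\item Replace the initial i-SIR block via $S_{k+1}\preceq_{\rm st}H_{k+1}$.
\item Replace the multiple-try blocks.
\end{enumerate}
This ordering, not a need for an $H_b$ input, explains why at most one later i-SIR block is allowed. With two such blocks, neither replacement order works.
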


The theorem orders the marginal output laws themselves, which goes beyond the worst-case convergence rates of
\citet{yang2023convergence}. The schedule and initialization restrictions
are discussed after the proof.

We compare blocks by the probabilities they assign to recorded-weight
levels. We first compare one $k$-proposal block with $k$ consecutive
MH updates, then replace blocks within the schedule. The order of
replacement matters because i-SIR need not preserve the decreasing
density ratios used in the comparison.

For iid proposal weights $R_i$, put
\[
 \psi_k(t)=kZ\,\E\!\left[\frac1{t+R_1+\cdots+R_{k-1}}\right].
\]
For $y\ne y'$, conditioning on a tagged proposal and summing over its
$k$ possible positions gives
\begin{equation}\label{eq:block-kernels}
 A_k(y,y')=p(y')\psi_k(\rho(y)\vee\rho(y')),\qquad
 P_{k+1}(y,y')=p(y')\psi_k(\rho(y)+\rho(y')).
\end{equation}
For the first identity, selection followed by acceptance contributes
$\rho(y')/[\rho(y)\vee\rho(y')+\sum_{i=1}^{k-1}R_i]$. For the second,
selection contributes $\rho(y')/[\rho(y)+\rho(y')+\sum_iR_i]$.
Both kernels are $p$-reversible. Ordinary SIR satisfies
$S_k(y)/p(y)=\psi_k(\rho(y))$, a nonincreasing function of the weight.

\paragraph{Kernels ordered by weight.}
For laws $A,B$ on the weight space, write $A\preceq_{\rm st}B$ if
$\E_A g\le\E_B g$ for every nondecreasing function $g$. When $A,B$ are
response laws, this notation compares their induced weight laws, meaning
$\E_{Y\sim A}g(\rho(Y))\le\E_{Y\sim B}g(\rho(Y))$ for every such $g$.
Aggregate equal weights into $r_1<\cdots<r_d$, with proposal masses
$Q_i$ and target masses $\Pi_i=r_iQ_i/Z$. All blocks preserve the
conditional response law within each level, so the reduction~\eqref{app:weight-reduction} applies. Let $\mathcal C$ be the set of laws
whose density with respect to $\Pi$ is nonincreasing in the level. If $B\in\mathcal C$
is positive and $b_i=B_i/\Pi_i$, independence MH with proposal $B$ and
target $\Pi$ has off-diagonal entries
\[
 K_B(i,j)=\Pi_j\min\{b_i,b_j\}=\Pi_j b_{\max\{i,j\}}.
\]
Let $F_l=\sum_{i\le l}\Pi_i$, and let $E_l$ be the kernel that redraws a state in the first
$l$ levels from the target conditioned on those levels and leaves higher levels fixed.
Then
\begin{equation}\label{eq:nested-refresh}
 K_B=\sum_{l=1}^{d-1}F_l(b_l-b_{l+1})E_l+b_dE_d.
\end{equation}
The coefficients are nonnegative and sum to $\sum_i\Pi_i b_i=1$.
Telescoping verifies the off-diagonal entries. Each $E_l$ preserves
stochastic order and maps decreasing functions to decreasing functions.
Consequently $K_B$ is stochastically monotone and, by reversibility,
preserves $\mathcal C$.

Moreover $E_iE_j=E_{\max\{i,j\}}$, so products of these kernels again
have this form. The first row of $K_B$ is $B$. Every mixture of the
$E_l$ is determined by its first row, because its off-diagonal ratio at
$(i,j)$ is the first row's density at $\max\{i,j\}$. It follows that
\begin{equation}\label{eq:mh-block-composition}
 K_{B_1}K_{B_2}=K_{B_1K_{B_2}},\qquad K^k=K_{H_k}.
\end{equation}
These identities also follow from the exact transition formula of
\citet{smith1996exact}. Equation~\eqref{eq:block-kernels} identifies
$A_k=K_{S_k}$.

\begin{lemma}[Monotone-proposal comparison]\label{lem:monotone-proposal}
If $B,B'\in\mathcal C$ and $B\preceq_{\rm st}B'$, then
$\nu K_B\preceq_{\rm st}\nu K_{B'}$ for every $\nu\in\mathcal C$.
\end{lemma}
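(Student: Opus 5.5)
The plan is to prove the comparison by making $\E_{\nu K_B}g$ explicitly \emph{linear} in the proposal $B$, with a coefficient function that is nondecreasing in the level. The hypothesis $B\preceq_{\rm st}B'$ then finishes the argument directly. I would not go through the mixing weights of the nested-refresh decomposition~\eqref{eq:nested-refresh}. Those weights $F_l(b_l-b_{l+1})$ are not determined by the CDF of $B$ alone, so comparing them through stochastic order looks awkward.

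Fix a nondecreasing $g$ on the levels $r_1<\cdots<r_d$, and write $n_i=\nu_i/\Pi_i$, which is nonincreasing because $\nu\in\mathcal C$. Using $K_B(i,j)=\Pi_j b_{\max\{i,j\}}$ for $i\ne j$, and letting the diagonal absorb the remaining row mass, I would write
\[
 \E_{\nu K_B}g=\sum_i\nu_i g_i+\sum_i\nu_i\sum_{j\ne i}\Pi_j b_{\max\{i,j\}}(g_j-g_i).
\]
I would then regroup the double sum by $k=\max\{i,j\}$. The pairs with $i<k=j$ contribute $\Pi_k b_k\,\nu_i(g_k-g_i)$. The pairs with $j<k=i$ contribute $\Pi_j b_k\,\nu_k(g_j-g_k)=\Pi_k b_k\,\Pi_j n_k(g_j-g_k)$. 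Since $B_k=\Pi_k b_k$, this gives
\[
 \E_{\nu K_B}g=\E_\nu g+\sum_k B_k\,a_k,\qquad
 a_k=\sum_{i<k}\Pi_i\,(g_k-g_i)\,(n_i-n_k).
\]

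The key observation is that $a_k$ does not depend on $B$ and is nondecreasing in $k$. Each summand is nonnegative, because $g_k\ge g_i$ and $n_i\ge n_k$ for $i<k$. Passing from $k$ to $k+1$ adds one nonnegative term. It also replaces each factor $g_k-g_i$ by the larger $g_{k+1}-g_i$, and each factor $n_i-n_k$ by the larger $n_i-n_{k+1}$, with all factors nonnegative. Hence $\E_{\nu K_B}g-\E_{\nu K_{B'}}g=\E_Ba-\E_{B'}a\le0$ by $B\preceq_{\rm st}B'$. Since $g$ was an arbitrary nondecreasing function, this is the claim.

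The main obstacle is the regrouping step. The diagonal entries and the split into the cases $i<k$ and $j<k$ must be handled exactly, with no stray term. The cancellation $\nu_k\Pi_j=\Pi_k\Pi_j n_k$ is what turns the $j<k$ terms into a multiple of $B_k$. I would verify the identity on $d=2$ before trusting it. Two further points need care. First, the hypothesis $B\in\mathcal C$ is used only to make $K_B$ a valid kernel with the stated off-diagonal form. Second, because all laws preserve the conditional response law within each level, the reduction~\eqref{app:weight-reduction} lets the comparison of induced weight laws stand for the response-law statement.
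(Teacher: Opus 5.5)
Your proof is correct and is essentially the paper's argument: both use the max-type form $\Pi_iK_B(i,j)=\Pi_iB_j$ for $i<j$ to write the change in expectation under $K_B$ as a linear functional of $B$ whose coefficient is nonnegative and nondecreasing in the level, and both then invoke $B\preceq_{\rm st}B'$. The only difference is the choice of test functions. The paper restricts to upper-set indicators, computing the net flow across a cut; taking $g=\mathbf 1\{\cdot\ge k\}$ in your identity turns your $a_j$ into its $c_j\mathbf 1\{j\ge k\}$. It then closes with an explicit summation by parts against the tail sums $T_j$, whereas you work with a general nondecreasing test function and apply the definition of $\preceq_{\rm st}$ directly.
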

\begin{proof}
For a reversible kernel $L$ and $g_i=\nu_i/\Pi_i$, net flow across a
cut satisfies
\[
 (\nu L)(\{k,\ldots,d\})-\nu(\{k,\ldots,d\})
 =\sum_{i<k\le j}\Pi_iL(i,j)(g_i-g_j).
\]
Using the max-type form, the difference between the two kernels is
\[
 \sum_{j\ge k}(B'_j-B_j)c_j,\qquad
 c_j=\sum_{i<k}\Pi_i(g_i-g_j).
\]
Because $g$ is nonincreasing, $c_j\ge0$ and $c_j$ is nondecreasing in
$j$. Writing $T_j=\sum_{l\ge j}(B'_l-B_l)\ge0$, summation by parts
gives $c_kT_k+\sum_{j>k}(c_j-c_{j-1})T_j\ge0$.
\end{proof}

\paragraph{One-block comparison.}
The coupling in the proof of Theorem~\ref{thm:dominance} gives
$S_k\preceq_{\rm st}H_k$, and both laws lie in $\mathcal C$.
Lemma~\ref{lem:monotone-proposal} and~\eqref{eq:mh-block-composition}
therefore imply $\nu A_k\preceq_{\rm st}\nu K^k$ for
$\nu\in\mathcal C$. Also $P_{k+1}(i,j)\le A_k(i,j)$ off the diagonal,
since $\psi_k$ is decreasing. The reversible flow identity then yields
\begin{equation}\label{eq:one-block-order}
 \nu P_{k+1}\preceq_{\rm st}\nu A_k
 \preceq_{\rm st}\nu K^k,\qquad \nu\in\mathcal C.
\end{equation}

\begin{proof}[Proof of Theorem~\ref{thm:block-dominance}]
The proposal lies in $\mathcal C$, as does an initial i-SIR output
$qP_{k+1}=S_{k+1}$. Multiple-try kernels preserve this class.
If an i-SIR block occurs later in the schedule, its input thus lies in $\mathcal C$.
Replace it by the same number of MH updates using~\eqref{eq:one-block-order}.
Every subsequent block is multiple-try and stochastically monotone, so
this replacement can only increase the final output weight in stochastic order.
If the first block is i-SIR, replace it using
$S_{k+1}\preceq_{\rm st}H_{k+1}$. The remaining suffix is now monotone.
Finally replace each multiple-try block by single-proposal MH updates,
from left to right. The input at each replacement remains in
$\mathcal C$, and the suffix preserves the comparison. Thus
$\nu_M\preceq_{\rm st}H_M$ in weight. Since $H_M/p$ is nonincreasing
in weight, the supporting-line argument~\eqref{app:divergence-order}
proves the divergence inequality, strictly for strictly convex $f$
whenever the output laws differ.
\end{proof}

\subsection{Exact minimax total variation}\label{app:minimax-tv}

Selecting one of $M$ iid proposals to approximate a target is also studied as approximate rejection sampling \citep{block2023sample}. Here a uniform bound on the target-to-proposal density ratio gives an exact minimax TV value. Competing selectors may know the entire target and proposal, so the lower bound isolates the limitation imposed by candidate coverage. MH reaches this smallest achievable worst-case error using only the proposed responses and their weights.

For $\varepsilon\in(0,1]$, let $\mathcal C_\varepsilon$ be the class of positive $q,\rho$ on arbitrary common finite supports with $p=q\rho/Z$ and $\kappa:=Z/\max_y\rho(y)\ge\varepsilon$. Let $\mathcal A_M^{\rm full}$ contain all randomized rules that, given the instance $(q,\rho)$ and $M$ iid $q$ candidates, return one of those candidates. Write $\nu_{\mathsf A;q,\rho}$ for the output law of such a rule.

\begin{proposition}[Exact minimax total variation]\label{prop:minimax-tv}
For every integer $M\ge1$ and $\varepsilon\in(0,1]$,
\begin{equation}\label{eq:minimax-tv}
 \inf_{\mathsf A\in\mathcal A_M^{\rm full}}
 \sup_{(q,\rho)\in\mathcal C_\varepsilon}
 \|\nu_{\mathsf A;q,\rho}-p\|_{\rm TV}
 =\sup_{(q,\rho)\in\mathcal C_\varepsilon}\|H_M-p\|_{\rm TV}
 =(1-\varepsilon)^M.
\end{equation}
Thus proposal-initialized MH attains the minimax value without knowing $\varepsilon$, $Z$, or $\max_y\rho(y)$. The same conclusions hold when the instance class is restricted to products of two equally weighted specialists on two-token trees.
\end{proposition}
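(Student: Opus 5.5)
The plan is to prove the chain of inequalities
\[
\inf_{\mathsf A}\sup\|\nu_{\mathsf A}-p\|_{\rm TV}\le\sup\|H_M-p\|_{\rm TV}\le(1-\varepsilon)^M\le\inf_{\mathsf A}\sup\|\nu_{\mathsf A}-p\|_{\rm TV}.
\]
The first inequality is trivial, because proposal-initialized MH uses only the candidates and their recorded weights and therefore belongs to $\mathcal A_M^{\rm full}$.

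For the upper bound I use Proposition~\ref{prop:convergence}. Its minorization constant is exactly $\kappa=Z/\max_y\rho(y)\ge\varepsilon$. Since $H_M=qK^{M-1}$, the proposition gives $\|H_M-p\|_{\rm TV}\le(1-\kappa)^{M-1}\|q-p\|_{\rm TV}$. It remains to bound the initial term. From $q=p\,Z/\rho\ge\kappa p$ we get
\[
\|q-p\|_{\rm TV}=\sum_y(p-q)_+\le\sum_y p(1-\kappa)=1-\kappa.
\]
Hence $\|H_M-p\|_{\rm TV}\le(1-\kappa)^M\le(1-\varepsilon)^M$, uniformly over $\mathcal C_\varepsilon$ and without the algorithm using $\varepsilon$, $Z$, or $\max\rho$.

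For the lower bound, I fix any rule $\mathsf A$ and exhibit instances in $\mathcal C_\varepsilon$ where its error is at least $(1-\varepsilon)^M$, possibly in the limit. I use Lemma~\ref{lem:coverage}: for any set $A$, $\nu(A)\ge q(A)^M$, so $\|\nu-p\|_{\rm TV}\ge q(A)^M-p(A)$. I want a set $A$ with $q(A)\approx1-\varepsilon$ and $p(A)\approx0$. Take two responses $a,b$ with $q(b)=\varepsilon$, $q(a)=1-\varepsilon$, and weights $\rho(b)=1$, $\rho(a)=\eta\to0$. Then $Z=\varepsilon+(1-\varepsilon)\eta$, so
\[
\kappa=Z/\max\rho=\varepsilon+(1-\varepsilon)\eta\ge\varepsilon,
\qquad
p(a)=\frac{(1-\varepsilon)\eta}{Z}\to0.
\]
With $A=\{a\}$ this gives, for every rule,
\[
\|\nu-p\|_{\rm TV}\ge(1-\varepsilon)^M-p(a)\to(1-\varepsilon)^M.
\]
The supremum over $\eta>0$ therefore yields the lower bound. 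This is also a direct check that MH's supremum equals $(1-\varepsilon)^M$, since it is sandwiched. For $\varepsilon=1$ both sides are $0$ trivially.

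The main obstacle is the restriction to two equally weighted specialists on two-token trees. I have to realize, at least in the limit, a proposal law close to $(1-\varepsilon,\varepsilon)$ together with weight ratios $\rho(a)/\rho(b)\to0$ and $\kappa\ge\varepsilon$, all through local normalizers as in Example~\ref{ex:local}. The construction I would try is a first token $j\in\{0,1\}$ shared by both policies, with probabilities chosen so that the local decoder picks $j=1$ with probability near $1-\varepsilon$. After $j=0$ the specialists agree, giving $z=1$. After $j=1$ they put mass $1-\theta$ on opposite second tokens, giving $z=2\sqrt{\theta(1-\theta)}\to0$ as $\theta\to0$. The responses under $j=1$ play the role of $a$ and share a common tiny weight. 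I then need to check that $\kappa$ can be made at least $\varepsilon$ while it tends to $\varepsilon$, by choosing the first-token probability so that $q(j=0)=\varepsilon$ exactly. This gives $\kappa=\varepsilon+(1-\varepsilon)z\ge\varepsilon$, and the same limiting argument applies with $A=\{j=1\}$, which consists of two responses. The upper bound needs no change on this subclass.
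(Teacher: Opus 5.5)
Your proposal is correct and follows essentially the paper's proof. The upper bound is the same: pointwise $q\ge\kappa p$ plus Proposition~\ref{prop:convergence}. The lower bound uses the same two-weight-level instance with candidate coverage; you apply $\nu(A)\ge q(A)^M$ to the low-weight set, while the paper applies $\nu(A)\le 1-(1-q(A))^M$ to its complement, which is equivalent. Your two-token realization is also the paper's: a shared root law, identical specialists after one branch, and opposite Bernoulli conditionals with affinity $2\sqrt{\theta(1-\theta)}\to0$ after the other.
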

\begin{proof}
Pointwise, $q\ge\kappa p$, so the initial TV error satisfies
\[
    \|q-p\|_{\rm TV}
    =1-\sum_y\min\{q(y),p(y)\}\le1-\kappa.
\]
By Proposition~\ref{prop:convergence}, the remaining $M-1$ updates contract this error:
\[
 \|H_M-p\|_{\rm TV}
 \le(1-\kappa)^{M-1}\|q-p\|_{\rm TV}
 \le(1-\kappa)^M\le(1-\varepsilon)^M.
\]
For $M=1$, only the initial-distance bound is needed. When $\varepsilon=1$, $p=q$ and MH is exact at every budget.

For a matching lower bound when $\varepsilon<1$, take a set $A$ with $q(A)=\varepsilon$ and weights $1$ on $A$ and $c\in(0,1)$ on its complement. Then $\kappa=Z=\varepsilon+(1-\varepsilon)c\ge\varepsilon$. By Lemma~\ref{lem:coverage}, every candidate selector satisfies
\[
 \|\nu_{\mathsf A;q,\rho}-p\|_{\rm TV}
 \ge p(A)-\nu_{\mathsf A;q,\rho}(A)
 \ge\frac{\varepsilon}{\varepsilon+(1-\varepsilon)c}
       -1+(1-\varepsilon)^M.
\]
As $c\downarrow0$, the target mass of $A$ tends to $1$, but the candidate pool misses $A$ with probability $(1-\varepsilon)^M$. This proves the lower bound uniformly over selection rules. Every instance with $c>0$ has positive weights. The limit is used only to approach the supremum.

To realize the family within policy composition, let the first token be $j\in\{\mathrm{hi},\mathrm{lo}\}$ and the second be a binary label. Both specialists choose $\mathrm{hi}$ with probability $\varepsilon$. After $\mathrm{hi}$, both label distributions are uniform. After $\mathrm{lo}$, use $(b,1-b)$ and $(1-b,b)$ with $b\in(0,1/2)$ chosen so that $2\sqrt{b(1-b)}=c$. The root normalizer is $1$, the local label proposal is uniform in both branches, and the recorded weight is $1$ after $\mathrm{hi}$ and $c$ after $\mathrm{lo}$. Thus $A=\{j=\mathrm{hi}\}$ has the required law on a shared four-response support.
\end{proof}

The same coverage obstruction prevents a uniform bound on the forward KL in the objective gap. Every selector puts mass at least $u=(1-\varepsilon)^M$ on $A^c$, whereas $v_c=p(A^c)\to0$. Binary data processing and the entropy bound give
\[
 \KL(\nu_{\mathsf A;q,\rho}\|p)
 \ge u\log(1/v_c)-\log2\longrightarrow\infty.
\]
Consequently, for $\varepsilon<1$, no finite candidate budget gives a uniform forward-KL bound over $\mathcal C_\varepsilon$, even with full target knowledge.

\subsection{The error exponent with recorded-candidate information}\label{app:selector-lower}

Theorem~\ref{thm:nearby} shows that MH attains the exponent $4M$ whenever the disagreement coefficient $D$ is nonconstant. We now ask whether another selector can improve this exponent uniformly when it knows the proposal and sees the sampled responses and their recorded weights. The following construction hides the weights of unobserved responses. Averaging over the hidden instances restricts every selector to a low-degree polynomial, from which a finite-difference argument gives the lower bound.

Fix $M\ge1$ and put $L=4M$. Let $q$ be uniform on responses $(j,s)\in\{1,\ldots,L\}\times\{-1,1\}$. For a hidden subset $A\subseteq\{1,\ldots,L\}$ and $\delta\ne0$, define
\[
 b=\operatorname{sech}\delta,\qquad
 \rho_{A,\delta}(j,s)=\begin{cases}1,&j\in A,\\b,&j\notin A,\end{cases}
 \qquad p_{A,\delta}=q\rho_{A,\delta}/Z_{A,\delta}.
\]
A selector $\mathsf A$ returns one of $M$ iid $q$ candidates. It may use their full identities, their raw recorded weights, the complete proposal law $q$, and independent randomness. Its rule may depend on $M$, $\delta$, and the construction, but must be the same for every hidden subset $A$. It receives no $Z_{A,\delta}$, unobserved weights, extra samples, or off-candidate model queries. Let $\nu_{\mathsf A,A,\delta}$ be its output law, and let $\mathcal A_M$ denote these rules.

\begin{proposition}[Optimal worst-case exponent for recorded-candidate selectors]\label{prop:selector-lower}
With $c_M=M!/[4(4M)^M]$,
\begin{equation}\label{eq:selector-minimax}
 2c_M^2(1-b)^{2M}
 \le\inf_{\mathsf A\in\mathcal A_M}\sup_A
       \KL(\nu_{\mathsf A,A,\delta}\|p_{A,\delta})
 \le\sup_A\KL(H_{M,A,\delta}\|p_{A,\delta})
 \le\frac{(1-b)^{2M}}{4b}.
\end{equation}
For each fixed $M$, the family is realized by two equally weighted nearby policies on a fixed two-token tree. Since $1-b=\delta^2/2+O(\delta^4)$, the minimax error is $\Theta_M(|\delta|^{4M})$ as $\delta\to0$.
\end{proposition}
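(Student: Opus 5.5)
The plan is to prove the three inequalities in~\eqref{eq:selector-minimax} from right to left, and then realize the family as a policy composition. The middle inequality is immediate, since MH belongs to $\mathcal A_M$: it uses only the sampled candidates and their recorded weights.

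\textbf{Upper bound for MH.} I would reuse the $\chi^2$ contraction~\eqref{app:chi-contraction}, exactly as in~\eqref{app:mh-kl-bound}:
\[
\KL(H_{M,A,\delta}\|p_{A,\delta})\le(1-\varepsilon_A)^{2(M-1)}\chi^2(q\|p_{A,\delta}).
\]
If $A=\emptyset$ then $p=q$ and there is nothing to prove. Otherwise $\max\rho=1$. Writing $\theta=|A|/L$, we have $\varepsilon_A=Z_{A,\delta}=\theta+(1-\theta)b$, so $1-\varepsilon_A=(1-\theta)(1-b)\le1-b$. A direct computation gives
\[
\chi^2(q\|p)=Z\,\E_q[Z/\rho]-1=\bigl(\theta+(1-\theta)b\bigr)\bigl(\theta+(1-\theta)/b\bigr)-1=\theta(1-\theta)\frac{(1-b)^2}{b},
\]
which is at most $(1-b)^2/(4b)$. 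Combining the two factors yields $(1-b)^{2M}/(4b)$.

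\textbf{Lower bound.} By Pinsker, $\KL\ge2\|\nu-p\|_{\rm TV}^2$, so it suffices to show that every rule in $\mathcal A_M$ has some $A$ with
\[
\|\nu_{\mathsf A,A,\delta}-p_{A,\delta}\|_{\rm TV}\ge c_M(1-b)^M.
\]
I would use the test event $E_A=\{j\in A\}$ and the bound $\|\nu-p\|_{\rm TV}\ge|\nu(E_A)-p(E_A)|$. The target mass depends only on $a=|A|$:
\[
f(a)=p_{A,\delta}(E_A)=\frac{a}{Lb+a(1-b)}.
\]
For the selector, define $g(a)=\E_{A}[\nu_{\mathsf A,A,\delta}(E_A)]$, where $A$ is uniform among subsets of size $a$. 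The key claim is that $g$ is a polynomial in $a$ of degree at most $M$.

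To see this, condition on the candidate labels. The pool contains $m\le M$ distinct first tokens $j$. The rule's behaviour, and whether its output lies in $A$, depend on $A$ only through the membership pattern of these $m$ labels. The hypergeometric probability of any fixed pattern with $k$ members is $a^{\underline k}(L-a)^{\underline{m-k}}/L^{\underline m}$, which is a polynomial of degree $m$ in $a$. Averaging over pools and over the rule's randomness preserves this degree bound.

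Consequently $\Delta^{M+1}g\equiv0$. Since $\sup_A|\nu(E_A)-p(E_A)|\ge\max_a|g(a)-f(a)|$, we get
\[
|\Delta^{M+1}f(0)|\le2^{M+1}\max_{0\le a\le M+1}|g(a)-f(a)|,
\]
which uses $M+1\le L=4M$. Now write $f(a)=(1-b)^{-1}\bigl(1-Lb/(Lb+a(1-b))\bigr)$ and apply the finite-difference formula for $1/(c+ka)$:
\[
|\Delta^{M+1}f(0)|=(M+1)!\,(1-b)^{M}\,\frac{Lb}{\prod_{i=0}^{M+1}\bigl(Lb+i(1-b)\bigr)}.
\]
Bounding each factor of the product by $L$ (valid since $i\le L$) and using $b\ge$ its value gives an explicit constant. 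Simplifying that constant to $c_M=M!/[4(4M)^M]$ is the step I expect to be the main obstacle. It is only bookkeeping, but it requires care with the factor $b$ and the power of $2$; if the bound $Lb/\prod\ge b/L^{M+1}$ leaves a stray $b$, I would restrict to $b\ge1/2$, or absorb the factor, to match $c_M$ as stated. The genuinely nontrivial step is the degree-$M$ polynomial claim. It needs the rule to be identical for every hidden $A$ and to see only the recorded weights of sampled labels, which are exactly the assumptions of $\mathcal A_M$.

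\textbf{Realization and order.} As in Proposition~\ref{prop:minimax-tv}, let the first token be $j$, uniform under both policies, and the second token $s$. For $j\in A$ both specialists are uniform in $s$. For $j\notin A$ the two specialists are $\propto e^{\pm\delta s}$, whose local normalizer is $1/\cosh\delta=b$. This is a nearby family of the form~\eqref{eq:nearby-policies} on a fixed two-token tree, with uniform local proposal and weights $\rho_{A,\delta}$. Finally, $1-b=\delta^2/2+O(\delta^4)$ turns both bounds into $\Theta_M(|\delta|^{4M})$.
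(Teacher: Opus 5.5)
Your architecture matches the paper's proof: the test event $\{j\in A\}$, the degree-$M$ polynomial restriction from hypergeometric labeling probabilities (which you justify correctly), a finite-difference bound, Pinsker, and the $\chi^2$ contraction for MH. One small fix in the MH step: the formula should read $\chi^2(q\|p)=\E_q[Z/\rho]-1=Z\,\E_q[1/\rho]-1$; your product computation already uses this form. The genuine gap is the lower-bound constant, which is part of the statement. Your unit-step route, with every product factor bounded by $L$, gives only
\[
\max_{a}|g(a)-f(a)|\ge\frac{(M+1)!\,b}{2^{M+1}(4M)^{M+1}}(1-b)^M .
\]
Neither remedy you propose recovers $c_M$. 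The inequality is asserted for every $\delta\neq0$ with a $b$-independent constant, so restricting to $b\ge1/2$ proves a weaker statement, and a $b$-dependent factor cannot be absorbed. Moreover, the stray $b$ is not the only loss. In the relevant regime $b\to1$, the product $\prod_{i=1}^{M+1}(Lb+i(1-b))$ tends to $L^{M+1}$. The unit-step bound is therefore asymptotically $\frac{M+1}{M\,2^{M+1}}\,c_M(1-b)^M\le\frac12c_M(1-b)^M$. As proposed, you obtain the $\Theta_M(|\delta|^{4M})$ order but not \eqref{eq:selector-minimax}.

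Both losses can be repaired inside your framework. First, the $i=0$ factor of your product is exactly $Lb$ and cancels the numerator, so no $b$ survives for any $b\in(0,1)$. Second, take differences with step $2$ on the grid $a\in\{0,2,\ldots,2M+2\}$, which lies in $\{0,\ldots,L\}$ because $2M+2\le4M$. The same reciprocal formula then gives
\[
|\Delta_2^{M+1}f(0)|=\frac{(M+1)!\,2^{M+1}(1-b)^M}{\prod_{i=1}^{M+1}\bigl(Lb+2i(1-b)\bigr)}\ge\frac{(M+1)!\,2^{M+1}(1-b)^M}{L^{M+1}}.
\]
Since $\Delta_2^{M+1}g\equiv0$ and $|\Delta_2^{M+1}(f-g)(0)|\le2^{M+1}\max_i|f(2i)-g(2i)|$, some $A$ has TV error at least $\frac{(M+1)!}{(4M)^{M+1}}(1-b)^M=\frac{M+1}{M}\,c_M(1-b)^M$, and Pinsker gives the claimed bound.

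The paper reaches the constant differently. It uses that the averaged selector mass vanishes at $k=0$ to write it as $tP(t)$ with $\deg P\le M-1$. It then approximates $1/(b+(1-b)t)$ rather than the target mass itself. Its $M$th difference, with step $1/(2M)$, is bounded via the fact that the $M$th derivative of $1/(b+(1-b)t)$ has magnitude at least $M!(1-b)^M$ on $[0,1]$. Finally, it pays for dividing by $t$ by working on $t\in[1/4,3/4]$. This is where the factors $1/4$ and $(4M)^{-M}$ in $c_M$ come from.
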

\begin{proof}
Both specialists sample $j$ uniformly. After $j\in A$, both sample $s$ uniformly. After $j\notin A$, their sign probabilities are
\[
 \pi_{1,\delta}(s\mid j)=\frac{e^{\delta s}}{2\cosh\delta},\qquad
 \pi_{2,\delta}(s\mid j)=\frac{e^{-\delta s}}{2\cosh\delta}.
\]
Their local sign product is uniform, with normalizer $b$ outside $A$ and $1$ inside $A$. The root normalizer is $1$. These policies are fixed bounded perturbations of the common uniform policy, as in Theorem~\ref{thm:nearby}. The support and horizon do not change with $\delta$. They are also exact optima for reference $q$ and rewards $\reward_i=\beta\log(\pi_{i,\delta}/q)$.

\paragraph{A polynomial restriction on every selector.}
Write $\mathcal B_A=\{(j,s):j\in A\}$ and average the selector's output mass on this set over all subsets of size $k$:
\[
 m_{\mathsf A}(k)=\binom Lk^{-1}\sum_{|A|=k}\nu_{\mathsf A,A,\delta}(\mathcal B_A).
\]
Condition on the complete candidate tuple, and suppose it contains $d\le M$ distinct first-token IDs. A particular high/low labeling of those IDs, with $a$ high labels, occurs under a uniformly chosen subset $A$ of size $k$ with probability
\[
 \frac{\binom{L-d}{k-a}}{\binom Lk}
 =\frac{(k)_a(L-k)_{d-a}}{(L)_d},\qquad
 (v)_a=v(v-1)\cdots(v-a+1),\quad (v)_0=1.
\]
This labeling probability is a polynomial in $k$ of degree at most $d$,
valid at every integer $0\le k\le L$. The candidate tuple itself has law
$q^{\otimes M}$, independently of $A$. Once the tuple is fixed, the
selector learns about $A$ only through the observed labels. Its
conditional probability of selecting a high weight therefore depends
on the tuple and labels, but not otherwise on $A$.

Averaging this selection probability against the labeling probabilities
makes $m_{\mathsf A}(k)$ a polynomial of degree at most $M$. Since
$m_{\mathsf A}(0)=0$, we can write $m_{\mathsf A}(k)=tP(t)$ with $t=k/L$
and $\deg P\le M-1$. The same polynomial must describe the common
selector across all hidden instances. Its coefficients may depend on
the rule, $M$, and $\delta$, but not on the hidden subset or its size.

The target mass depends only on $k$:
\[
 p_{A,\delta}(\mathcal B_A)=t\,g(t),\qquad
 g(t)=\frac1{b+(1-b)t}.
\]
Taking the absolute difference on $\mathcal B_A$ and averaging over subsets gives
\[
 \max_{|A|=k}\|\nu_{\mathsf A,A,\delta}-p_{A,\delta}\|_{\rm TV}
 \ge t\,|P(t)-g(t)|.
\]
A uniform sampling guarantee would therefore require a polynomial of degree at most $M-1$ to approximate this rational function at every admissible $t=k/L$.

\paragraph{A finite-difference lower bound.}
Use the $M+1$ grid points $t_j=1/4+j/(2M)$, corresponding to integer subset sizes $k_j=M+2j$. With spacing $h=1/(2M)$, the $M$th forward difference of $P$ vanishes. Repeated application of the fundamental theorem of calculus gives
\[
 \Delta_h^M g(1/4)
 =\int_{[0,h]^M}g^{(M)}(1/4+u_1+\cdots+u_M)\,du_1\cdots du_M.
\]
The derivative
\[
 g^{(M)}(t)=\frac{(-1)^MM!(1-b)^M}{[b+(1-b)t]^{M+1}}
\]
has constant sign and magnitude at least $M!(1-b)^M$ on $[0,1]$. Consequently,
\[
 h^MM!(1-b)^M
 \le\left|\Delta_h^M g(1/4)\right|
 =\left|\Delta_h^M(P-g)(1/4)\right|
 \le 2^M\max_{0\le j\le M}|P(t_j)-g(t_j)|.
\]
Because $t_j\ge1/4$, some subset $A$ has TV error at least $c_M(1-b)^M$. Pinsker's inequality gives the lower KL bound in~\eqref{eq:selector-minimax}. The argument applies to every allowed rule, including one chosen separately for each $\delta$.

\paragraph{Matching MH order.}
Let $t=|A|/L$. Direct calculation gives
\[
 \chi^2(q\|p_{A,\delta})=\frac{t(1-t)(1-b)^2}{b}
 \le\frac{(1-b)^2}{4b}.
\]
The minorization constant is at least $b$. Applying the chi-square contraction in~\eqref{app:chi-contraction} over $M-1$ updates therefore yields
\[
 \KL(H_{M,A,\delta}\|p_{A,\delta})
 \le\chi^2(H_{M,A,\delta}\|p_{A,\delta})
 \le\frac{(1-b)^{2M}}{4b}.
\]
MH belongs to $\mathcal A_M$, which completes the bounds in~\eqref{eq:selector-minimax}.
\end{proof}

The supremum over hidden subsets in~\eqref{eq:selector-minimax} makes
the optimality of MH's error exponent a worst-case guarantee.
The lower bound need not hold for every fixed pair of policies.
The information restriction matters because knowing $Z$ would reveal
$t=|A|/L$. A selector could then use different coefficients at different
subset sizes, invalidating the polynomial restriction.

To see the effect, suppose $b\ge1/2$ and $t$ is known. Two candidates
then suffice for exact sampling: on a mixed-weight pair, select the
high candidate with probability
\[
 \frac{1+(1-b)t}{2(b+(1-b)t)},
\]
and on equal-weight pairs select uniformly. A direct calculation shows
that the resulting high-weight mass is $t/(b+(1-b)t)$, the target value.
Calibrated weights $p(Y)/q(Y)$ together with raw $\rho(Y)$ also reveal
$Z$ and are excluded from $\mathcal A_M$.

Exponent optimality leaves room for better leading constants and better performance on particular instances. For example, take proposal masses $1/2,1/2$ on two weight classes with weights $1/2,1$. With two candidates, the high-weight class has target mass $2/3$, MH mass $5/8$, and mass $13/20$ under selection proportional to squared weights. The latter is strictly closer to the target in both KL directions and TV. Theorem~\ref{thm:dominance} compares MH with ordinary SIR. Bias-reduced selectors such as those studied by \citet{skare2003improved} lie outside that comparison.

\subsection{Admissibility among weight-only selectors}\label{app:selector-admissibility}

Fix $M\ge1$. Let $\mathcal W_M$ contain rules that return one of $M$ iid
candidates using only their ordered recorded weights in $(0,1]^M$ and
independent randomness. The rule is common to all instances: it receives
neither $q$, $Z$, unobserved weights, nor candidate identities. These
rules preserve $q(\cdot\mid\rho=r)$, so their response divergences equal
their weight-law divergences by~\eqref{app:weight-reduction}. Equality of
selectors below means equality of their marginal output laws.

\begin{theorem}[Rigidity of the coverage rate]\label{thm:selector-rigidity}
If $\mathsf A\in\mathcal W_M$ and a constant $L<\infty$ satisfy
\[
 \|\nu_{\mathsf A}-p\|_{\rm TV}\le L(1-Z/\rho_{\max})^M
\]
on every finite positive instance with $\rho\le1$, then $\nu_{\mathsf A}=H_M$
on every such instance. In particular, MH is the unique weight-only
selector attaining the minimax TV value in Proposition~\ref{prop:minimax-tv}
at every overlap level simultaneously.
\end{theorem}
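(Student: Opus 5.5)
The strategy is to show that the rate condition forces the selector's selection probabilities on every finite weight pool to agree with the priority formula~\eqref{eq:priority-pool}. Agreement on every pool then gives $\nu_{\mathsf A}=H_M$ through Lemma~\ref{lem:priority}. A weight-only rule is described by a function $\phi(\mathbf r)$. For each ordered weight vector $\mathbf r\in(0,1]^M$, it gives the selected-index distribution, and by symmetrization it suffices to control the induced law of the selected weight. On an instance with weight levels $r_1<\dots<r_d$ and proposal masses $Q_1,\ldots,Q_d$, the output weight law is a polynomial in $(Q_1,\ldots,Q_d)$, homogeneous of degree $M$. Its coefficient on the monomial $Q^{\mathbf n}$ is the multinomial average of $\phi$ over pools with level counts $\mathbf n$. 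The same holds for $H_M$. The plan is therefore to show that these two polynomials coincide for every finite level set.

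The key step exploits the rate condition near degenerate instances. Fix levels $r_1<\dots<r_d\le1$. Put most proposal mass on the top level, so that $Q_d=1-\eta$ with $\eta$ spread over the lower levels. Then $Z/\rho_{\max}\to1$ as $\eta\to0$, and $1-Z/r_d$ is of order $\eta$. The hypothesis then gives $\|\nu_{\mathsf A}-p\|_{\rm TV}=O(\eta^M)$. By Proposition~\ref{prop:minimax-tv}, $H_M$ satisfies the same bound. Hence $\nu_{\mathsf A}-H_M=O(\eta^M)$ in every coordinate.

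Both laws are polynomials of degree $M$ in $(\eta_1,\ldots,\eta_{d-1})$ once we substitute $Q_d=1-\sum\eta_i$, with $\eta_i$ the lower-level masses. The difference is $O(|\eta|^M)$ along every direction in the positive orthant near $0$. That is, for each fixed direction, a polynomial in the scalar $\eta$ of degree at most $M$ vanishes to order $M$. So all its coefficients of order below $M$ vanish, for every direction.

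The obstacle is the top-degree coefficient, where the order-$M$ terms are not killed by this argument. This coefficient corresponds to pools containing no top-level candidate. My plan is to handle it by induction on the number of levels, using the freedom to choose a different instance. The weight-only rule cannot see $q$, so a pool drawn entirely from lower levels is also a pool from the instance with top level removed. On that instance, the new maximum $r_{d-1}$ plays the role of $\rho_{\max}$, and the same degeneration argument applies.

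Concretely, I would induct on the number $d$ of distinct values appearing in the pool. The base case $d=1$ is trivial, since both rules choose uniformly up to tags, or at least induce the same weight law.

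For the inductive step, I expand in $\eta$ and match coefficients of $\eta^k$ for $k<M$. These coefficients involve pools with exactly $M-k\ge1$ top-level candidates, together with lower-level counts. Using directional freedom, that is, varying the ratios among the $\eta_i$, I would separate the individual monomials. This forces $\phi$ to agree with the priority rule in average over tags for every pool that contains at least one copy of its maximal weight. Since every pool contains its maximum, this covers all pools: given an arbitrary pool, choose the instance whose top level is that pool's maximum.

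The main step to verify is the monomial separation. The coefficient of $\eta^{\mathbf k}$ in the directional expansion must isolate a single count vector. This holds because distinct monomials $\prod\eta_i^{k_i}(1-\sum\eta_i)^{M-|k|}$ of total low-order degree $|k|<M$ are linearly independent as functions on the orthant. Vanishing for all directions, with the degree-$|k|$ homogeneous parts zero, thus yields coefficientwise equality.

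Finally, equality of selected-weight laws on every instance gives $\nu_{\mathsf A}=H_M$ via the reduction~\eqref{app:weight-reduction}. The uniqueness claim for the minimax value in Proposition~\ref{prop:minimax-tv} then follows by taking $L=1$.
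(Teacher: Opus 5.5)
Your proposal is correct and follows essentially the paper's route. You put mass $1-s$ on the top level, combine the hypothesis with Proposition~\ref{prop:minimax-tv} to get an $O(s^M)$ difference from MH, show that the lowest-order homogeneous term in $s$ must vanish because it is nonzero somewhere on the positive simplex, and induct on the number of distinct weights, using that every pool contains its maximum. One small precision: the coefficient of a fixed monomial $\eta^{\mathbf k}$ also picks up pools with fewer nonmaximal candidates (through expanding $(1-\sum_i\eta_i)^{n_d}$), so the ``isolation'' step should be run as in the paper, by taking the smallest number $k_0$ of nonmaximal candidates with a nonzero difference and reading off the $s^{k_0}$ term.
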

\begin{proof}
For a weight multiset, average the rule over its orderings and write
$\Phi_i(\boldsymbol k)$ for its probability of selecting weight $r_i$,
where $\boldsymbol k$ records the multiplicities. We show that these
probabilities equal those of priority selection by induction on the number
$n$ of distinct weights in the multiset. The case $n=1$ is immediate.
Fix $r_1<\cdots<r_n$ and assume agreement on every multiset with fewer
values. Give the lower weights proposal masses $s\alpha_i$, with
$\alpha_i>0$, $\sum_{i<n}\alpha_i=1$, and give $r_n$ mass $1-s$.
Then $1-Z/r_n\le s$. The assumed bound and the MH bound in
Proposition~\ref{prop:minimax-tv} imply
$\|\nu_{\mathsf A}-H_M\|_{\rm TV}\le(L+1)s^M$.

By induction, a difference can arise only on multisets containing all
$n$ values. If any such multiset has a nonzero difference, let $k_0$ be
the smallest number of nonmaximal candidates among them. Since the
maximum occurs, $k_0\le M-1$. For some selected level $i$, the output-mass
difference has leading term
\[
 s^{k_0}\sum_{\substack{k_1+\cdots+k_{n-1}=k_0\\k_j\ge1}}
 \frac{M!\,\prod_{j<n}\alpha_j^{k_j}}
      {k_1!\cdots k_{n-1}!(M-k_0)!}
 \bigl[\Phi_i(\boldsymbol k)-\Phi_i^{\rm MH}(\boldsymbol k)\bigr].
\]
Distinct multisets give distinct monomials, so this is a nonzero
homogeneous polynomial. It cannot vanish on the entire positive simplex:
homogeneity would then make it vanish on the open positive orthant,
and hence identically. Thus it is nonzero for some positive
$\boldsymbol\alpha$ summing to $1$. The difference is then of order
$s^{k_0}$, contradicting the $O(s^M)$ bound. This proves the induction and equality on every instance.
For the final claim, a fixed selector attaining the minimax value at
every level satisfies the pointwise bound with $L=1$ by taking the
level to be $Z/\rho_{\max}$. At overlap $1$, all weights are equal.
\end{proof}

\begin{corollary}[Weight-only admissibility]\label{cor:selector-admissibility}
For TV or either direction of KL or chi-square divergence,
if $\mathsf A\in\mathcal W_M$ has divergence no larger than MH's on every
instance, then $\nu_{\mathsf A}=H_M$ everywhere. Thus any weight-only selector
that improves on MH on one instance must be worse on another.
\end{corollary}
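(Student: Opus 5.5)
The plan is to reduce the corollary to Theorem~\ref{thm:selector-rigidity}. Its proof does not use the global hypothesis $\|\nu_{\mathsf A}-p\|_{\rm TV}\le L(1-Z/\rho_{\max})^M$ in full. It uses only the consequence $\|\nu_{\mathsf A}-H_M\|_{\rm TV}=O(s^M)$ as $s\downarrow0$. This is needed along the specific families built there: fixed weight levels $r_1<\cdots<r_n$, proposal masses $s\alpha_i$ on the lower levels and $1-s$ on $r_n$. So it suffices to show that, for each of the listed divergences, ``no worse than MH on every instance'' forces $\|\nu_{\mathsf A}-p\|_{\rm TV}=O(s^M)$ along these families. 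The triangle inequality then gives the $O(s^M)$ distance to $H_M$. The induction in Theorem~\ref{thm:selector-rigidity} then applies verbatim, giving $\nu_{\mathsf A}=H_M$ everywhere. Since $\mathsf A\in\mathcal W_M$, the rule preserves $q(\cdot\mid\rho=r)$, so all comparisons can be made on weight laws via~\eqref{app:weight-reduction}.

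\textbf{Case TV.} This is immediate: $\|\nu_{\mathsf A}-p\|_{\rm TV}\le\|H_M-p\|_{\rm TV}\le(1-\kappa)^M$ by Proposition~\ref{prop:minimax-tv}. In fact Theorem~\ref{thm:selector-rigidity} applies directly with $L=1$.

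\textbf{Cases KL and chi-square.} Pinsker's inequality, in either direction of KL, gives $\|\nu_{\mathsf A}-p\|_{\rm TV}^2\le\tfrac12 D(\nu_{\mathsf A},p)$, and $\KL\le\chi^2$ in the matching direction. So it suffices to show that both $\chi^2(H_M\|p)$ and $\chi^2(p\|H_M)$ are $O(s^{2M})$ on the family.
\begin{itemize}
\item \emph{Forward direction.} The chi-square contraction~\eqref{app:chi-contraction}, iterated $M-1$ times, gives $\chi^2(H_M\|p)\le(1-\kappa)^{2(M-1)}\chi^2(q\|p)$, where $1-\kappa\le s$. I need $\chi^2(q\|p)=O(s^2)$. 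Writing $\chi^2(q\|p)=\E_q[(1-W)^2/W]$ with $W=\rho/Z$, the point is that on this family $Z\to r_n$. Hence $W$ is bounded below by $r_1/r_n>0$ uniformly in small $s$. On the top level, $1-W=O(s)$ with $q$-mass near $1$. On the lower levels, $(1-W)^2/W$ is $O(1)$ with $q$-mass $O(s)$. This gives only $O(s)$, not $O(s^2)$.
\item \emph{Repair.} Use the $p$-form instead: $\chi^2(q\|p)=\E_p[(q/p-1)^2]$ with $q/p=Z/\rho$. The lower levels carry $p$-mass $O(s)$ and have $q/p-1=O(1)$, so this is again $O(s)$. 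Thus the direct chi-square route gives $\KL(H_M\|p)=O(s^{2M-1})$, hence TV $O(s^{M-1/2})$. That falls short of the required $O(s^M)$ when one tracks the $s^{k_0}$ leading term with $k_0\le M-1$.
\item \emph{Why this still suffices.} The rigidity argument needs only $o(s^{k_0})$ with $k_0\le M-1$. So any bound $\|\nu_{\mathsf A}-H_M\|_{\rm TV}=o(s^{M-1})$ suffices, and $O(s^{M-1/2})$ is enough.
\item \emph{Reverse direction.} For $\chi^2(p\|H_M)$ and $\KL(p\|H_M)$, use $H_M\ge\kappa p$. For $M\ge2$ this follows from the minorization~\eqref{eq:minorization}; for $M=1$ it follows from $q\ge\kappa p$. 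This gives $\chi^2(p\|H_M)\le\kappa^{-1}\chi^2(H_M\|p)$ with $\kappa\to1$, so the same order holds.
\end{itemize}

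\textbf{Main obstacle.} The hard step is the bookkeeping that the order of $\|\nu_{\mathsf A}-H_M\|_{\rm TV}$ obtained from the divergence bounds is $o(s^{M-1})$. This is strictly finer than the leading term the induction in Theorem~\ref{thm:selector-rigidity} produces. I would first try to restate the rigidity proof with this weaker hypothesis, checking that $k_0\le M-1$ is all that is used there. If that fails, I would sharpen $\chi^2(q\|p)$ on the family by choosing the top level's proposal mass so that $\E_p(q/p-1)^2=O(s^2)$. The induction only needs some positive $\boldsymbol\alpha$ family, so the parametrization is flexible.

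Finally, the last sentence of the corollary is the contrapositive. A weight-only selector that is strictly better than MH on one instance and no worse on all others would have divergence no larger than MH's everywhere. By the above it would then equal $H_M$ everywhere, which contradicts its being strictly better on that instance.
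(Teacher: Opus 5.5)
Your proposal is correct and follows the paper's proof: you run the same induction along the proposal-mass rays, bound MH's divergences by $O(s^{2M-1})$, and use domination together with Pinsker or the chi-square bound to get $\|\nu_{\mathsf A}-p\|_{\rm TV}=O(s^{M-1/2})=o(s^{M-1})$, which rules out leading terms of order $s^{k_0}$ with $k_0\le M-1$. The only variation is that you obtain the $O(s^{2M-1})$ bound from the contraction~\eqref{app:chi-contraction} applied to $\chi^2(q\|p)=O(s)$ plus $H_M\ge\kappa p$, whereas the paper divides coordinatewise $O(s^M)$ errors by order-$s$ masses. You can drop the ``repair'' step, which rewrites the same quantity, and the fallback of sharpening $\chi^2(q\|p)$ to $O(s^2)$: it is unnecessary, and on these rays it is impossible, because the lower levels carry target mass of order $s$ with order-one ratio error.
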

\begin{proof}
For TV, apply Theorem~\ref{thm:selector-rigidity}. For the other divergences,
use the same induction and proposal-mass rays. MH has coordinate errors
$O(s^M)$. Each nonmaximal target mass is of order $s$, and its MH mass
is also of order $s$ (for $M=1$, both statements follow directly from
$H_1=q$). Thus both chi-square divergences, and hence both KL divergences,
are $O(s^{2M-1})$. Domination and Pinsker's inequality or the chi-square
bound on TV imply $\|\nu_{\mathsf A}-p\|_{\rm TV}=O(s^{M-1/2})$.
This again excludes any leading difference of order $s^{k_0}$ with
$k_0\le M-1$, completing the induction.
\end{proof}

Admissibility leaves room for better worst-case constants on restricted
families, but any weight-only rule that improves on MH somewhere must be
worse than MH on another instance. The proof shows this by letting the
proposal mass on the largest weight approach $1$. These results establish neither uniqueness at a single overlap level
nor admissibility for every convex divergence.

Note that with $q$ and candidate identities known, a selector can detect when the
pool covers the full support and improve on MH without losses elsewhere. For example,
let $M=3$, $q=(1/2,1/2)$, and $\rho=(1/2,1)$. The high-weight target
mass is $2/3$, whereas MH's is $21/32$. On pools containing both states,
priority selection returns the high weight with probability $7/12$ or $5/6$,
depending on the multiplicities. Increasing this probability by $1/72$ is valid in
both cases. These pools have probability $3/4$, so the modified output
has high-weight mass $21/32+(3/4)(1/72)=2/3$ and is exact. A universal
$q$-aware rule can make this change only when the known proposal and
full-coverage pool identify this instance, and otherwise use MH. It is
therefore no worse on any instance and better on this one.

\subsection{A barrier to certifying the remaining error}\label{app:certificate-barrier}

Proposition~\ref{prop:monotone} certifies improvement over the proposal.
Certifying a small remaining distance to the target is harder because
unobserved responses can carry substantial target mass. Below, a
\emph{rollout-only certifier} is a randomized function, common to all instances, of
$N$ iid proposal responses and their recorded weights. It uses no other
instance information and makes no off-candidate weight queries.

\begin{theorem}[Rollout-only certification barrier]\label{thm:certificate-barrier}
Fix $M,N\ge1$ and $\alpha\in(0,1/2)$. Suppose a rollout-only certifier
$U\in[0,1]$ satisfies
$\Pr\{\|H_M-p\|_{\rm TV}\le U\}\ge1-\alpha$ on every positive
finite-support instance with $\rho\le1$. On any such instance, for every
$s\in(0,1)$,
\begin{equation}\label{eq:certificate-barrier}
 \Pr\{U\ge v_s\}\ge(1-s)^N-\alpha,
 \qquad
 v_s=\frac{s}{s+(1-s)Z}-1+(1-s)^M.
\end{equation}
In particular, if $N\le1/(2Z)$,
\[
 \Pr\!\left\{U\ge\frac12-\frac{M}{2N}\right\}\ge\frac12-\alpha.
\]
The bound applies even to constant-weight instances, where $H_M=p$ exactly.
\end{theorem}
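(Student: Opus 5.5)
We argue by indistinguishability, comparing the given instance $(q,\rho)$ with a perturbed instance that looks identical on typical rollouts. Fix $s\in(0,1)$ and build a companion instance $(q',\rho')$ on an enlarged support. We keep the original responses with proposal masses $(1-s)q(y)$ and the original weights $\rho(y)$. We add one fresh response $y^\star$ with proposal mass $s$ and weight $\rho'(y^\star)=1$, the largest allowed under $\rho\le1$. Then
\[
 Z'=(1-s)Z+s,\qquad p'(y^\star)=\frac{s}{s+(1-s)Z}.
\]
Conditional on the event $E$ that none of the $N$ certifier rollouts equals $y^\star$, the rollouts are iid from $q$ with the original recorded weights. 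So the certifier's output $U$ given $E$ has exactly the law it has on the original instance. We also have $\Pr'(E)=(1-s)^N$.

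On the companion instance, MH with $M$ candidates misses $y^\star$ with probability $(1-s)^M$. Lemma~\ref{lem:coverage} therefore gives $H'_M(y^\star)\le1-(1-s)^M$, and hence
\[
 \|H'_M-p'\|_{\rm TV}\ge p'(y^\star)-H'_M(y^\star)\ge v_s.
\]
The certifier's guarantee on the companion instance gives $\Pr'\{U\ge v_s\}\ge1-\alpha$. It follows that
\[
 \Pr'\{U\ge v_s,\,E\}\ge\Pr'(E)-\alpha=(1-s)^N-\alpha.
\]
This event equals $(1-s)^N$ times the original-instance probability $\Pr\{U\ge v_s\}$. So we obtain $\Pr\{U\ge v_s\}\ge(1-s)^N-\alpha/(1-s)^N$, which is slightly weaker than \eqref{eq:certificate-barrier}.

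To reach the stated form $(1-s)^N-\alpha$, I would redo this step with an exact coupling. Draw the original rollouts and certifier randomness, then independently replace each rollout by $y^\star$ with probability $s$. On the event of no replacement, the companion certifier output coincides with the original output $U$. Hence
\[
 \Pr\{U\ge v_s\}\ge\Pr'\{U\ge v_s,\text{ no replacement}\}\ge(1-s)^N-\alpha.
\]
This coupling is the cleanest route, and checking it carefully is the main technical point. One must also verify that the companion is a legitimate positive finite-support instance with $\rho'\le1$, which it is. If policy realizability is wanted, one can mimic the two-token construction of Proposition~\ref{prop:minimax-tv}, though the statement does not require this.

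For the corollary, choose $s=1/(2N)$, so $(1-s)^N\ge1/2$. Then $1-(1-s)^M\le Ms=M/(2N)$. Moreover
\[
 \frac{s}{s+(1-s)Z}\ge\frac{s}{s+Z}\ge\frac12
\]
whenever $Z\le s$, which holds when $N\le1/(2Z)$. Together these give $v_s\ge\tfrac12-\tfrac{M}{2N}$ and the displayed bound. The final remark follows because nothing in the argument used $\rho$ nonconstant; when $\rho$ is constant, $H_M=p$ but $U$ is still forced to be large.
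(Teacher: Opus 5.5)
Your proof is correct and follows the paper's argument: you adjoin a weight-$1$ response of proposal mass $s$, bound $\|H'_M-p'\|_{\rm TV}\ge v_s$ by candidate coverage, and use the fact that on the event the certification pool misses the new response, the certifier's input has exactly its original-instance law. One correction is that your first route was not weaker: dividing $(1-s)^N\Pr\{U\ge v_s\}\ge(1-s)^N-\alpha$ by $(1-s)^N$ gives $1-\alpha/(1-s)^N$, not $(1-s)^N-\alpha/(1-s)^N$; this is the paper's intermediate bound and dominates $(1-s)^N-\alpha$ whenever $(1-s)^N>\alpha$, so your coupling detour is valid but unnecessary.
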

\begin{proof}
Adjoin new responses in a set $A$ with proposal mass $s$ and weight $1$.
On $A^c$, keep the original weights and conditional proposal law. The
new instance has target mass
$p'(A)=s/[s+(1-s)Z]$. Candidate coverage gives
$H'_M(A)\le1-(1-s)^M$, hence $\|H'_M-p'\|_{\rm TV}\ge v_s$.
With probability $e=(1-s)^N$, the certification pool misses $A$.
Conditional on this event, its responses and weights have exactly the
original pool law. Validity on the new instance therefore implies
\[
 e\,\Pr\{U<v_s\}\le\alpha,
 \qquad
 \Pr\{U\ge v_s\}\ge1-\alpha/e\ge e-\alpha
\]
when $e>\alpha$. If $e\le\alpha$, the claimed lower bound is automatic.
For $s=1/(2N)$, Bernoulli's inequality gives $e\ge1/2$ and
$1-(1-s)^M\le M/(2N)$. If $Z\le1/(2N)=s$, then
$p'(A)\ge1/2$, proving the final bound.
\end{proof}

For example, with $Z=0.01$, $N=50$, $M=17$, and $\alpha=0.05$, a valid
upper certificate must be at least $0.33$ with probability at least
$0.45$, even when the actual remaining TV is $0$. By Pinsker's
inequality, an upper certificate for either KL direction obeys the
same probability bound with threshold $2\max\{v_s,0\}^2$ in place of
$v_s$.

\paragraph{Realization with known proposal and support.}
The zero-error obstruction also holds for some instances when the
certifier knows $q$ and the full response support. Use two-token responses
$(j,b)$, with $q(j=\mathrm{hi})=s$, $q(j=\mathrm{lo})=1-s$, and a uniform
binary second token. In the first instance, let both branches have
recorded weight $c\in(0,1)$. In the second, change only the high branch's
weight to $1$. Both are products of two equally weighted specialists:
use the same root law, opposite Bernoulli conditionals with affinity
$c$ on a low-weight branch, and matching uniform conditionals on a
weight-$1$ branch. Both instances have the same $q$ and support.
On the event that the certification pool misses the high branch, every
observed response and weight is the same in both instances. Validity on the second instance
therefore forces $\Pr\{U\ge v_s\}\ge(1-s)^N-\alpha$ on the first, where
$H_M=p=q$ and $Z=c$. Taking $s=1/(2N)\ge c$ gives the same zero-error
bound. Knowledge of the complete weight map would distinguish the
instances and lies outside the claim.

\subsection{Continuation-guided proposals}\label{app:twist}

The local decoder omits the continuation mass $F_{w,t}$
in~\eqref{eq:conditional} when choosing each token. To incorporate an
estimate of this quantity, we introduce a fixed positive function
$\widehat F$ on supported prefixes, with $\widehat F=1$ at complete
responses. This function, called a \emph{twist}, is meant to approximate
the continuation mass, so that $\widehat F(h_tv)\approx F_{w,t}(h_t,v)$.
Suppressing $x,w$, define
\begin{equation}\label{eq:twisted-proposal}
\begin{aligned}
    q^{\widehat F}(v\mid h)
    &=\frac{g(v\mid h)\widehat F(hv)}
    {\sum_u g(u\mid h)\widehat F(hu)},\\
    z^{\widehat F}(h)
    &=\frac{\sum_vg(v\mid h)\widehat F(hv)}{\widehat F(h)},
    \qquad
    \rho^{\widehat F}(y)=\prod_tz^{\widehat F}(h_t).
\end{aligned}
\end{equation}
These are the usual twist-induced proposal and potentials
\citep{guarniero2017iterated,zhao2024twisted}. The twist changes candidate
generation, so the finite-budget comparison must use the same twisted
proposal for both selectors.

\begin{lemma}[Twisted local-normalizer identity]\label{lem:twisted}
For every such twist,
\begin{equation}\label{eq:twisted-identity}
    \ptil_w(y)=\widehat F(x)q^{\widehat F}(y)\rho^{\widehat F}(y),
    \qquad
    p=\frac{q^{\widehat F}\rho^{\widehat F}}
    {\E_{q^{\widehat F}}\rho^{\widehat F}}.
\end{equation}
Consequently, Theorems~\ref{thm:dominance} and~\ref{thm:block-dominance}
and Proposition~\ref{prop:monotone}
apply with the twisted proposal and weights.
\end{lemma}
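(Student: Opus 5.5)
The plan is to verify the identity pointwise by telescoping along a supported response, mirroring the proof of Lemma~\ref{lem:tilt}. Fix a supported response $y$ with prefixes $h_1=(x),h_2,\ldots,h_{T+1}=(x,y)$, so $h_{t+1}=h_ty_t$. By~\eqref{eq:twisted-proposal}, the numerator of $q^{\widehat F}(y_t\mid h_t)$ is $g(y_t\mid h_t)\widehat F(h_{t+1})$ and its denominator is $\widehat F(h_t)z^{\widehat F}(h_t)$. Multiplying over $t$ therefore gives
\[
    q^{\widehat F}(y)\rho^{\widehat F}(y)
    =\prod_{t=1}^T g(y_t\mid h_t)\,\frac{\widehat F(h_{t+1})}{\widehat F(h_t)}
    =\ptil_w(y)\,\frac{\widehat F(h_{T+1})}{\widehat F(h_1)},
\]
using $\prod_t g(y_t\mid h_t)=\ptil_w(y)$ as in Lemma~\ref{lem:tilt}. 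Since $\widehat F=1$ on complete responses and $h_1=x$, this is exactly the first identity in~\eqref{eq:twisted-identity}. Before telescoping, I will confirm that everything is well defined. Positivity of $\widehat F$ and of $g$ on allowed transitions makes each denominator positive, so $q^{\widehat F}$ is a positive law on the same support tree and $\rho^{\widehat F}>0$. EOS padding needs a brief check: at padded steps there is a single allowed token, and that step contributes a factor $\widehat F(hv)/\widehat F(h)$ that still telescopes. So I only need $\widehat F=1$ at the complete (padded) response, as the definition assumes.

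Summing the first identity over $\support_x$ gives $Z_w=\widehat F(x)\,\E_{q^{\widehat F}}\rho^{\widehat F}$. Dividing $\ptil_w$ by $Z_w$, the constant $\widehat F(x)$ cancels, and we obtain $p=q^{\widehat F}\rho^{\widehat F}/\E_{q^{\widehat F}}\rho^{\widehat F}$.

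For the consequences, I will note that the proofs of Theorem~\ref{thm:dominance}, Theorem~\ref{thm:block-dominance}, and Proposition~\ref{prop:monotone} (with Lemmas~\ref{lem:priority}, \ref{lem:threshold}, \ref{lem:budget-convexity}) use only the abstract instance structure: a strictly positive proposal on a finite space, a strictly positive weight, and the target $p=q\rho/\E_q\rho$. The paper remarks explicitly that they never invoke autoregressive structure. The one place to be careful is the bound $\rho\le1$, which the twisted weights need not satisfy. The dominance and certificate arguments, however, only use positivity and are invariant under rescaling $\rho$, so this is harmless. The objective-gap identity~\eqref{eq:objective-gap} concerns $p$ alone, which is unchanged, so the certificate transfers with $G_M$ computed from $\rho^{\widehat F}$. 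I expect no real obstacle. The only subtle points are this $\rho\le1$ caveat and the observation that both selectors must share the twisted proposal.
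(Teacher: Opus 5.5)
Your proposal is correct and matches the paper's proof. Both telescope $g(y_t\mid h_t)\widehat F(h_{t+1})/\widehat F(h_t)$ along the path, sum over $\support_x$ to get $\E_{q^{\widehat F}}\rho^{\widehat F}=Z_w/\widehat F(x)$, and note that the cited results use only positivity and the normalized importance identity; your additional checks (EOS padding, and that $\rho^{\widehat F}\le1$ may fail but is never used and can be restored by rescaling) are correct refinements the paper leaves implicit.
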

\begin{proof}
Along a response, the product of the factors in~\eqref{eq:twisted-proposal} telescopes:
\[
    q^{\widehat F}(y)\rho^{\widehat F}(y)
    =\prod_t g(y_t\mid h_t)
      \frac{\widehat F(h_{t+1})}{\widehat F(h_t)}
    =\frac{\ptil_w(y)}{\widehat F(x)}.
\]
The terminal twist equals $1$. Summing over responses gives
$\E_{q^{\widehat F}}\rho^{\widehat F}=Z_w/\widehat F(x)$.
The cited finite-budget results use only positivity and the normalized
importance identity, which have now been verified.
\end{proof}

The certificate formed from twisted weights bounds improvement over
$q^{\widehat F}$, not over the original $q$. A poor twist can make the
twisted proposal farther from the target than $q$, even though MH still
corrects it toward the target.

\subsection{Approximate specialists}\label{app:approx}

Fix the prompt, weights, positive reference, and $\beta>0$. The
supplied policies remain $\pi_i$, while $\pi_i^*$ are the exact optima
of~\eqref{eq:problem}. Write $p=\widehat p_w$, $p^*=\pi_w^*$,
and define the log residuals and specialist regrets
\[
    b_i=\log(\pi_i/\pi_i^*),\qquad
    \bar b=\sum_iw_i b_i,\qquad
    \varepsilon_i=\beta\KL(\pi_i\|\pi_i^*).
\]
The product formula immediately gives
\begin{equation}\label{eq:approximate-tilt}
    p=\frac{p^*e^{\bar b}}{\E_{p^*}e^{\bar b}},
    \qquad
    \KL(\mu\|p^*)=\KL(\mu\|p)+\E_\mu\bar b
                  -\log\E_{p^*}e^{\bar b}.
\end{equation}
All laws are on the common finite support, and each active policy is
positive there. For an output law $\mu$, we call $\beta\KL(\mu\|p^*)=J_{w,x}^{\rm ext}(p^*)-J_{w,x}^{\rm ext}(\mu)$ its \emph{external regret}.

\subsubsection{Stability under coverage}

\begin{proposition}[Coverage controls target mismatch]\label{prop:approximate-stability}
With $c_i=\max_y p(y)/\pi_i(y)$,
\[
    \KL(p\|p^*)\le\sum_iw_i c_i\KL(\pi_i\|\pi_i^*)
    =\frac1\beta\sum_iw_i c_i\varepsilon_i.
\]
No bounded-residual assumption is required.
\end{proposition}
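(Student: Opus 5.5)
The plan is to express each specialist's regret as the expectation of a nonnegative pointwise function, transfer that expectation from $\pi_i$ to $p$ using the coverage constant $c_i$, and then show that the weighted sum of the transferred quantities already dominates $\KL(p\|p^*)$. The only tools needed are the tilt formula~\eqref{eq:approximate-tilt}, the AM--GM inequality and $\log u\le u-1$.

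\textbf{Step 1: pointwise form of the regret.} Put $\phi(u)=u-1-\log u\ge0$ for $u>0$. Since both $\pi_i$ and $\pi_i^*$ are probability laws on the common positive support, $\E_{\pi_i}[\pi_i^*/\pi_i-1]=0$. Hence
\[
\KL(\pi_i\|\pi_i^*)=\E_{\pi_i}\bigl[\phi(\pi_i^*/\pi_i)\bigr].
\]
Because $\phi\ge0$ and $p\le c_i\pi_i$ pointwise, this gives
\[
\E_p\bigl[\phi(\pi_i^*/\pi_i)\bigr]\le c_i\KL(\pi_i\|\pi_i^*).
\]
No boundedness of $b_i$ enters here, only the finite support.

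\textbf{Step 2: sum over components.} Expand $\E_p\phi(\pi_i^*/\pi_i)=\E_p[\pi_i^*/\pi_i]-1+\E_p b_i$ and take the $w$-weighted sum. This yields
\[
\sum_iw_ic_i\KL(\pi_i\|\pi_i^*)\;\ge\;\E_p\Bigl[\sum_iw_i\pi_i^*/\pi_i\Bigr]-1+\E_p\bar b .
\]
By~\eqref{eq:approximate-tilt}, the target is
\[
\KL(p\|p^*)=\E_p\bar b-\log\E_{p^*}e^{\bar b}.
\]
So it suffices to show
\[
\E_p\Bigl[\sum_iw_i\pi_i^*/\pi_i\Bigr]-1\;\ge\;-\log\E_{p^*}e^{\bar b}.
\]

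\textbf{Step 3: close the gap.} The weighted AM--GM inequality gives $\sum_iw_i\pi_i^*/\pi_i\ge\prod_i(\pi_i^*/\pi_i)^{w_i}=e^{-\bar b}$ pointwise. The tilt formula then gives
\[
\E_pe^{-\bar b}=\frac{\E_{p^*}\bigl[e^{\bar b}e^{-\bar b}\bigr]}{\E_{p^*}e^{\bar b}}=\frac{1}{\E_{p^*}e^{\bar b}}.
\]
Setting $u=1/\E_{p^*}e^{\bar b}$, the required inequality becomes $u-1\ge\log u$, which always holds. Multiplying by $\beta$ converts the bound into the stated form with $\varepsilon_i$.

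\textbf{Expected main obstacle.} The delicate point is Step 2: one must choose the representation of $\KL(\pi_i\|\pi_i^*)$ whose integrand is \emph{nonnegative}, so that the coverage constant can be applied to it. The naive integrand $\log(\pi_i/\pi_i^*)$ changes sign and cannot be transferred this way. One must then check that the linear correction terms this representation introduces are exactly absorbed by the normalizer $\log\E_{p^*}e^{\bar b}$ in Step 3. I expect the $\phi$-representation to be the right choice, since it makes both requirements hold simultaneously.
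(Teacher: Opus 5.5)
Your proof is correct and is essentially the paper's argument: your $\phi(\pi_i^*/\pi_i)=e^{-b_i}-1+b_i$ is the paper's nonnegative function $\vartheta(b_i)$ with $\vartheta(u)=e^{-u}-1+u$. Your coverage transfer and the identity $\E_{\pi_i}\vartheta(b_i)=\KL(\pi_i\|\pi_i^*)$ match its final step, and your weighted AM--GM step is its Jensen step $\E_p\vartheta(\bar b)\le\sum_iw_i\E_p\vartheta(b_i)$; the only difference is that you run the chain in reverse, closing with $\log u\le u-1$ where the paper opens with it.
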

\begin{proof}
Let $\vartheta(u)=e^{-u}-1+u$, a nonnegative convex function.
Using $p^*=pe^{-\bar b}/\E_pe^{-\bar b}$ and
$\log v\le v-1$ gives
\begin{align*}
    \KL(p\|p^*)
    &=\E_p\bar b+\log\E_pe^{-\bar b}
      \le\E_p\vartheta(\bar b)\\
    &\le\sum_iw_i\E_p\vartheta(b_i)
      \le\sum_iw_i c_i\E_{\pi_i}\vartheta(b_i).
\end{align*}
The last inequality uses $p\le c_i\pi_i$ and $\vartheta\ge0$.
Since $\E_{\pi_i}e^{-b_i}=1$,
$\E_{\pi_i}\vartheta(b_i)=\E_{\pi_i}b_i=\KL(\pi_i\|\pi_i^*)$.
\end{proof}

The coverage constants depend on the supplied policies, but because they
are maxima over the entire support, the largest ratios in a finite rollout
sample do not certify them. The bound controls the mismatch between the two targets. For
a finite-budget output $\mu$, the exact decomposition is
\begin{equation}\label{eq:approximate-error-decomposition}
    \KL(\mu\|p^*)=\KL(\mu\|p)+\KL(p\|p^*)
                       +(\E_\mu-\E_p)\bar b.
\end{equation}
Thus the two KL errors cannot simply be added, because the residual shift
also matters. Appendix~\ref{app:hijack} shows that this shift can reverse
the effect of correction.

\subsubsection{Insufficiency of specialist regret alone}

\begin{theorem}[No uniform regret-only guarantee]\label{thm:approximate-impossibility}
Fix $\beta>0$, a positive reference on three responses, and weights
$(1/2,1/2)$. For every $\varepsilon>0$, every composition rule that uses only the supplied
policies, weights, reference, and $\beta$ has unbounded external regret
over reward tuples and supplied
policies satisfying $\max_i\varepsilon_i\le\varepsilon$.
This includes randomized rules and rules using any number of supplied-policy
rollouts, even if given the common regret upper bound $\varepsilon$.
\end{theorem}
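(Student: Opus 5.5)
We will prove Theorem~\ref{thm:approximate-impossibility} by an indistinguishability construction. The idea is to build two instances that present identical inputs to any composition rule but whose scalarized optima $p^*$ sit at opposite ends of the simplex. Fix the reference $\piref$ on $\{1,2,3\}$, $\beta$, and $\varepsilon>0$. We choose one pair of supplied policies $(\pi_1,\pi_2)$, held fixed across both instances, and two reward tuples $(\reward_1,\reward_2)$ and $(\reward_1',\reward_2')$. In each tuple, each exact optimum $\pi_i^*\propto\piref e^{\reward_i/\beta}$ must satisfy $\beta\KL(\pi_i\|\pi_i^*)\le\varepsilon$. Any rule, whether randomized or using arbitrarily many rollouts of the supplied policies, then sees the same inputs in both instances: $\pi_1,\pi_2$, $w$, $\piref$, $\beta$, and $\varepsilon$. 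Its output law $\mu$ is therefore the same in both.

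The key is that a small regret constrains $\pi_i^*$ only where $\pi_i$ has mass. Concretely, let $\pi_1$ put mass $\approx1$ on response $1$ and $\pi_2$ put mass $\approx1$ on response $2$, with small mass $\eta$ elsewhere. Each optimum $\pi_i^*$ can then place almost all of its mass on response $3$. This costs $\KL(\pi_i\|\pi_i^*)$ only of order $\eta\log(1/\pi_i^*(3\text{-complement}))$ plus the log-ratio on the dominant response. To keep the regret below $\varepsilon$, we let $\pi_i^*$ match $\pi_i$ on responses $1,2$ up to a common factor and give response $3$ a huge reward $\reward_i(3)=\beta K$. Then $\pi_i^*(3)$ is large, yet $\KL(\pi_i\|\pi_i^*)=\log(1/\pi_i^*(\{1,2\}))\cdot(1-\pi_i(3))+\pi_i(3)\log(\pi_i(3)/\pi_i^*(3))$. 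This is small only if $\pi_i^*(\{1,2\})$ stays near $1$.

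That observation shows the naive version fails, and it is the main obstacle. The remedy is to exploit the geometric product. The weighted optimum is $p^*\propto\piref e^{(\reward_1+\reward_2)/(2\beta)}$. Take $\reward_1(2)=-\beta K$ and $\reward_2(1)=-\beta K$, placed where the respective supplied policy has only mass $\eta$, while both rewards are moderate on response $3$. Each $\pi_i^*$ then loses the off-response, which costs about $\eta K$ in regret. We choose $\eta=\varepsilon/(2\beta K)$, or equivalently let $\pi_i$ put mass $e^{-K}$-scale there, so that $\eta K\to0$. The product $p^*$ then suppresses both responses $1$ and $2$ by $e^{-K/2}$ relative to response $3$, so $p^*(3)\to1$. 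In the second tuple we swap the roles so that $p^{*\prime}(1)\to1$, for instance by making response $3$ carry the $-K$ penalties in the policy-$\eta$ regions instead. We will verify regret $\le\varepsilon$ by direct computation on three points, tuning $\eta$ and $K$ jointly.

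Finally, $\mu$ is common to both instances, so $\max\{\mu(\{1,2\}),\mu(3)\}\ge1/2$, and the external regret $\beta\KL(\mu\|p^*)$ is at least $\beta\cdot\tfrac12\log(1/p^*(A))-\beta\log2$ for the set $A$ that $\mu$ weights. This follows from binary data processing as in the remark after Proposition~\ref{prop:minimax-tv}, where $p^*(A)\le e^{-K/2}$ up to constants in one of the two instances. Letting $K\to\infty$, with $\eta$ rescaled to keep both specialist regrets below $\varepsilon$, gives unbounded regret. Randomized and rollout-based rules are covered automatically because their output is a fixed function of identical inputs. The delicate step is the simultaneous choice of $(\eta,K)$ and of the swapped tuple, so that both instances stay within regret $\varepsilon$ while their optima concentrate on disjoint sets.
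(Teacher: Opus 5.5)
Your argument works. It shares the paper's skeleton: one fixed supplied pair, two reward tuples that present identical inputs, and scalarized optima concentrating on $\{3\}$ in one tuple and on $\{1,2\}$ in the other. Both the separating mechanism and the final inequality differ, however.

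The paper uses a single parameter. The supplied policies $(s^2,\eta^3,\eta)$ and $(\eta^3,s^2,\eta)$ are themselves exact in the first tuple, and since $\sqrt{s^2\eta^3}\ll\eta$, their product already concentrates on response $3$. The second tuple merely swaps the masses $\eta^3$ and $\eta$ between the off-response and response $3$, at specialist regret $2\beta(\eta-\eta^3)\log(1/\eta)\to0$. You instead give every small entry the same mass $\eta$ and hide a log-penalty $K$ in a low-mass region: the off-response in one tuple, response $3$ in the other. This costs about $\beta\eta K$ in regret and forces you to couple the two parameters.

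For the lower bound, the paper averages the two divergences with weights $5/6$ and $1/6$. This gives $\max\{\KL(\mu\|P),\KL(\mu\|Q)\}\ge-\log\sum_yP(y)^{5/6}Q(y)^{1/6}$ with no case split. Your binary data-processing step, applied to whichever of $\{1,2\}$ and $\{3\}$ receives $\mu$-mass at least $1/2$, is more elementary and reuses the bound after Proposition~\ref{prop:minimax-tv}. The paper's version gets an exact tuple and vanishing specialist regret from one parameter, with no joint scaling to tune. Yours makes explicit why the construction works: a penalty placed where $\pi_i$ is rare is cheap for the specialist but decisive for the product.

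When you write it out, define the optima directly rather than through partially specified rewards. In the first tuple, take $\pi_1^*\propto(1-2\eta,\eta e^{-K},\eta)$ and $\pi_2^*\propto(\eta e^{-K},1-2\eta,\eta)$; in the second, move the factor $e^{-K}$ to response $3$. Set $\reward_i=\beta\log(\pi_i^*/\piref)$. Then $\KL(\pi_1\|\pi_1^*)=\eta K+\log(1-\eta+\eta e^{-K})<\eta K$ in both tuples, and in the first $p^*(\{1,2\})\le2\sqrt{e^{-K}/\eta}$.

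State explicitly the condition that makes the first tuple concentrate, namely $e^{-K}/\eta\to0$. Your aside that one may ``equivalently'' use $e^{-K}$-scale masses is not equivalent. Shrinking the response-$3$ mass to that scale destroys the concentration of $p^*$ on response $3$.

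Also fix two small slips. With $\eta=\varepsilon/(2\beta K)$, $\eta K=\varepsilon/(2\beta)$ does not tend to $0$, though it need not. The swapped tuple gives $p^{*\prime}(\{1,2\})\to1$, not $p^{*\prime}(1)\to1$; the former is all your final step uses.
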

\begin{proof}
For $0<\eta\le1/10$, set $s=\sqrt{1-\eta-\eta^3}$ and supply
\[
    \pi_1=(s^2,\eta^3,\eta),\qquad
    \pi_2=(\eta^3,s^2,\eta).
\]
Consider two reward tuples. In the first, the exact optima equal the
supplied policies. In the second, they are
\[
    \pi_1^*=(s^2,\eta,\eta^3),\qquad
    \pi_2^*=(\eta,s^2,\eta^3).
\]
For either tuple, rewards
$\reward_i=\beta\log(\pi_i^*/\piref)$ realize the specified optima
with the same reference and $\beta$. The supplied-policy regrets are
$0$ in the first case and
\[
    \varepsilon_i=2\beta(\eta-\eta^3)\log(1/\eta)\longrightarrow0
\]
in the second. The two scalarized optima are respectively
\[
    P=\frac{(s\sqrt\eta,s\sqrt\eta,1)}{1+2s\sqrt\eta},
    \qquad
    Q=\frac{(s,s,\eta^{5/2})}{2s+\eta^{5/2}}.
\]
All observable inputs to the rule are identical, so it has the same
output law $\mu$ in the two cases.
Let $A_\eta=\sum_yP(y)^{5/6}Q(y)^{1/6}$ and let
$R_\eta=P^{5/6}Q^{1/6}/A_\eta$. Nonnegativity of KL implies
\begin{align*}
    \max\{\KL(\mu\|P),\KL(\mu\|Q)\}
    &\ge\tfrac56\KL(\mu\|P)+\tfrac16\KL(\mu\|Q)\\
    &=\KL(\mu\|R_\eta)-\log A_\eta
      \ge-\log A_\eta.
\end{align*}
The first two entries of $P$ are at most $\sqrt\eta$. The third
entry of $Q$ is at most $\eta^{5/2}$ since $2s\ge1$.
Bounding the remaining probabilities by $1$ gives
$A_\eta\le3\eta^{5/12}$, and hence worst-case external regret at least
\[
    \beta\left[\frac5{12}\log(1/\eta)-\log3\right]
    \longrightarrow\infty.
\]
Both reward tuples satisfy the stated regret upper bound for all
sufficiently small $\eta$, proving the claim.
\end{proof}

For the product rule itself, substitution gives
$\KL(P\|Q)=\tfrac52\log(1/\eta)+\log2+o(1)$.
Composition concentrates on a response that has small probability
under each specialist, so each specialist's approximation error on that
response has little effect on its individual regret. The reward ranges grow in this construction. The theorem does not impose
a uniform reward-range or probability-floor constraint. The coverage condition of Proposition~\ref{prop:approximate-stability} prevents this amplification.

\subsubsection{Possible deterioration of the external objective}\label{app:hijack}

Gains in the consensus and external objectives can have opposite signs at every
candidate budget, even when approximation leaves the local decoder unchanged. Consider binary
responses of length $T\ge2$, with first token $A$ or $B$ chosen
uniformly by both exact specialists. At every later position in branch
$A$, the two specialists' probabilities for token 1 are $a$ and $1-a$,
respectively. In branch $B$ they are $a_1$ and $1-a_1$, where $0<a<a_1\le1/2$.
Use weights $(1/2,1/2)$. The supplied policies swap the assignments
of $a$ and $a_1$ between branches, keeping root probabilities unchanged.
Rewards $\reward_i=\beta\log(\pi_i^*/\piref)$ make the original pair
exact specialists under any positive reference.

Set
\[
    z_A=2\sqrt{a(1-a)},\quad z_B=2\sqrt{a_1(1-a_1)},\quad
    x=(T-1)\log(z_B/z_A)>0,\quad \sigma(x)=\frac1{1+e^{-x}}.
\]
Both local decoders are uniform over all responses. Their recorded
weights, however, are swapped between branches. Both product targets
are uniform conditional on the first token, with
$p(A)=\sigma(x)$ and $p^*(A)=\sigma(-x)$. Consequently,
\[
    \KL(q\|p^*)=\log\cosh(x/2),\qquad
    \KL(p\|p^*)=x\tanh(x/2).
\]
MH using the supplied weights has branch transitions
$K(A,B)=e^{-x}/2$ and $K(B,A)=1/2$. Solving this two-state recursion gives
\[
    H_M(A)=\sigma(x)+(1/2-\sigma(x))\lambda^{M-1},
    \qquad \lambda=(1-e^{-x})/2.
\]
This increases strictly with $M$, from $1/2$ toward $\sigma(x)$.
The binary KL to $\sigma(-x)<1/2$ is strictly increasing for branch
mass at least $1/2$, so every additional candidate increases external
regret $\beta\KL(H_M\|p^*)$, while consensus KL decreases to $0$.
The certificate is nevertheless positive. For two candidates,
equal-branch pools have $G_2=0$, and mixed pools, which occur with
probability $1/2$, have $G_2=\log\cosh(x/2)$. Thus
\[
    \E G_2=\tfrac12\log\cosh(x/2)>0.
\]
The individual regrets are
\[
    \varepsilon_i=\frac{\beta(T-1)}2
    [\operatorname{kl}(a_1,a)+\operatorname{kl}(a,a_1)],
\]
where $\operatorname{kl}$ denotes binary KL.
The same supplied policies could instead be exact for their own implied
rewards, giving positive external gains from the identical rollouts.
Thus supplied policies and recorded weights alone cannot determine
whether correction improves an unspecified external objective.

\section{Enumerable results and protocols}\label{app:toy-table}

We report deterministic calculations of response-law errors and a
separate Monte Carlo comparison of matched-budget samplers. The latter
estimates total variation after grouping responses into recorded-weight
bins (Appendix~\ref{app:matched-protocol}). These estimates are distinct
from the exact full-response calculations below.

\subsection{Exact computation in weight space}\label{app:scalar-law}

The deterministic calculations below use the classical weight-space reduction for the independence sampler \citep{smith1996exact,wang2022exact}.
Let $\mu_n=qK^n=H_{n+1}$. Here $n$ counts MH updates after proposal
initialization, giving a total candidate budget of $M=n+1$.
The reduction works because, conditional on an exact weight
$\rho(y)=r$, every iterate retains the response law
$q(\cdot\mid\rho=r)$. The target and ordinary SIR share this conditional
law (Appendix~\ref{app:proof-dominance}).

Thus response probabilities within each weight level remain in the
same proportions as sampling proceeds. Only the total probability mass
assigned to that level changes. We can therefore describe each iterate
by a function $f_n$ such that
\[
    \mu_n(y)=q(y)f_n(\rho(y)),\qquad f_0\equiv1.
\]

\paragraph{A recursion on the weights.}
Write $Q$ for the law of $R=\rho(Y)$ when $Y\sim q$. From a response of weight $r$, the probability of accepting a fresh proposal is
\[
    a(r)=\E_{R\sim Q}\min\{1,R/r\}.
\]
An MH update retains mass when a proposal is rejected and transfers mass to accepted proposals. Substituting $\mu_n(y)=q(y)f_n(\rho(y))$ into this update and dividing by $q(y)$ gives
\begin{equation}\label{eq:scalar-recursion}
    f_{n+1}(r)=f_n(r)(1-a(r))
    +\E_{R\sim Q}\!\left[f_n(R)\min\{1,r/R\}\right].
\end{equation}
The second term also accounts for proposals that reproduce the current response. For numerical evaluation, accumulate the rejection probability directly as
\[
    1-a(r)=\E_{R\sim Q}(1-R/r)_+.
\]
This form avoids the loss of precision from subtracting an acceptance
probability close to $1$, because it sums only nonnegative contributions
from weights below $r$.

\paragraph{Divergence and objective.}
Write $\rho_\#\mu$ for the law of $\rho(Y)$ when $Y\sim\mu$. At each attained weight $r$,
\[
    (\rho_\#\mu_n)(r)=Q(r)f_n(r),
    \qquad (\rho_\#p)(r)=\frac{rQ(r)}{Z}.
\]
Consequently, for any response with $\rho(y)=r$,
\[
    \frac{\mu_n(y)}{p(y)}
    =\frac{Zf_n(r)}{r}
    =\frac{(\rho_\#\mu_n)(r)}{(\rho_\#p)(r)}.
\]
Grouping the divergence sum by exact weight now gives
\begin{equation}\label{eq:scalar-divergence}
    D_f(\mu_n\|p)=D_f(\rho_\#\mu_n\|\rho_\#p).
\end{equation}
This identity covers exact sequence-level KL and TV, even with tied
weights. A single score mean or a histogram that merges different weights
does not generally determine these divergences. To apply the reduction
to another sampler, we would need the same conditional response law given
$\rho$. We do not assume this property for SMC or replica exchange, so
their objective evaluation requires separate justification.

The consensus objective can also be evaluated from these scalar quantities. Substituting $\ptil_w=q\rho$ and $\mu_n=qf_n(\rho)$ into~\eqref{eq:consensus} gives
\begin{equation}\label{eq:scalar-objective}
    \widetilde J_x(\mu_n)=\E_{\mu_n}[\log\rho-\log f_n(\rho)],
    \qquad \widetilde J_x(q)=\E_q\log\rho,
    \qquad \widetilde J_x(p)=\log Z.
\end{equation}
For ordinary SIR, conditioning on a tagged candidate and using exchangeability gives the corresponding marginal ratio
\[
    \frac{S_M(y)}{q(y)}=M\,\E\!\left[
    \frac{\rho(y)}{\rho(y)+R_2+\cdots+R_M}\right],
    \qquad R_j\overset{\mathrm{iid}}{\sim}Q,
\]
with an empty sum when $M=1$. This ratio depends only on $\rho(y)$, so the same divergence and objective identities apply with the SIR ratio in place of $f_n$.

\paragraph{Empirical evaluation.}
Replacing $Q$ by an empirical distribution evaluates the sampler under
that empirical proposal. How well this approximates the population law
depends on the estimated weight distribution, including its coverage of
rare weights. Estimation error also enters the plug-in objective through
$\log f_n$. Using independent samples for fitting and evaluation does
not remove the resulting bias. The GSM8K study avoids fitting the scalar law by
using the rollout-only improvement certificate
(Appendix~\ref{app:gsm8k-protocol}).

\subsection{Exact laws and nearby-policy rates}

The enumerable study uses a vocabulary of 4 tokens and 3 autoregressive policies.
At every prefix, each policy's conditional probabilities are drawn
independently from a symmetric Dirichlet distribution with concentration
$\alpha$. Each replication draws $w\sim\mathrm{Dirichlet}(1,1,1)$.
The base, peaky, and flat configurations use $T=6$, with concentrations
$\alpha=0.5$, $0.2$, and $1.0$, respectively. Short and long use $\alpha=0.5$
at $T=3$ and $T=10$, respectively. We use 20 replications per configuration,
except for the long configuration ($T=10$), which uses 10. With a uniform reference and rewards
$\reward_i=\log(\pi_i/\piref)$, these policies are exact optima at
$\beta=1$, so $\widehat p_w=\pi_w^*$ and each KL reduction equals a
regularized-objective gain.

For each instance, enumeration gives $q_w$, $\widehat p_w$, and $\rho_w$
on the complete response space. We verify $\ptil_w=q_w\rho_w$ and
propagate the MH law deterministically. SIR's marginal density ratio is
evaluated by quadrature of the scalar identity in
Appendix~\ref{app:scalar-law}, checked against direct candidate-pool
enumeration on small supports. These calculations introduce no sampling
or density-fitting error. Figure~\ref{fig:toy}, left, plots the mean KL
at candidate budgets $M\in\{1,2,3,4,6,9,13,21,33,51\}$ on logarithmic
axes, omitting values at or below $10^{-12}$. The MH calculation uses
stable rejection accumulation as in Appendix~\ref{app:scalar-law}.
Table~\ref{tab:toy} gives selected values and their variability across
policy configurations.

\begin{table}[t]
\caption{Exact $\KL(\mu_n\|\widehat p_w)$ in nats after $n$ MH updates,
initialized from $q_w$, with total candidate budget $M=n+1$. Entries are
means over 20 policy configurations (10 at $T=10$); standard deviations
appear beneath.}
\label{tab:toy}
\centering
\small
\setlength{\tabcolsep}{4pt}
\begin{tabular}{lcc@{\hspace{1em}}cccccc}
\toprule
& & & \multicolumn{6}{c}{MH updates $n$ (budget $M=n+1$)}\\
\cmidrule(l){4-9}
Configuration & $\alpha$ & $T$ & $0$ & $1$ & $2$ & $3$ & $5$ & $8$ \\
\midrule
Base  & 0.5 & 6  & $0.154$ & $0.022$ & $0.0055$ & $0.0018$ & $0.0003$ & $<\!10^{-4}$ \\
      &     &    & \scriptsize$\pm0.056$ & \scriptsize$\pm0.015$ & \scriptsize$\pm0.005$ & \scriptsize$\pm0.002$ & \scriptsize$\pm0.0004$ & \\
\addlinespace[2pt]
Peaky & 0.2 & 6  & $0.603$ & $0.181$ & $0.084$ & $0.048$ & $0.021$ & $0.0089$ \\
      &     &    & \scriptsize$\pm0.405$ & \scriptsize$\pm0.181$ & \scriptsize$\pm0.098$ & \scriptsize$\pm0.061$ & \scriptsize$\pm0.032$ & \scriptsize$\pm0.018$ \\
\addlinespace[2pt]
Flat  & 1.0 & 6  & $0.034$ & $0.0017$ & $0.0002$ & $<\!10^{-4}$ & $<\!10^{-4}$ & $<\!10^{-4}$ \\
      &     &    & \scriptsize$\pm0.012$ & \scriptsize$\pm0.0012$ & \scriptsize$\pm0.0002$ & & & \\
\addlinespace[2pt]
Short & 0.5 & 3  & $0.042$ & $0.0024$ & $0.0003$ & $<\!10^{-4}$ & $<\!10^{-4}$ & $<\!10^{-4}$ \\
      &     &    & \scriptsize$\pm0.034$ & \scriptsize$\pm0.0038$ & \scriptsize$\pm0.0006$ & & & \\
\addlinespace[2pt]
Long  & 0.5 & 10 & $0.270$ & $0.060$ & $0.023$ & $0.011$ & $0.0032$ & $0.0008$ \\
      &     &    & \scriptsize$\pm0.125$ & \scriptsize$\pm0.044$ & \scriptsize$\pm0.020$ & \scriptsize$\pm0.010$ & \scriptsize$\pm0.0034$ & \scriptsize$\pm0.0009$ \\
\bottomrule
\end{tabular}
\end{table}

One update reduces mean KL by $85.9\%$ in base, $70.0\%$ in peaky,
$95.0\%$ in flat, $94.4\%$ in short, and $77.6\%$ in long.

\paragraph{Nearby-policy illustration.}
The right panel of Figure~\ref{fig:toy} uses a separate fixed instance
of Theorem~\ref{thm:nearby}: vocabulary size 3, horizon 3, and 2 equally
weighted policies, giving 27 complete responses. The common policy has
independent $\mathrm{Dirichlet}(1,1,1)$ conditionals at each prefix. Its two
Gaussian logit-perturbation arrays are fixed while $\delta$ varies.
We evaluate the local decoder and both selectors for $M=2$ and $M=3$ over
12 logarithmically spaced perturbation sizes from $0.01$ to $1$.
We re-evaluate the fixed instance with 90-digit arithmetic to resolve
the small MH divergences without floating-point cancellation, keeping
the policies and perturbation sizes unchanged. MH uses its dense
transition kernel, and SIR uses direct candidate-pool enumeration.
The guide slopes show asymptotic orders rather than fitted rates.
This experiment varies policy separation at fixed support, horizon,
and budget, as the theorem requires.

\paragraph{Greedy initialization.}
In Figure~\ref{fig:toy-greedy}, MH starts from the response obtained by
choosing the most probable local-product token at each step. This chain begins
farther from the target than the proposal-initialized chain but converges to the
same target, as Proposition~\ref{prop:convergence} guarantees. After 8
updates, mean KL is below $0.012$ in every configuration and below
$10^{-4}$ in base, flat, and short. The comparison with ordinary SIR in
Theorem~\ref{thm:dominance} requires proposal initialization and does
not cover this chain.

\begin{figure}[t]
\centering
\includegraphics{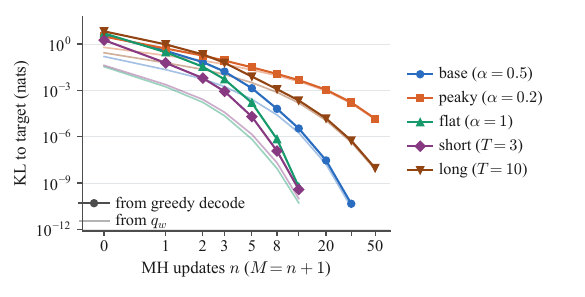}
\caption{Exact KL of MH started from the greedy local-product decode, averaged
within each configuration of Table~\ref{tab:toy}. The budget $M=n+1$ counts
the greedy decode and $n$ independent MH proposals. Faint lines repeat the
proposal-initialized MH curves of Figure~\ref{fig:toy} for comparison. Values
at or below $10^{-12}$ are omitted.}
\label{fig:toy-greedy}
\end{figure}

\subsection{Reward--divergence view}\label{app:frontier}

At $T=10$ and equal weights, take a uniform reference and the synthetic
rewards $\reward_i=\log(\pi_i/\piref)$ introduced at the start of
this appendix. The scalarized reward is then the implicit
checkpoint score $r_w=\sum_iw_i\log(\pi_i/\piref)$. Varying the
regularization strength in
$\pi_\lambda\propto\piref\exp(\reward_w/\lambda)$, $\lambda>0$,
traces the optimal reward--divergence trade-off at fixed reward weights.
Its point at $\lambda=1$ is the correction target.
Figure~\ref{fig:toy-frontier} shows the trade-off and both MH trajectories.
Both initializations approach the target, but expected reward itself
need not increase: the greedy-initialized chain can lower expected reward
while improving its regularized objective.

\begin{figure}[t]
\centering
\includegraphics[width=\textwidth]{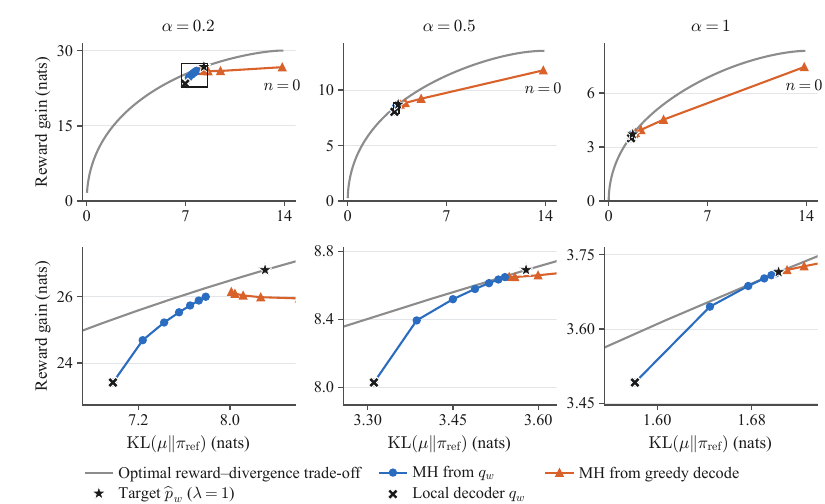}
\caption{Optimal reward--divergence trade-offs and MH trajectories at
$T=10$, with equal reward weights and a uniform reference. Columns vary
the Dirichlet concentration; the bottom row enlarges the boxed region
around the target. Reward is centered at its reference expectation,
$\E_\mu\reward_w-\E_{\piref}\reward_w$. Curves average 10 policy draws;
trajectory markers show initialization ($n=0$) and 6 MH updates.}
\label{fig:toy-frontier}
\end{figure}

\subsection{Matched-budget samplers and metric}\label{app:matched-protocol}

Table~\ref{tab:matched-detail} compares selection and structured-sampling rules
at a budget of 9 full-sequence equivalents of conditional evaluations. For each primary
instance, we use 8 independent batches of $10^6$ complete
algorithm runs per method. All entries, including MH and $q_w$, use the
same empirical metric. Exact controls are given separately below.
The uncorrected proposal is shown at its natural one-rollout cost,
since generating unused candidates would not change its law.

\paragraph{Sampling-importance-resampling.}
Ordinary SIR draws $M$ independent responses from $q_w$ and selects one
with probability proportional to its recorded $\rho_w$ weight
\citep{owen2013mc}. The primary comparison uses $M=9$.

\paragraph{Sequential Monte Carlo.}
Bootstrap SMC maintains $M$ partial responses. At each step, every
particle is extended by one token from the local product decoder, and its
weight is multiplied by $z_{w,t}$. Multinomial resampling draws particles in proportion to their
weights and resets those weights to equal values. We resample either
after every nonterminal step or whenever the effective sample size (ESS)
falls below half the particle count. For normalized weights $a_j$,
$\mathrm{ESS}=1/\sum_ja_j^2$. The final response is selected using the
remaining weights.

\paragraph{Same-space replica exchange.}
Three replicas target distributions proportional to $q_w\rho_w^b$
for $b\in\{1,1/2,0\}$. Each starts from an independent $q_w$ rollout.
At budget $M$ divisible by 3, we run $M/3-1$ sweeps. Each sweep applies
an independence-MH update to every replica and then proposes adjacent
swaps in increasing replica-index order. The swap acceptance ratio
preserves the joint target. We return the $b=1$ replica. At $M=9$,
3 initial rollouts and 2 update sweeps cost 9 complete decodes.
The swaps add no conditional evaluations.

\paragraph{Length-space replica exchange.}
Following \citet{he2026recipes}, the intermediate targets are products
of policy prefix probabilities. Replicas have lengths $0,2,\ldots,T$.
An adjacent exchange truncates the longer prefix and extends the shorter
one with a fresh 2-token chunk. Its acceptance ratio uses the local
normalizer products on the new and removed chunks. Initialization costs
$0+2+\cdots+T$ token evaluations. Each sweep pair visits nonoverlapping
even edges and then odd edges, and the longest response is returned.
At $T=6$, 7 pairs cost $12+7\times6=54=9T$ evaluations. At $T=10$,
6 pairs cost $30+6\times10=90=9T$. The implementation includes both
initialization and extensions in the charged budget.

\paragraph{Maximum-log-normalizer selection.}
The $\maxlr$ baseline returns the response with the largest $\lrho$ among
9 independent $q_w$ candidates. It selects on the correction weight,
not on reward. In the base and long
instances, it is farther from the target than $q_w$, which shows that maximizing a correction weight need not
approximate sampling from the target. The corrected samplers are matched
by conditional-evaluation counts, although batching and implementation can
still affect elapsed time.

\paragraph{Binned total variation.}
Responses are sorted by $\lrho$, with bin boundaries at cumulative
target-probability thresholds $j/32$, $j=1,\ldots,31$. Discreteness
means that bins need not have equal mass, and the saved ordering can
split tied weights. Every method uses the same partition and exact
target bin masses. TV on these bins lower-bounds full-response TV.
The exact scalar-law equality in Appendix~\ref{app:scalar-law} applies
to weight level sets. Combining different weights into bins can lose
information.

The MH--SIR ordering nevertheless holds for these consecutive bins.
To see this, let $C_k$ be the union of the first $k$ bins and put
$a_k(r)=q(C_k\mid\rho=r)$. This function is nonincreasing, including
when a bin boundary splits a tied level. Both selectors preserve the
proposal conditional law within each level, so the stochastic ordering
in Appendix~\ref{app:proof-dominance} gives
\[
    S_M(C_k)=\E_{\mathsf S_M}a_k
    \ge\E_{\mathsf H_M}a_k=H_M(C_k).
\]
The binned MH-to-target ratios are also nonincreasing: each is a
target-weighted average of the decreasing pointwise ratio over a
consecutive bin. The same supporting-line argument therefore proves
the binned divergence ordering for every convex generator. This
population statement does not imply the same ordering for individual
Monte Carlo estimates.

\begin{table}[t]
\centering
\small
\caption{Matched-budget comparison in units of $10^{-3}$ TV.
Sampled rows give the mean $\pm$ across-batch standard deviation of the binned
TV over 8 batches of $10^6$ runs. The corrected samplers use 9 full-sequence
equivalents of conditional policy evaluations. The last sampled row draws from the target itself and shows the
sampling variation of the estimate. Exact rows propagate each law
deterministically. Each configuration uses the same policy instance across methods.}
\label{tab:matched-detail}
\begin{tabular}{lr@{\,$\pm$\,}lr@{\,$\pm$\,}lr@{\,$\pm$\,}l}
\toprule
Method & \multicolumn{2}{c}{Base} & \multicolumn{2}{c}{Peaky} & \multicolumn{2}{c}{Long}\\
\midrule
\multicolumn{7}{l}{\textit{Estimated binned TV}}\\
$q_w$ (1 rollout) & 237.765 & 0.509 & 515.610 & 0.438 & 302.427 & 0.483\\
MH & 2.506 & 0.254 & 37.762 & 0.234 & 7.596 & 0.217\\
Ordinary SIR & 30.097 & 0.662 & 86.999 & 0.639 & 42.918 & 0.450\\
SMC, adaptive resampling & 30.064 & 0.435 & 73.574 & 0.481 & 43.051 & 0.349\\
SMC, resampling every step & 52.170 & 0.214 & 106.350 & 0.501 & 68.462 & 0.512\\
Same-space replica exchange & 14.634 & 0.438 & 109.666 & 0.441 & 33.324 & 0.480\\
Length-space replica exchange & 10.027 & 0.607 & 116.224 & 0.608 & 53.493 & 0.530\\
\maxlr{} & 472.451 & 0.281 & 215.438 & 0.708 & 412.216 & 0.333\\
Target $\widehat p_w$ (direct draws) & 2.161 & 0.292 & 1.524 & 0.201 & 2.184 & 0.249\\
\midrule
\multicolumn{7}{l}{\textit{Exact TV}}\\
$q_w$, binned & \multicolumn{2}{c}{237.719} & \multicolumn{2}{c}{515.909} & \multicolumn{2}{c}{302.413}\\
$q_w$, full response & \multicolumn{2}{c}{237.773} & \multicolumn{2}{c}{516.016} & \multicolumn{2}{c}{302.615}\\
MH, binned & \multicolumn{2}{c}{1.354} & \multicolumn{2}{c}{37.619} & \multicolumn{2}{c}{7.455}\\
MH, full response & \multicolumn{2}{c}{1.355} & \multicolumn{2}{c}{37.678} & \multicolumn{2}{c}{7.456}\\
\bottomrule
\end{tabular}
\end{table}

\paragraph{Exact controls and sampling variation.}
The exact rows of Table~\ref{tab:matched-detail} give the binned and
full-response TV of $q_w$ and MH on the same primary instances. The target row applies the
empirical metric to independent draws from $\widehat p_w$ under the same
batch design. It describes sampling variation at this sample size. It is
neither a universal significance threshold nor a bias correction to subtract
from other entries.

\paragraph{Sensitivity to instance and budget.}
At budget 9, the sensitivity sweep uses 10 policy seeds each for base
and peaky and 3 for long, with $10^5$ runs per batch for the additional
instances. MH has lower binned TV than ordinary SIR on all 23 instances, and all stored
pointwise 95\% Monte Carlo intervals for their differences exclude $0$.

The comparison with structured samplers varies across instances.
Against all structured comparators, MH has the smallest point estimate
on 7 of 10 base, 7 of 10 peaky, and all 3 long instances. Adaptive SMC
is $0.0145$ TV below MH on one peaky instance, with interval
$[0.0128,0.0163]$ for MH minus SMC. On one base instance, however, extending
the evaluation to $10^6$ runs per batch does not resolve the apparent
same-space-exchange advantage. MH minus exchange is $+0.0000223$,
with interval $[-0.0003776,+0.0004221]$.

For each comparison, we subtract the empirical binned TV values at matching
batch indices to obtain 8 differences $d_b$. The pointwise 95\%
Student-$t$ interval is $\bar d\pm t_{7,0.975}s_d/\sqrt8$, where $s_d$
is the sample standard deviation of those differences and
$t_{7,0.975}\simeq2.365$. The methods use independent random streams.

Budgets 18 and 36 were evaluated on the same primary instances.
MH remains below ordinary SIR at both budgets. At budget 36, the MH
and replica-exchange discrepancies are close to the exact-target
sampling diagnostic, so the empirical metric provides little separation
among them.

\paragraph{Objective checks.}
Across 51 enumerable policy instances, direct regularized-objective
increments were cross-checked against forward-KL differences and the
scalar-weight calculation. Identity residuals are at floating-point precision.

\paragraph{Recorded runs.}
Each configuration, policy instance, budget, method, and batch uses an
independent random stream. The archived outputs include seeds, bin maps,
counts, charged costs, source snapshots, and aggregate tables. Nested 10,000-, 100,000-, and
1,000,000-sample prefixes share draws and are correlated. The values in
Table~\ref{tab:matched-detail} come from this matched-protocol run collection.

\section{GSM8K experiment}

\subsection{GSM8K certificate evaluation}\label{app:gsm8k-protocol}

\paragraph{Models and data.}
The experiment composes 2 DPO adapters of
\href{https://huggingface.co/google/gemma-4-E2B-it}{\nolinkurl{google/gemma-4-E2B-it}}, trained for answer accuracy and
brevity, respectively. Both use a DPO coefficient of $0.1$ and a candidate
pool of 4,000 distinct GSM8K training problems with 4 responses per
problem. Brevity preferences rank responses by the continuous score $-L/1024$, where $L$
counts generated tokens, including EOS but excluding the prompt and
padding. DPO uses this ordering rather than the numeric magnitudes. The supplied checkpoints are fixed throughout evaluation.
We report consensus improvement ($\lambda=1$
in~\eqref{eq:objective-gap}), independently of the DPO training coefficient.

The final cohort contains all 1,319 official GSM8K test problems.
An exact-question comparison found no overlap with the specialist-training
pool. The protocol, checkpoints, weights, and sampling settings were
fixed before final generation. The dataset revision, original test
bytes, and prompt manifest are archived under
\nolinkurl{data/certificates/gsm8k}.

\paragraph{Sampling and candidate budgets.}
We evaluate accuracy weights $w_1\in\{0.25,0.5,0.75\}$ and brevity weight
$1-w_1$ on the same prompt cohort. Generation uses temperature 1, the
full vocabulary without top-$k$ or nucleus truncation, and a 1,024-token
cap. Reasoning is disabled by passing \texttt{enable\_thinking=False} to
the tokenizer's chat template. Policy inference uses FP32, and normalization
and probability accumulation use FP64. EOS contributes
its probability and counts toward response length. Reaching the cap
adds no forced terminal token. Empty, invalid, and capped responses
remain in the evaluation.
The prompt consists of the following text, followed by a blank line and the question:
\begin{quote}\ttfamily\raggedright
Solve the following problem. Show your solution and end with exactly one
line of the form \#\#\#\# <numerical answer>.
\end{quote}

For each prompt and weight, 4 replications each draw 2 independent
groups of 17 fresh local-product responses. This gives 8 groups per
prompt and $N=10{,}552$ groups per weight. The complete study uses
$3\times10{,}552\times17=538{,}152$ responses, with no fitting pool.
Each group supplies all budgets $M=1,\ldots,17$ through its prefixes.
The first candidate initializes MH, and the remaining candidates are
processed once in chronological order with independently generated
acceptance uniforms. Content-derived seeds give each response its own
sampling stream. The generation settings and code used for sampling
are archived with the results.

Replicated groups estimate the samplers' expected behavior over fresh
candidate pools. They are evaluation overhead, not the deployment budget.
At each rollout budget $M$, both MH and ordinary SIR use $M$ candidate
rollouts per output, while the local-decoding baseline uses 1.
We use no burn-in and exclude initialization from the acceptance rates.

\paragraph{Simultaneous confidence bounds.}
For every group and budget, we compute $G_M$ in~\eqref{eq:pool-gap} from
the first $M$ original proposal weights, rather than the retained MH
states. With equal numbers of groups per prompt, the pooled mean estimates
the equally weighted average over the fixed test cohort. With clipping
cap $\Gamma=10$ nats and familywise error level $\alpha=0.05$, the
lower confidence bound is
\[
    L_M=\max\!\left\{0,\;
    \frac1N\sum_{j=1}^N\min(G_M^{(j)},10)
    -10\sqrt{\frac{\log(144/0.05)}{2\cdot10{,}552}}\right\}.
\]
We set $L_1=0$ exactly. The common subtraction is $0.194279$ nats.
The factor 144 retains the original prespecified family of 3 tasks,
3 weights, and 16 nonzero update budgets. This protocol planned GSM8K,
HH-RLHF, and Reddit summarization studies. We retain its correction for
GSM8K, although only 48 of those comparisons are reported here. The final
HH-RLHF study uses its own family of 48 comparisons
(Appendix~\ref{app:hh-protocol}). Each study therefore has a simultaneous
95\% confidence statement within its reported family, not a joint 95\%
statement across the two studies.

The bound assumes independent groups conditional on the fixed prompts
and checkpoints. Group distributions may differ across prompts.
The union bound does not require independence across nested budgets or
composition weights. On the same confidence event, both MH and SIR have
average consensus improvement at least $L_M$ over fresh proposal pools
for these prompts. No additional comparison factor is needed for the
two methods, because Proposition~\ref{prop:monotone} lower-bounds both
gains by the same expectation. This is not a confidence statement over
a population of prompts or about the MH--SIR difference.

\begin{table}[t]
\centering
\caption{GSM8K certificate results, in nats. Means are unclipped; the
simultaneous 95\% lower bounds use certificates clipped at 10 nats, and the
clipped columns give the percentage of groups whose certificate was clipped. Cumulative MH acceptance
over the first $n$ updates excludes initialization.}
\label{tab:gsm8k-certificate}
\small
\setlength{\tabcolsep}{4pt}
\begin{tabular}{ccccccccc}
\toprule
& \multicolumn{3}{c}{$M=9$} & \multicolumn{3}{c}{$M=17$} & \multicolumn{2}{c}{MH acceptance (\%)}\\
\cmidrule(lr){2-4}\cmidrule(lr){5-7}\cmidrule(l){8-9}
$w_1$ & Mean $G_M$ & Lower bound & Clipped & Mean $G_M$ & Lower bound & Clipped & $n=8$ & $n=16$\\
\midrule
0.25 & 0.868 & 0.674 & 0.038 & 0.937 & 0.742 & 0.038 & 53.92 & 50.53\\
0.50 & 1.700 & 1.494 & 0.597 & 1.848 & 1.644 & 0.663 & 46.10 & 41.14\\
0.75 & 1.413 & 1.216 & 0.265 & 1.528 & 1.330 & 0.256 & 49.61 & 45.09\\
\bottomrule
\end{tabular}
\end{table}

Table~\ref{tab:gsm8k-certificate} gives the certificate and acceptance
endpoints. At $M=9$, clipping affects $0.038\%$, $0.597\%$, and $0.265\%$
of groups in increasing order of $w_1$. The solid curves in the top-left panel of
Figure~\ref{fig:llm-results} estimate $\E[G_M]$, not the full consensus gain.
Positive lower bounds establish improvement over local decoding without
estimating either corrected sampler's marginal output probabilities.
Acceptance is a separate sampling diagnostic and does not measure
remaining divergence from the target.

\paragraph{Choice of a fitting-free endpoint.}
The scalar identities in Appendix~\ref{app:scalar-law} also suggest
estimating each sampler's objective by fitting its output density ratio.
Independent fitting does not remove nonlinear plug-in bias, and the
resulting intervals require separate coverage validation. Controlled
finite-state tests did not pass the prespecified bias and coverage
thresholds, so fitted objective estimates do not enter this evaluation.
Those tests and the earlier fitted estimates remain archived under
\nolinkurl{results/llm-regularised-paired-validation} and
\nolinkurl{results/gsm8k-draft-update-20260918}, respectively.

\paragraph{Separate specialist evaluation.}
A separate evaluation draws 4 responses per adapter on 1,000 validation
problems. The accuracy adapter achieves $86.875\%$ accuracy with a
mean length of $390.77$ tokens, compared with $85.425\%$ and $250.95$ tokens for
the brevity adapter. Paired 95\% prompt-bootstrap intervals from 2,000
draws are $[132.68,146.88]$ tokens saved and $[0.325,2.65]$ percentage
points of accuracy lost. This evaluation was expanded after an initial
128-problem comparison. It uses BF16, top-$p=0.95$, and top-$k=64$ on
a different prompt set. These measurements characterize the supplied
specialists and are not task-performance results for the full-vocabulary
FP32 composition study. They are archived in
\nolinkurl{results/gsm8k-draft-update-20260918/specialists.json}.

\paragraph{Evaluation artifacts and replay.}
The completed certificate curves and clipping statistics are archived
under \nolinkurl{results/certificates/gsm8k}, together with per-step
acceptance counts and per-prompt acceptance rates. Record hashes and
frozen launch sources accompany the results. The statistical design is
specified in \nolinkurl{experiments/certificates/PROTOCOL.md}.

Within the manuscript directory, \nolinkurl{figures/SOURCES.json}
records hashes of the unchanged plotting inputs. The script
\nolinkurl{scripts/verify_figure_data.py} checks the saved certificate
arithmetic and acceptance aggregates. These files support local figure
regeneration without model inference.

Recomputing certificates from observations requires the original grouped
proposal weights. Acceptance replay additionally requires the recorded
uniforms. The full raw batches are external, checksum-addressed artifacts
listed in the companion archive, rather than bundled in the manuscript
export. The original analysis entry point also checks hashes of model
and adapter files, so reproducing that command requires those assets to
be restored. Checks of the saved aggregates and provenance therefore
remain distinct from a full-record or checkpoint-level rerun.

\subsection{GSM8K output-level replay and cost}\label{app:output-evaluation}
The analysis reuses all 538,152 archived responses and verifies the 7,914 raw
batch checksums, record identities, saved seeds and acceptance uniforms, the probability identity,
and GSM8K answer extraction. The bootstrap resamples whole prompts, retaining all 8
groups of each resampled prompt, and uses common draws across methods, weights, and budgets (10,000
draws). We report descriptive pointwise 95\% percentile
intervals, not the simultaneous fixed-cohort certificate bounds of Appendix~\ref{app:gsm8k-protocol}.
The local decoder returns the first proposal (proposal 0). MH replays the
fresh proposals in generation order. SIR results are exact expectations conditional on each pool.
Independently seeded sampled selections are also archived. Majority voting counts normalized valid
answers, breaks ties by earliest appearance, returns the first response with the
winning answer, and falls back to proposal 0 when all answers are invalid.
Invalid candidates still count toward generation cost, and no selection rule
uses the gold answer. Token costs include EOS tokens and all discarded candidates.
Table~\ref{tab:gsm8k-output} reports output-level
results for each weight, and Table~\ref{tab:gsm8k-mh-sir} reports the paired MH--SIR differences.
\begin{table}[p]
\centering
\caption{Output-level GSM8K results at each candidate budget. Length counts the
tokens of the returned response; total tokens include all generated candidates.
At $M=1$, MH, SIR, and majority voting all return the local decoder's single
proposal. Brackets give pointwise 95\% prompt-bootstrap intervals.}
\label{tab:gsm8k-output}
\footnotesize
\begin{tabular}{rrlrcrc}
\toprule
$M$ & Total tokens & Method & Accuracy (\%) & 95\% CI & Length & 95\% CI\\
\midrule
\multicolumn{7}{l}{\textit{Accuracy weight $w_1=0.25$}}\\
1 & 259.6 & Local decoder ($q_w$) & 88.50 & [86.99, 89.93] & 259.6 & [254.1, 265.3]\\
\addlinespace[2pt]
2 & 519.8 & MH & 89.16 & [87.65, 90.58] & 251.2 & [246.2, 256.3]\\
 &  & SIR & 89.13 & [87.63, 90.54] & 251.6 & [246.6, 256.8]\\
 &  & Majority vote & 88.98 & [87.50, 90.40] & 258.8 & [253.4, 264.3]\\
\addlinespace[2pt]
3 & 779.8 & MH & 89.26 & [87.77, 90.68] & 248.4 & [243.7, 253.3]\\
 &  & SIR & 89.30 & [87.81, 90.71] & 249.0 & [244.2, 253.9]\\
 &  & Majority vote & 90.14 & [88.65, 91.57] & 257.5 & [252.2, 262.9]\\
\addlinespace[2pt]
5 & 1299.3 & MH & 89.48 & [87.97, 90.88] & 246.5 & [241.9, 251.2]\\
 &  & SIR & 89.48 & [88.01, 90.87] & 246.7 & [242.2, 251.4]\\
 &  & Majority vote & 90.52 & [89.05, 91.94] & 257.0 & [251.7, 262.3]\\
\addlinespace[2pt]
9 & 2339.6 & MH & 89.45 & [87.94, 90.87] & 244.6 & [240.1, 249.2]\\
 &  & SIR & 89.58 & [88.10, 90.98] & 245.1 & [240.6, 249.7]\\
 &  & Majority vote & 90.84 & [89.36, 92.26] & 256.5 & [251.4, 261.7]\\
\addlinespace[2pt]
17 & 4422.8 & MH & 89.68 & [88.17, 91.08] & 243.6 & [239.2, 248.0]\\
 &  & SIR & 89.64 & [88.17, 91.04] & 244.1 & [239.7, 248.6]\\
 &  & Majority vote & 91.24 & [89.77, 92.66] & 255.7 & [250.7, 260.8]\\
\midrule
\multicolumn{7}{l}{\textit{Accuracy weight $w_1=0.5$}}\\
1 & 291.6 & Local decoder ($q_w$) & 88.88 & [87.40, 90.29] & 291.6 & [285.0, 298.5]\\
\addlinespace[2pt]
2 & 583.5 & MH & 89.83 & [88.36, 91.21] & 277.5 & [271.7, 283.5]\\
 &  & SIR & 89.74 & [88.28, 91.11] & 277.9 & [272.1, 283.9]\\
 &  & Majority vote & 89.39 & [87.93, 90.77] & 289.5 & [283.1, 295.9]\\
\addlinespace[2pt]
3 & 874.0 & MH & 90.09 & [88.65, 91.47] & 272.3 & [266.8, 277.9]\\
 &  & SIR & 90.03 & [88.58, 91.40] & 272.8 & [267.3, 278.4]\\
 &  & Majority vote & 90.37 & [88.89, 91.76] & 288.5 & [282.2, 294.9]\\
\addlinespace[2pt]
5 & 1456.7 & MH & 90.16 & [88.70, 91.54] & 268.2 & [263.0, 273.5]\\
 &  & SIR & 90.13 & [88.69, 91.52] & 268.7 & [263.5, 274.0]\\
 &  & Majority vote & 91.03 & [89.59, 92.40] & 287.8 & [281.6, 294.1]\\
\addlinespace[2pt]
9 & 2624.2 & MH & 90.09 & [88.61, 91.48] & 265.4 & [260.3, 270.5]\\
 &  & SIR & 90.18 & [88.72, 91.56] & 266.1 & [261.0, 271.2]\\
 &  & Majority vote & 91.04 & [89.58, 92.45] & 287.4 & [281.3, 293.7]\\
\addlinespace[2pt]
17 & 4955.8 & MH & 90.15 & [88.68, 91.54] & 263.5 & [258.5, 268.6]\\
 &  & SIR & 90.17 & [88.70, 91.54] & 263.9 & [258.9, 268.9]\\
 &  & Majority vote & 91.31 & [89.84, 92.69] & 286.6 & [280.4, 292.8]\\
\midrule
\multicolumn{7}{l}{\textit{Accuracy weight $w_1=0.75$}}\\
1 & 330.1 & Local decoder ($q_w$) & 89.16 & [87.65, 90.53] & 330.1 & [322.2, 338.0]\\
\addlinespace[2pt]
2 & 661.8 & MH & 89.92 & [88.46, 91.28] & 312.2 & [305.1, 319.3]\\
 &  & SIR & 89.85 & [88.39, 91.22] & 312.7 & [305.6, 319.7]\\
 &  & Majority vote & 89.73 & [88.26, 91.08] & 327.0 & [319.5, 334.6]\\
\addlinespace[2pt]
3 & 992.3 & MH & 90.16 & [88.69, 91.54] & 304.3 & [297.7, 310.9]\\
 &  & SIR & 90.11 & [88.67, 91.48] & 305.1 & [298.5, 311.7]\\
 &  & Majority vote & 90.58 & [89.12, 91.94] & 325.3 & [318.0, 332.7]\\
\addlinespace[2pt]
5 & 1653.7 & MH & 90.15 & [88.68, 91.55] & 297.6 & [291.4, 303.9]\\
 &  & SIR & 90.24 & [88.79, 91.61] & 298.2 & [292.0, 304.5]\\
 &  & Majority vote & 91.06 & [89.60, 92.42] & 324.2 & [317.0, 331.5]\\
\addlinespace[2pt]
9 & 2976.6 & MH & 90.47 & [89.00, 91.83] & 292.3 & [286.4, 298.3]\\
 &  & SIR & 90.35 & [88.89, 91.70] & 293.6 & [287.6, 299.6]\\
 &  & Majority vote & 91.22 & [89.76, 92.60] & 323.9 & [316.8, 331.2]\\
\addlinespace[2pt]
17 & 5619.2 & MH & 90.29 & [88.81, 91.66] & 288.4 & [282.7, 294.1]\\
 &  & SIR & 90.42 & [88.97, 91.78] & 289.5 & [283.8, 295.1]\\
 &  & Majority vote & 91.48 & [90.03, 92.85] & 323.9 & [316.7, 331.1]\\
\bottomrule
\end{tabular}
\end{table}

\makeatletter\setlength{\@fptop}{0pt}\makeatother
\begin{table}[tp]
\centering
\caption{Paired MH-minus-SIR differences on identical fresh pools, with pointwise
95\% prompt-bootstrap intervals. At $M=1$ both methods return the first proposal,
so both differences are $0$. The difference in selected $\log\rho$ describes the
selection mechanism and is not a KL improvement.}
\label{tab:gsm8k-mh-sir}
\footnotesize
\begin{tabular}{rccc}
\toprule
& \multicolumn{3}{c}{Accuracy weight $w_1$}\\
\cmidrule(l){2-4}
$M$ & 0.25 & 0.5 & 0.75\\
\midrule
\multicolumn{4}{l}{\textit{MH $-$ SIR accuracy (percentage points)}}\\
2 & 0.03 [$-$0.12, 0.17] & 0.09 [$-$0.03, 0.21] & 0.07 [$-$0.06, 0.20]\\
3 & $-$0.04 [$-$0.20, 0.12] & 0.06 [$-$0.08, 0.18] & 0.05 [$-$0.08, 0.18]\\
5 & 0.00 [$-$0.19, 0.20] & 0.03 [$-$0.12, 0.18] & $-$0.08 [$-$0.23, 0.07]\\
9 & $-$0.13 [$-$0.34, 0.08] & $-$0.09 [$-$0.27, 0.08] & 0.12 [$-$0.07, 0.30]\\
17 & 0.03 [$-$0.22, 0.28] & $-$0.01 [$-$0.20, 0.17] & $-$0.14 [$-$0.32, 0.04]\\
\midrule
\multicolumn{4}{l}{\textit{MH $-$ SIR selected $\log\rho$}}\\
2 & 0.0594 [0.0524, 0.0662] & 0.0513 [0.0445, 0.0581] & 0.0570 [0.0507, 0.0634]\\
3 & 0.0853 [0.0767, 0.0938] & 0.0782 [0.0692, 0.0870] & 0.0723 [0.0635, 0.0810]\\
5 & 0.0802 [0.0681, 0.0923] & 0.1084 [0.0972, 0.1198] & 0.0858 [0.0746, 0.0971]\\
9 & 0.0883 [0.0753, 0.1018] & 0.1155 [0.1013, 0.1296] & 0.0981 [0.0843, 0.1116]\\
17 & 0.0826 [0.0675, 0.0979] & 0.1019 [0.0850, 0.1190] & 0.0961 [0.0801, 0.1123]\\
\bottomrule
\end{tabular}
\end{table}
Fresh-pool effective sample sizes and maximum normalized weights are recorded
in \nolinkurl{pool-concentration.csv}.
The companion code provides separate CPU replay commands for the raw
selections and for a compact cache.

\paragraph{Serving cost.}
Table~\ref{tab:gsm8k-serving-cost} compares local decoding with batched MH correction on one H200. Each timed batch contains one prompt and $M$ candidates. Each weighting and budget uses 48 timed trials, comprising 16 prompts with 3 repetitions each. Both specialist policies run in FP32, and the measurements exclude model loading and warm-up. Relative to $M=1$, mean latency increases by 16--20\% at $M=3$ and 29--35\% at $M=9$. These are latency measurements for this batch configuration, not throughput estimates. Total generated tokens include every candidate, including discarded responses. Full timing results for MH, SIR, and majority voting are archived under \nolinkurl{results/output-evaluation/gsm8k/profile}.

\begin{table}[!htbp]
\centering
\small
\caption{Measured GSM8K serving cost on one H200. At $M=1$, MH returns the local decoder's single proposal. Total tokens count all generated candidates and are averaged over the timed trials. Peak allocated memory is the maximum over 48 trials in each stratum.}
\label{tab:gsm8k-serving-cost}
\begin{tabular}{rrrrrr}
\toprule
$w_1$ & $M$ & Mean latency (s) & p95 (s) & Total tokens & Peak allocated (GB)\\
\midrule
0.25 & 1 & 17.47 & 26.11 & 248.7 & 41.12\\
0.25 & 3 & 20.25 & 29.54 & 746.0 & 41.24\\
0.25 & 9 & 22.53 & 33.33 & 2209.0 & 41.57\\
0.5 & 1 & 21.11 & 31.76 & 272.8 & 41.13\\
0.5 & 3 & 25.30 & 39.23 & 831.6 & 41.25\\
0.5 & 9 & 27.99 & 39.65 & 2442.0 & 41.61\\
0.75 & 1 & 21.45 & 30.34 & 305.0 & 41.13\\
0.75 & 3 & 25.10 & 37.12 & 898.4 & 41.25\\
0.75 & 9 & 28.90 & 46.37 & 2767.6 & 41.65\\
\bottomrule
\end{tabular}
\end{table}

\section{HH-RLHF experiment}\label{app:hh-experiment}
\subsection{HH-RLHF certificate evaluation}\label{app:hh-protocol}
\paragraph{Models, data, and specialist check.}
We compose helpfulness- and harmlessness-specialized LoRA adapters of
Gemma-4-E2B-it. Both were trained by DPO ($\beta=0.1$, rank 16, 1 epoch,
10,000 preference pairs) on the same 4,000 deduplicated conversations from the
Anthropic HH-RLHF training partition. Preferences were induced separately by
the pinned Ray2333 GPT-2-large helpfulness and harmlessness reward models used
by Rewards-in-Context \citep{yang2024ric}, using their raw scalar logits. On
a separate 500-prompt confirmation split, the helpfulness specialist's mean
helpfulness score exceeded the harmlessness specialist's by 0.607 (95\% CI
[0.500, 0.713]). The harmlessness specialist's mean harmlessness score exceeded
the helpfulness specialist's by 0.629 (95\% CI [0.531, 0.728]). These reward-model
scores are proxies for response quality and safety, not human judgments.

The fixed certificate cohort contains 1,319 distinct, hash-selected conversations
from the original test partition, with exact conversation overlap excluded from
training. The chat template preserves alternating multi-turn roles and
disables Gemma thinking. Generation uses temperature 1, the full vocabulary,
FP32 policy inference, FP64 normalization and accumulation, and a 512-token cap.
For each prompt and weight, 4 replications each contain 2 independent groups
of 17 fresh responses. Thus the complete run has 10,552 groups per weight and
538,152 responses overall.

\paragraph{Certificate.}
The certificate calculation follows Appendix~\ref{app:gsm8k-protocol}, except
that it uses a separately scoped family of 48 comparisons (3 weights by 16
nonzero update budgets). With clipping cap 10 nats and $\alpha=0.05$, the common
Hoeffding--Bonferroni subtraction is 0.180384 nats. No certificate value was
clipped. Table~\ref{tab:hh-certificate} reports the two endpoints.

\begin{table}[t]
\centering
\caption{HH-RLHF GPT-2-specialist certificate results, in nats. Cumulative MH
acceptance excludes initialization.}
\label{tab:hh-certificate}
\small
\setlength{\tabcolsep}{4pt}
\begin{tabular}{ccccccccc}
\toprule
& \multicolumn{3}{c}{$M=9$} & \multicolumn{3}{c}{$M=17$} & \multicolumn{2}{c}{MH acceptance (\%)}\\
\cmidrule(lr){2-4}\cmidrule(lr){5-7}\cmidrule(l){8-9}
$w_1$ & Mean $G_M$ & Lower bound & Clipped & Mean $G_M$ & Lower bound & Clipped & $n=8$ & $n=16$\\
\midrule
0.25 & 0.228 & 0.048 & 0.000 & 0.244 & 0.063 & 0.000 & 77.19 & 75.86\\
0.50 & 0.427 & 0.247 & 0.000 & 0.460 & 0.280 & 0.000 & 71.18 & 69.27\\
0.75 & 0.321 & 0.141 & 0.000 & 0.338 & 0.158 & 0.000 & 74.71 & 73.27\\
\bottomrule
\end{tabular}
\end{table}

\subsection{HH-RLHF output-level replay and cost}\label{app:hh-output-evaluation}
\paragraph{Replay protocol.}
The replay verifies all 7,914 raw batch checksums, identities, seeds, acceptance
uniforms, and probability identities. Every response is scored by both pinned
GPT-2 reward models. If a prompt--response pair exceeds the 1,024-token GPT-2
context, we trim only the response tokens for scoring. This affects 118 responses
(0.022\% of the total), with counts recorded in the scoring artifacts. The local
decoder returns proposal 0, and MH replays the saved acceptance uniforms. SIR
metrics average over its exact selection probabilities within each pool.
The 10,000-draw paired bootstrap resamples whole prompts while retaining
all 8 groups. Intervals are descriptive pointwise 95\%
percentile intervals, separate from the simultaneous certificate bounds.

At $M=17$, MH responses are 6--10 tokens shorter on average than local decoding
(Table~\ref{tab:hh-output}), but this comparison does not establish that shorter
length causes the reward-score changes.

\begin{table}[p]
\centering
\caption{Output-level HH-RLHF results at every candidate budget. H and S are
the raw helpfulness and harmlessness reward logits, and weighted reward is
$w_1\mathrm{H}+(1-w_1)\mathrm{S}$. Length counts returned-response tokens; total
tokens include all generated candidates.}
\label{tab:hh-output}
\footnotesize
\setlength{\tabcolsep}{3.5pt}
\begin{tabular}{rrlrrrr}
\toprule
$M$ & Total tokens & Method & H & S & Weighted & Length\\
\midrule
\multicolumn{7}{l}{\textit{Helpfulness weight $w_1=0.25$}}\\
1 & 244.5 & Local decoder ($q_w$) & 1.611 & $-$0.832 & $-$0.221 & 244.5\\
\addlinespace[2pt]
2 & 488.7 & MH & 1.573 & $-$0.808 & $-$0.213 & 241.3\\
 &  & SIR & 1.581 & $-$0.815 & $-$0.216 & 242.1\\
\addlinespace[2pt]
3 & 733.8 & MH & 1.568 & $-$0.820 & $-$0.223 & 241.9\\
 &  & SIR & 1.569 & $-$0.813 & $-$0.218 & 241.4\\
\addlinespace[2pt]
5 & 1223.5 & MH & 1.562 & $-$0.806 & $-$0.214 & 240.1\\
 &  & SIR & 1.560 & $-$0.809 & $-$0.217 & 240.8\\
\addlinespace[2pt]
9 & 2202.3 & MH & 1.538 & $-$0.814 & $-$0.226 & 240.3\\
 &  & SIR & 1.549 & $-$0.808 & $-$0.218 & 240.0\\
\addlinespace[2pt]
17 & 4158.3 & MH & 1.534 & $-$0.790 & $-$0.209 & 238.5\\
 &  & SIR & 1.541 & $-$0.803 & $-$0.217 & 239.2\\
\midrule
\multicolumn{7}{l}{\textit{Helpfulness weight $w_1=0.5$}}\\
1 & 263.9 & Local decoder ($q_w$) & 1.802 & $-$1.043 & 0.379 & 263.9\\
\addlinespace[2pt]
2 & 526.2 & MH & 1.761 & $-$1.013 & 0.374 & 259.9\\
 &  & SIR & 1.765 & $-$1.017 & 0.374 & 260.6\\
\addlinespace[2pt]
3 & 790.2 & MH & 1.742 & $-$0.995 & 0.373 & 258.9\\
 &  & SIR & 1.746 & $-$1.003 & 0.371 & 259.5\\
\addlinespace[2pt]
5 & 1315.9 & MH & 1.725 & $-$0.995 & 0.365 & 257.3\\
 &  & SIR & 1.724 & $-$0.991 & 0.366 & 257.7\\
\addlinespace[2pt]
9 & 2369.2 & MH & 1.709 & $-$0.978 & 0.366 & 255.8\\
 &  & SIR & 1.707 & $-$0.979 & 0.364 & 256.5\\
\addlinespace[2pt]
17 & 4473.7 & MH & 1.690 & $-$0.953 & 0.368 & 254.5\\
 &  & SIR & 1.697 & $-$0.970 & 0.363 & 255.2\\
\midrule
\multicolumn{7}{l}{\textit{Helpfulness weight $w_1=0.75$}}\\
1 & 284.2 & Local decoder ($q_w$) & 1.989 & $-$1.231 & 1.184 & 284.2\\
\addlinespace[2pt]
2 & 567.6 & MH & 1.947 & $-$1.205 & 1.159 & 279.6\\
 &  & SIR & 1.946 & $-$1.203 & 1.159 & 280.2\\
\addlinespace[2pt]
3 & 851.0 & MH & 1.929 & $-$1.194 & 1.148 & 277.8\\
 &  & SIR & 1.927 & $-$1.193 & 1.147 & 278.4\\
\addlinespace[2pt]
5 & 1418.1 & MH & 1.894 & $-$1.175 & 1.127 & 276.5\\
 &  & SIR & 1.898 & $-$1.177 & 1.130 & 276.8\\
\addlinespace[2pt]
9 & 2552.7 & MH & 1.877 & $-$1.157 & 1.118 & 274.7\\
 &  & SIR & 1.881 & $-$1.165 & 1.120 & 275.6\\
\addlinespace[2pt]
17 & 4820.6 & MH & 1.875 & $-$1.164 & 1.115 & 274.2\\
 &  & SIR & 1.874 & $-$1.160 & 1.116 & 274.8\\
\bottomrule
\end{tabular}
\end{table}

\begin{table}[p]
\centering
\caption{Paired MH-minus-SIR differences on identical HH-RLHF pools, with
pointwise 95\% prompt-bootstrap intervals. The selected $\log\rho$ difference
is a selection diagnostic and is not a KL improvement.}
\label{tab:hh-mh-sir}
\footnotesize
\begin{tabular}{rccc}
\toprule
& \multicolumn{3}{c}{Helpfulness weight $w_1$}\\
\cmidrule(l){2-4}
$M$ & 0.25 & 0.5 & 0.75\\
\midrule
\multicolumn{4}{l}{\textit{MH $-$ SIR helpfulness reward}}\\
2 & $-$0.0073 [$-$0.0154, 0.0004] & $-$0.0044 [$-$0.0126, 0.0040] & 0.0015 [$-$0.0071, 0.0102]\\
3 & $-$0.0011 [$-$0.0116, 0.0094] & $-$0.0046 [$-$0.0144, 0.0050] & 0.0019 [$-$0.0084, 0.0122]\\
5 & 0.0011 [$-$0.0103, 0.0127] & 0.0010 [$-$0.0099, 0.0121] & $-$0.0048 [$-$0.0168, 0.0073]\\
9 & $-$0.0110 [$-$0.0235, 0.0017] & 0.0016 [$-$0.0100, 0.0135] & $-$0.0046 [$-$0.0173, 0.0079]\\
17 & $-$0.0066 [$-$0.0199, 0.0063] & $-$0.0072 [$-$0.0198, 0.0055] & 0.0010 [$-$0.0119, 0.0138]\\
\midrule
\multicolumn{4}{l}{\textit{MH $-$ SIR harmlessness reward}}\\
2 & 0.0070 [$-$0.0017, 0.0155] & 0.0048 [$-$0.0035, 0.0132] & $-$0.0025 [$-$0.0117, 0.0069]\\
3 & $-$0.0072 [$-$0.0177, 0.0037] & 0.0081 [$-$0.0022, 0.0186] & $-$0.0007 [$-$0.0118, 0.0105]\\
5 & 0.0027 [$-$0.0092, 0.0146] & $-$0.0047 [$-$0.0164, 0.0070] & 0.0023 [$-$0.0102, 0.0148]\\
9 & $-$0.0067 [$-$0.0198, 0.0062] & 0.0019 [$-$0.0107, 0.0142] & 0.0081 [$-$0.0052, 0.0213]\\
17 & 0.0128 [$-$0.0010, 0.0263] & 0.0171 [0.0045, 0.0299] & $-$0.0047 [$-$0.0181, 0.0091]\\
\midrule
\multicolumn{4}{l}{\textit{MH $-$ SIR weighted reward}}\\
2 & 0.0034 [$-$0.0021, 0.0088] & 0.0002 [$-$0.0032, 0.0037] & 0.0005 [$-$0.0047, 0.0059]\\
3 & $-$0.0057 [$-$0.0121, 0.0009] & 0.0018 [$-$0.0023, 0.0059] & 0.0013 [$-$0.0050, 0.0076]\\
5 & 0.0023 [$-$0.0050, 0.0096] & $-$0.0019 [$-$0.0067, 0.0028] & $-$0.0030 [$-$0.0104, 0.0042]\\
9 & $-$0.0078 [$-$0.0159, 0.0004] & 0.0018 [$-$0.0033, 0.0068] & $-$0.0014 [$-$0.0092, 0.0062]\\
17 & 0.0079 [$-$0.0004, 0.0162] & 0.0049 [$-$0.0003, 0.0102] & $-$0.0005 [$-$0.0084, 0.0074]\\
\midrule
\multicolumn{4}{l}{\textit{MH $-$ SIR selected $\log\rho$}}\\
2 & 0.0324 [0.0275, 0.0372] & 0.0372 [0.0315, 0.0427] & 0.0301 [0.0248, 0.0352]\\
3 & 0.0395 [0.0327, 0.0463] & 0.0467 [0.0397, 0.0537] & 0.0390 [0.0322, 0.0457]\\
5 & 0.0385 [0.0310, 0.0461] & 0.0446 [0.0359, 0.0534] & 0.0458 [0.0370, 0.0544]\\
9 & 0.0322 [0.0236, 0.0409] & 0.0453 [0.0356, 0.0552] & 0.0484 [0.0389, 0.0578]\\
17 & 0.0273 [0.0171, 0.0373] & 0.0394 [0.0283, 0.0505] & 0.0348 [0.0251, 0.0449]\\
\bottomrule
\end{tabular}
\end{table}

Fresh-pool effective sample sizes and maximum normalized weights are recorded in
\nolinkurl{pool-concentration.csv}. Token costs include EOS and all discarded
candidates. The reward-score sidecars are cryptographically tied to the raw
certificate files and pinned reward revisions.
\paragraph{Serving cost.}
Table~\ref{tab:hh-serving-cost} reports end-to-end generation and MH selection
on one H200. Each timed batch contains one prompt and $M$ responses. The 16
prompts are hash-selected, and each prompt, budget, and weight has 3 timed
repetitions after a separate warm-up, giving 432 trials in total. The measurements
include both FP32 Gemma component policies and exclude model loading and warm-up.

\begin{table}[t]
\centering
\caption{Measured HH-RLHF serving cost on one H200. Peak allocated memory is
the maximum over 48 timed trials in each stratum.}
\label{tab:hh-serving-cost}
\small
\begin{tabular}{rrrrr}
\toprule
$w_1$ & $M$ & MH mean latency (s) & p95 (s) & Peak allocated (GB)\\
\midrule
0.25 & 1 & 20.92 & 36.07 & 41.23\\
0.25 & 3 & 23.20 & 38.35 & 41.56\\
0.25 & 9 & 24.54 & 39.55 & 42.54\\
0.5 & 1 & 21.74 & 37.17 & 41.23\\
0.5 & 3 & 24.24 & 39.51 & 41.56\\
0.5 & 9 & 25.67 & 40.59 & 42.54\\
0.75 & 1 & 23.48 & 40.62 & 41.23\\
0.75 & 3 & 25.94 & 42.97 & 41.56\\
0.75 & 9 & 28.19 & 43.98 & 42.54\\
\bottomrule
\end{tabular}
\end{table}

\clearpage
\section{Trained LLM continuation-tree experiment}\label{app:llm-tree}

This experiment directly compares the consensus objectives of MH and ordinary
SIR on enumerable distributions derived from the trained GSM8K specialists.
It complements the full-response certificates in Section~\ref{subsec:llm}
by making the target and both output laws available for direct evaluation.
All quantities below refer to the restricted continuation policies defined
here, rather than the full-response policies.

\subsection{Policies, questions, and continuation trees}\label{app:llm-tree-protocol}

We reuse the Gemma-4-E2B-it accuracy and brevity DPO adapters and their shared
reference model from Appendix~\ref{app:gsm8k-protocol}, with no additional
training. A seeded permutation of the GSM8K test set selects 5 pilot
questions and 50 separate evaluation questions.
These questions are held out from specialist training but belong to the
same 1,319-question test cohort as the full-response study. The pilot checks
implementation, numerical precision, memory, and runtime. The final protocol
is frozen before evaluating the 50 questions. The original prompt template
and tokenizer are reused. No questions are selected or excluded on the basis
of reference correctness, specialist disagreement, or the observed MH--SIR gap.

Each question supplies an \emph{original} context at the normal assistant
response boundary and a \emph{midpoint} context. For the latter, we generate
one greedy reference-model response with a 1,024-token cap, remove EOS from
the response-token count, and retain the first half of the non-EOS tokens,
rounding down. We append these exact token IDs to the original context,
without decoding and re-tokenizing or adding another assistant header.
The protocol specifies the original context as a fallback for an empty
midpoint. None of the 50 evaluation questions requires this fallback.

For each context, we expand the reference model's top $K=8$ tokens at every
reachable prefix through a horizon of $T=4$ additional tokens. Ties in reference
logits are broken by ascending token ID. The same mask is applied to each
adapter separately, at temperature 1. An EOS token ends a branch only if it
is in the mask. At the horizon, the branch ends without an additional EOS
factor. Without early EOS, this gives 585 nonterminal prefixes and 4,096
leaves per context. We cache the allowed token IDs in reference order,
selected logits, full-vocabulary log normalizers, prefix token IDs, and EOS
IDs. All weights, budgets, and sensitivity checks reuse these evaluations.

\paragraph{The restricted target.}
Let $A(h)$ be the shared allowed-token set at prefix $h$. For
$i\in\{a,b\}$, write $\pi_i^{\mathrm{full}}$ for the original accuracy or
brevity policy and define its restricted conditional by
\begin{equation}\label{eq:tree-target}
 \pi_i(v\mid h)=
 \frac{\pi_i^{\mathrm{full}}(v\mid h)}
      {\sum_{u\in A(h)}\pi_i^{\mathrm{full}}(u\mid h)},
 \qquad v\in A(h),
\end{equation}
with probability $0$ outside $A(h)$. Within this appendix, $q_w$ and
$\widehat p_w$ denote the proposal and target on this tree. With accuracy
weight $w=w_1\in\{0.25,0.5,0.75\}$, define
\[
 z_w(h)=\sum_{v\in A(h)}\pi_a(v\mid h)^w\pi_b(v\mid h)^{1-w},
 \qquad
 q_w(v\mid h)=\frac{\pi_a(v\mid h)^w\pi_b(v\mid h)^{1-w}}{z_w(h)}.
\]
For each complete continuation $y$, multiply these factors along its path:
\[
 \rho_w(y)=\prod_t z_w(h_t),\qquad
 \widehat p_w(y)=\frac{q_w(y)\rho_w(y)}{Z_w},\qquad
 Z_w=\sum_y q_w(y)\rho_w(y).
\]
We independently verify $q_w(y)\rho_w(y)=\pi_a(y)^w\pi_b(y)^{1-w}$.
Masking and renormalizing the adapters changes their sequence distributions:
this target is generally different from the full-vocabulary product target
conditioned on remaining within the tree.

\subsection{Full output laws and numerical validation}\label{app:llm-tree-numerics}

The candidate budgets are $M\in\{1,2,3,5,9\}$, counting MH initialization.
We compute $H_M=q_wK^{M-1}$ by sorting the leaf weights and accumulating
prefix and suffix sums, using the recursion in
Appendix~\ref{app:scalar-law}. Ordinary SIR draws $M$ IID candidates from
$q_w$ and resamples proportionally to $\rho_w$. Its marginal law is computed
by the scalar Laplace identity
\[
 S_M(y)=M q_w(y)\rho_w(y)\int_0^\infty
 e^{-t\rho_w(y)}
 \left(\sum_v q_w(v)e^{-t\rho_w(v)}\right)^{M-1}\,dt,
\]
where the sum is over complete continuations. We evaluate this integral
adaptively in log time, after an invariant common rescaling of the weights.
Neither calculation samples an output law or constructs a dense transition
matrix. The directly computed divergences, all in nats, are
\[
 D_q=\KL(q_w\|\widehat p_w),\qquad
 D_H(M)=\KL(H_M\|\widehat p_w),\qquad
 D_S(M)=\KL(S_M\|\widehat p_w).
\]
The consensus identity gives the actual gains
$\Delta_H(M)=D_q-D_H(M)$ and $\Delta_S(M)=D_q-D_S(M)$, and the objective gap
\[
 \widetilde J_w(H_M)-\widetilde J_w(S_M)=D_S(M)-D_H(M),\qquad
 \widetilde J_w(\mu)=-w\KL(\mu\|\pi_a)-(1-w)\KL(\mu\|\pi_b).
\]
We also verify this identity by evaluating the two component KL divergences
directly. DPO training alone does not establish that this restricted-policy
consensus objective equals the original task-reward objective.

\paragraph{Precision and implementation checks.}
Model inference uses FP32 with TF32 disabled, and probability arithmetic
uses FP64. We check normalization before any numerical renormalization,
the two target constructions, the $M=1$ identities, and the KL ordering.
Diagnostic tests cover horizon 1, identical adapters, a single nonzero
policy weight, and agreement with direct MH and SIR calculations on tiny
supports, including repeated candidate draws. The pilot also compares
cached and direct model evaluation at 30 prefixes: the reference top-8
rankings agree, and the largest change in a restricted log probability is
$4.80\times10^{-5}$. This model-inference variation is separate from the
numerical precision checks below, which are conditional on the cached logits.

The nominal quadrature tolerance is $10^{-12}$. Every evaluation is repeated
at $10^{-13}$ and with extended-precision probability arithmetic. Pilot
amendments, recorded before the final run, allow quadrature termination due
to roundoff only when both its estimated $L^1$ error and its normalization
residual are below $10^{-10}$, with independent law and KL stability checks.
Across all 1,200 final context--weight--tree cases, the largest normalization
residual is $1.93\times10^{-14}$ and the largest KL change under the precision
checks is $1.91\times10^{-14}$ nats. All cases pass these checks.
The prespecified reporting resolution is $10^{-10}$ nats, and smaller
comparisons are reported as unresolved. These checks do not provide rigorous
error bounds. Enumeration removes sampling and density-estimation error from
the conditional KL calculations, while floating-point and quadrature error
remain.

\subsection{Results, certificates, and uncertainty}\label{app:llm-tree-results}

Figure~\ref{fig:llm-tree-kl} shows mean divergences over the 50 questions
for original and midpoint contexts at $w_1=0.5$. The tables below report
results at all three weights. For the common certificate $G_M$
in~\eqref{eq:certificate}, we draw 100,000 IID pools of 9 leaves from each
enumerated proposal and use pool prefixes for smaller budgets. This requires
no further model inference. We estimate $\E[G_M]$ and its Monte Carlo
standard error separately for every question, context, weight, and tree
setting. Its expectation is a theoretical lower bound on both gains.
The reported sample mean is not a certified lower confidence bound.

Arithmetic means over questions are the primary summaries. Pointwise 95\%
intervals use 10,000 bootstrap samples of whole questions,
keeping both contexts, all weights, all budgets, and all sensitivity settings
together within each resampled question. These intervals describe variation
across questions and do not include model-inference uncertainty. For the
mean certificate, the separate Monte Carlo standard error is the root sum
of squared per-question standard errors divided by 50.

\begin{figure}[t]
\centering
\includegraphics[width=\linewidth]{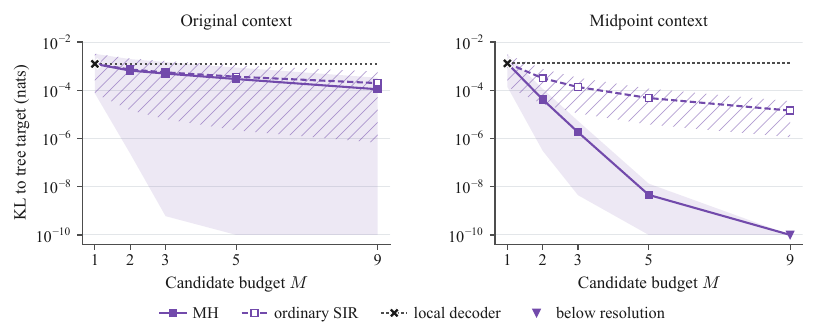}
\caption{\textbf{KL to the target on trained-policy continuation trees.}
Curves average 50 questions at accuracy weight $w_1=0.5$. Left: original
prompts. Right: midpoint contexts, taken halfway through a greedy
reference-model response. Each tree uses the shared reference top-8
tokens at each prefix and a 4-token horizon. The target is the product
of the restricted, separately renormalized specialists. Shading gives
pointwise 95\% question-bootstrap intervals, hatched for SIR.
The dotted baseline is the local decoder. Downward triangles mark means
below the $10^{-10}$-nat reporting resolution; interval endpoints below
this value are displayed at the same floor. Both panels use the same scale.}
\label{fig:llm-tree-kl}
\end{figure}

\begin{table}[!htbp]
\centering
\caption{\textbf{Direct comparisons at $M=9$ on top-8, horizon-4 trees.}
All numerical entries other than $w_1$ are in $10^{-6}$ nats and are means
over 50 questions. Parentheses give the separate Monte Carlo standard error
of the mean certificate. $D_S-D_H$ is the MH consensus advantage.
$<0.0001$ marks a mean KL below the $10^{-10}$-nat reporting resolution.}
\label{tab:llm-tree-main}
\small
\setlength{\tabcolsep}{4pt}
\begin{tabular}{lrrrrrr}
\toprule
Context & $w_1$ & $D_q$ & $D_H(9)$ & $D_S(9)$ & $D_S-D_H$ & Est. $\E[G_9]$ (SE)\\
\midrule
Original & 0.25 & 2272 & 105.7 & 271.8 & 166.1 & 1639 (8.27)\\
Original & 0.5 & 1229 & 109.7 & 197.2 & 87.49 & 843.5 (6.81)\\
Original & 0.75 & 136.5 & 0.153 & 4.752 & 4.599 & 115.8 (1.00)\\
\midrule
Midpoint & 0.25 & 884.3 & $<0.0001$ & 10.36 & 10.36 & 788.7 (0.542)\\
Midpoint & 0.5 & 1308 & $<0.0001$ & 14.33 & 14.33 & 1171 (0.834)\\
Midpoint & 0.75 & 860.2 & $<0.0001$ & 9.362 & 9.362 & 770.1 (0.479)\\
\bottomrule
\end{tabular}
\end{table}

\begin{table}[!htbp]
\centering
\caption{\textbf{Variation in the MH--SIR objective gap at $M=9$.}
Primary top-8, horizon-4 setting. The first three numerical summaries
are in $10^{-6}$ nats. Intervals are pointwise 95\% question-bootstrap
intervals for the mean; SD describes per-question variation. The last column
counts absolute per-question gaps no larger than $10^{-10}$ nats.}
\label{tab:llm-tree-variation}
\small
\begin{tabular}{lrrrrr}
\toprule
Context & $w_1$ & Mean gap [95\% CI] & Question SD & Median gap & Unresolved / 50\\
\midrule
Original & 0.25 & 166.1 [0.748, 493.7] & 1149 & 0.0764 & 5\\
Original & 0.5 & 87.49 [0.656, 259.3] & 603.3 & 0.0779 & 5\\
Original & 0.75 & 4.599 [0.296, 12.71] & 27.67 & 0.0284 & 8\\
\midrule
Midpoint & 0.25 & 10.36 [2.049, 21.24] & 35.60 & 0.00150 & 17\\
Midpoint & 0.5 & 14.33 [1.128, 35.09] & 65.42 & 0.00339 & 15\\
Midpoint & 0.75 & 9.362 [0.623, 24.48] & 47.76 & 0.00259 & 18\\
\bottomrule
\end{tabular}
\end{table}

At $M=9$, all six primary mean objective gaps have positive pointwise
bootstrap intervals (Tables~\ref{tab:llm-tree-main} and
\ref{tab:llm-tree-variation}). The absolute advantage is small, and question
heterogeneity is substantial: for original contexts at $w_1=0.25$ and $0.5$,
the largest single-question gap contributes about 98\% of the total gap.
The original-context median gaps are only $2.84\times10^{-8}$ to
$7.79\times10^{-8}$ nats. In midpoint contexts, mean MH KL at $M=9$ is
below the $10^{-10}$-nat reporting resolution at all three weights. This
does not establish exact convergence or an unbounded relative advantage.
The estimated mean certificate is approximately 75--90\% of mean MH gain
at this budget.

\Needspace{12\baselineskip}
\subsection{Horizon and token-mask sensitivity}\label{app:llm-tree-sensitivity}

The three prespecified sensitivity checks use the same 50 questions, two
contexts, three weights, and five budgets. We evaluate top-8 trees at
horizons 2 and 3, and top-4 trees at horizon 4, using only cached
logits. For a shorter horizon we stop at that depth and recompute the target
normalization and output laws. The resulting target generally differs from
the corresponding marginal of the horizon-4 target. For the top-4
check, we keep the first 4 tokens in reference order at each reachable
prefix and renormalize each adapter again. Each setting contributes 300
context--weight cases, for 1,200 cases including the primary setting.

\begin{table}[!htbp]
\centering
\caption{\textbf{Horizon and mask sensitivity at $M=9$.}
Mean MH consensus advantage $D_S-D_H$, in $10^{-6}$ nats.
All 24 pointwise 95\% intervals for these means exclude $0$; the complete
intervals and results at every budget are in the companion tables.
These settings define different restricted targets.}
\label{tab:llm-tree-sensitivity}
\small
\begin{tabular}{lrrrrr}
\toprule
& & Primary & \multicolumn{3}{c}{Sensitivity checks}\\
\cmidrule(lr){3-3}\cmidrule(lr){4-6}
Context & $w_1$ & $K=8,T=4$ & $K=8,T=2$ & $K=8,T=3$ & $K=4,T=4$\\
\midrule
Original & 0.25 & 166.1 & 2.405 & 165.4 & 166.1\\
Original & 0.5 & 87.49 & 1.152 & 86.74 & 87.47\\
Original & 0.75 & 4.599 & 0.280 & 4.284 & 4.596\\
\midrule
Midpoint & 0.25 & 10.36 & 0.647 & 5.904 & 7.753\\
Midpoint & 0.5 & 14.33 & 1.274 & 10.57 & 9.319\\
Midpoint & 0.75 & 9.362 & 0.666 & 7.550 & 6.576\\
\bottomrule
\end{tabular}
\end{table}

The positive mean MH advantage persists in every sensitivity setting at
$M=9$ (Table~\ref{tab:llm-tree-sensitivity}). Many individual gaps remain
unresolved: depending on context and weight, 25--33 of the 50 gaps are
unresolved at horizon 2, 11--23 at horizon 3, and 6--19 with the
top-4 mask at horizon 4. This sweep from 2 to 4 tokens is a
sensitivity check, not a test of the long-response asymptotic theorem.

\paragraph{Probability mass retained by the mask.}
For adapter $i$, we record the original full-vocabulary mass
$m_i(h)=\sum_{v\in A(h)}\pi_i^{\mathrm{full}}(v\mid h)$.
Table~\ref{tab:llm-tree-mask} reports its proposal-weighted mean at each
depth for $w_1=0.5$, conditional on reaching that depth, followed by an
arithmetic mean over questions. Per-prefix minimum, quartiles, maximum,
and proposal-weighted means at every weight are in the released evaluation
records. High proposal-weighted retention need not hold at rare prefixes:
the smallest recorded brevity mass on an original-context top-8 prefix
is $3.61\times10^{-6}$. Retained mass describes the restriction but does not
establish equivalence to unrestricted generation.

\begin{table}[!htbp]
\centering
\caption{\textbf{Retained full-vocabulary mass by depth.}
Question means of proposal-weighted retained mass, in percent, at
$w_1=0.5$ and horizon 4. Depth 0 is the continuation root. Values
rounding to 100.000\% do not imply that every prefix retains all mass.}
\label{tab:llm-tree-mask}
\small
\begin{tabular}{llrrrrr}
\toprule
Context & Adapter & $K$ & Depth 0 & Depth 1 & Depth 2 & Depth 3\\
\midrule
Original & Accuracy & 8 & 100.000 & 100.000 & 100.000 & 100.000\\
Original & Brevity & 8 & 99.998 & 100.000 & 99.999 & 100.000\\
Original & Accuracy & 4 & 100.000 & 100.000 & 99.966 & 99.999\\
Original & Brevity & 4 & 99.994 & 99.998 & 99.920 & 99.998\\
\midrule
Midpoint & Accuracy & 8 & 99.756 & 99.914 & 99.876 & 99.926\\
Midpoint & Brevity & 8 & 99.494 & 99.866 & 99.784 & 99.872\\
Midpoint & Accuracy & 4 & 99.258 & 99.700 & 99.681 & 99.805\\
Midpoint & Brevity & 4 & 98.746 & 98.331 & 99.003 & 99.696\\
\bottomrule
\end{tabular}
\end{table}

\Needspace{11\baselineskip}
\subsection{Compute and reproduction}\label{app:llm-tree-reproduction}

The final run constructs 100 context trees and evaluates 58,415 nonterminal
prefixes per model, with 11,400 GPU forward calls across the reference and
two adapters. We reuse each context's KV cache, batch up to 16 prefixes at
the same depth, and switch adapters on the shared base model. On one H200,
tree evaluation takes 553 seconds and greedy reference generation takes
578 seconds, with peak allocated memory of 21.1 GB. These are offline tree
construction costs, not sampler serving latencies. The numerical laws,
certificates, bootstrap summaries, and all sensitivity checks run on CPU.

The companion suite includes the frozen configuration, question and context
token IDs, checkpoint hashes, seeds, cached trees, numerical validation
records, per-question results, and plotting code. Results are archived under
\nolinkurl{results/llm-tree}. The \texttt{llm-tree-analysis} job recomputes
numerical results from cached model evaluations, and
\texttt{llm-tree-presentation} rebuilds the summaries and two archived plots.
The manuscript figure uses the same summaries. The accompanying instructions
explain how to move saved numerical outputs aside to force fresh computation.
A fresh CPU replay in a separate directory reproduced all 1,320 pilot and
final cases and their summaries, with no differences among 176,477 compared
numerical fields. This is a same-operator reproducibility check, not an
independent replication.

The study measures direct consensus-objective differences on short,
restricted policies and illustrates the ordering in
Theorem~\ref{thm:dominance}. The small absolute gaps do not establish a large
practical advantage, improved task rewards, or the unobserved MH--SIR
objective gap on full GSM8K responses.

\end{document}